\documentclass{article}

\usepackage[preprint]{neurips_2025}
\usepackage{algorithm}
\usepackage{algorithmic}
\usepackage{amssymb}
\usepackage{amsfonts}
\usepackage{amsmath}
\usepackage{amsthm}
\usepackage{graphicx} 



\usepackage{amsthm, amsmath, amssymb, bbm, mathabx}
\allowdisplaybreaks 
\usepackage{graphicx}
\usepackage{enumerate}
\usepackage{pifont}
\usepackage{booktabs}
\usepackage{multirow}
\usepackage{multicol}
\usepackage{stfloats}
\usepackage{xspace}
\usepackage{thmtools}
\usepackage{thm-restate}
\usepackage{hyperref}
\usepackage{cleveref}
\usepackage{anyfontsize} 
\usepackage{makecell}
\usepackage{hhline}
\usepackage{colortbl}
\usepackage{xpatch}
\usepackage{scalerel,stackengine}
\usepackage{sidecap}
\usepackage[normalem]{ulem}
\usepackage{tcolorbox}


\makeatletter
\def\@fnsymbol#1{\ensuremath{\ifcase#1\or *\or \dagger\or \ddagger\or
  \mathsection\or \mathparagraph\or \|\or \diamond \or **\or \dagger\dagger
  \or \ddagger\ddagger \else\@ctrerr\fi}}
\makeatother

\makeatletter
\newcommand{\printfnsymbol}[1]{%
  \textsuperscript{\@fnsymbol{#1}}%
}
\makeatother


\newtheorem{theorem}{Theorem}
\newtheorem{lemma}{Lemma}
\newtheorem{corollary}[theorem]{Corollary}

\newtheorem{definition}{Definition}

\crefname{condition}{Condition}{Conditions}

\newtheorem{assumption}[theorem]{Assumption}
\crefname{assumption}{Assumption}{Assumptions}

\theoremstyle{definition}
\newtheorem{remark}{Remark}


\newcommand{\abs}[1]{\left| #1 \right|}



\newcommand\mygei{\mathrel{\stackrel{\makebox[0pt]{\mbox{\normalfont\tiny (i)}}}{\ge}}}


\newcommand{\wt}[1]{\widetilde{#1}}


\newcommand{\cA}{\mathcal{A}}

\newcommand{\cE}{\mathcal{E}}
\newcommand{\cF}{\mathcal{F}}

\newcommand{\cI}{\mathcal{I}}

\newcommand{\cM}{\mathcal{M}}

\newcommand{\cS}{\mathcal{S}}

\newcommand{\cX}{\mathcal{X}}

\newcommand{\cZ}{\mathcal{Z}}



\newcommand{\bbE}{\mathbb{E}}

\newcommand{\bbP}{\mathbb{P}}
\newcommand{\bbQ}{\mathbb{Q}}



\newcommand{\ee}{\textup{e}}

\newcommand{\kl}{\mathsf{kl}}

\newcommand{\poly}{\mathsf{poly}}

\newcommand{\Roma}[1]{\uppercase\expandafter{\romannumeral#1}}

\newcommand{\SA}{\cS \times \cA}





\newcommand{\red}[1]{\textcolor{red!80}{#1}}
\newcommand{\blue}[1]{\textcolor{blue!80}{#1}}



\usepackage[colorinlistoftodos, textwidth=18mm]{todonotes}

\def\shownotes{1}
\ifnum\shownotes=0
\newcommand{\todorz}[1]{}
\newcommand{\todorzout}[1]{}
\newcommand{\todossdout}[1]{}
\newcommand{\todossd}[1]{}
\newcommand{\todoruizhe}[1]{}
\else
\newcommand{\todorz}[1]{\todo[color=blue!10, inline]{\small RZ: #1}}
\newcommand{\todorzout}[1]{\todo[color=blue!10]{\scriptsize RZ: #1}}
\newcommand{\todossdout}[1]{\todo[color=red!10]{\scriptsize SSD: #1}}
\newcommand{\todossd}[1]{\todo[color=red!10, inline]{\small SSD: #1}}

\newcommand{\todoruizhe}[1]{\todo[color=purple!10, inline]{\small Ruizhe: #1}}

\fi

\title{Sharp Gap-Dependent Variance-Aware Regret Bounds for Tabular MDPs}
\author{
    Shulun Chen\thanks{Work done while Shulun Chen was visiting the University of Washington.}\\
    Tsinghua University\\
    \texttt{chensl22@mails.tsinghua.edu.cn}
    \AND
    Runlong Zhou\\
    University of Washington\\
    \texttt{vectorzh@cs.washington.edu}\\
    \And
    Zihan Zhang\\
    University of Washington\\
    \texttt{zihanz46@uw.edu}\\
    \AND
    \hspace{2em}Maryam Fazel\\
    \hspace{2em}University of Washington\\
    \hspace{2em}\texttt{mfazel@uw.edu}\\
    \And
    \hspace{1em}Simon S. Du\\
    \hspace{1em}University of Washington\\
    \hspace{1em}\texttt{ssdu@cs.washington.edu}\\
}

\newcommand{\State}{\mathcal{S}}
\newcommand{\Action}{\mathcal{A}}
\newcommand{\gapmin}{\Delta_{\mathrm{min}}}

\newcommand{\Var}{\mathtt{Var}\xspace}
\newcommand{\Varmax}{\Var_{\max}}
\newcommand{\cVarmax}{\Var_{\max}^{\textup{c}}}
\newcommand{\Regret}{\mathrm{Regret}}

\newcommand{\Zopt}{\mathcal{Z}_{\mathrm{opt}}}
\newcommand{\Zsub}{\mathcal{Z}_{\mathrm{sub}}}
\newcommand{\clip}[2]{\operatorname{clip}\left[#1|#2\right]}

\begin{document}

\maketitle

\begin{abstract}
We consider the gap-dependent regret bounds for episodic MDPs.
We show that the Monotonic Value Propagation (MVP) algorithm (\cite{zhang2024settling}) achieves a variance-aware gap-dependent regret bound of 
$$\tilde{O}\left(\left(\sum_{\Delta_h(s,a)>0} \frac{H^2 \log K \land \cVarmax}{\Delta_h(s,a)} +\sum_{\Delta_h(s,a)=0}\frac{ H^2 \land \cVarmax}{\Delta_{\mathrm{min}}} + SAH^4 (S \lor H) \right) \log K\right),$$ 
where $H$ is the planning horizon, $S$ is the number of states, $A$ is the number of actions, and $K$ is the number of episodes. Here, $\Delta_h(s,a) =V_h^* (a) - Q_h^* (s, a)$ represents the suboptimality gap and $\Delta_{\mathrm{min}} := \min_{\Delta_h (s,a) > 0} \Delta_h(s,a)$.
The term $\cVarmax$ denotes the maximum conditional total variance, calculated as the maximum over all $(\pi, h, s)$ tuples of the expected total variance under policy $\pi$ conditioned on trajectories visiting state $s$ at step $h$. 
$\cVarmax$ characterizes the maximum randomness encountered when learning any $(h, s)$ pair.
Our result stems from a novel analysis of the weighted sum of the suboptimality gap and can be potentially adapted for other algorithms.
To complement the study, we establish a lower bound of 
$$\Omega \left( \sum_{\Delta_h(s,a)>0} \frac{H^2 \land \cVarmax}{\Delta_h(s,a)}\cdot \log K\right),$$ 
demonstrating the necessity of dependence on $\cVarmax$ even when the maximum unconditional total variance (without conditioning on $(h, s)$) approaches zero.
\end{abstract}

\section{Introduction} \label{sec:intro}

Reinforcement learning (RL, \citet{sutton1998reinforcement}) is an interactive decision-making problem where an agent gains information from an unknown environment through taking actions, with the goal of maximizing the total reward.
RL has a wide range of applications, such as robotics and control \citep{lillicrap2015continuous}, games \citep{silver2016mastering}, finance \citep{nevmyvaka2006reinforcement}, healthcare \citep{liu2017deep}, and recommendation systems \citep{chen2019top}.

The most canonical setting in RL is episodic learning in tabular Markov decision processes (MDPs), where the agent interacts with the MDP for $K$ episodes, each episode allowing exactly $H$ steps taken.
Under this setting, we choose \emph{cumulative regret} as the performance criteria, which should scale sublinearly with $K$ to indicate that the agent is making progress by shortening the performance difference between the policy $\pi^k$ played in episode $k$ and the optimal policy $\pi^*$.
Most work \citep{azar2017minimax,jin2018q,dann2019policy,zhang2020almost,zhang2021reinforcement} in this topic focused on \emph{minimax regret} that is the worst-case guarantee for the algorithms over all the MDPs.
Typically, these minimax regret bounds have main order terms scaling with $\sqrt{K}$.

The MDPs in practice often enjoy benign structures, so the above-mentioned algorithms may perform far better than their worst-case guarantees.
Consequently, \emph{problem-dependent} regret bounds are of great interest.
\emph{Variance-dependent} regret bounds \citep{talebi2018variance,zanette2019tighter,zhou2023sharp,zhang2024settling} are informative when the MDP is near-deterministic.
This type of regret bounds have main order terms scaling with $\sqrt{\Var \cdot K}$ where $\Var$ is a symbol for some variance quantity (might be different across different works).
For deterministic MDPs and MDPs such that $V_h^* (s) = V_h^* (s')$ for any $h, s, s'$, $\Var = 0$.

Meanwhile, \emph{gap-dependent} regret bounds \citep{simchowitz2019non,yang2021q,dann2021beyond,xu2021fine,zheng2024gap} are especially favored when for every $h, s$, the optimal value $V_h^* (s)$ is better than other suboptimal values $Q_h^* (s, a)$ by a margin.
Formally, let $\Delta_h (s, a) := V_h^* (s) - Q_h^* (s, a)$ and $\gapmin := \min \{ \Delta_h (s, a) \mid (h, s, a) \in [H] \times \cS \times \cA, \Delta_h (s, a) > 0 \}$, then a typical gap-dependent regret bound is
\begin{align}
    \wt{O} \left( \left( \sum_{(h, s, a) \in \Zsub} \frac{1}{\Delta_h (s, a)} + \frac{\abs{\Zopt}}{\gapmin} + \poly (H, S, A) \right) \blue{\poly (H)} \cdot \red{\log K} \right), \label{eq:typical_gap}
\end{align}
where $\Zsub$ is the set of all suboptimal $(h, s, a)$ tuples, $\Zopt$\footnote{\citet{xu2021fine} used a more fine-grained notion named $\cZ_{\textup{mul}}$ instead.} is the set of all optimal $(h, s, a)$ tuples, and $\wt{O}$ hides $\poly \log (S, A, H, 1 / \gapmin, 1 / \delta)$ terms.
When $K$ is large enough, gap-dependent regrets grow much slower than minimax and variance-dependent (when $\Var > 0$) regrets.

A natural yet fundamental question about problem-dependent regrets is:
\begin{center}
    \textbf{\emph{What is the tightest problem-dependent regret while considering both variance and gap?}}
\end{center}
If such a regret outperforms variance-only-dependent and gap-only-dependent regrets \emph{asymptotically} (as $T \to \infty$) while also being nearly minimax optimal, it is actually \textbf{\emph{best-of-three-worlds}}!

To address the above problem, there are two factors that can be improved in previous gap-dependent regrets.
First is the dependence on variance quantities.
Only \citet{simchowitz2019non,zheng2024gap} contain variance-dependent terms in their gap-dependent regrets, while their variance quantities are defined as the \emph{maximum per-step} variance, $\bbQ^* \le H^2$.
This quantity is first defined in \citet{zanette2019tighter}, and all of them use $H \bbQ^*$ as an \emph{almost-sure upper bound} on variances.
This upper bound can be substantially larger than an \emph{expected total} variance (such as Definitions 5 and 6 in \citet{zhou2023sharp}).
From this side, a tighter dependence on an expected total variance can improve the regret.

Second is the dependence on $H$.
Specifically, when compared under the \emph{time-inhomogeneous} setting, the $\blue{\poly (H)}$ factors in \Cref{eq:typical_gap} are $H^3$, $H^6$, $H^5$, and $H^5$ in \citet{simchowitz2019non,yang2021q,xu2021fine,zheng2024gap}, respectively.
\citet{simchowitz2019non} provides a lower bound of $\Omega \left(\sum_{s, a} H^2 / \Delta_1 (s, a)\right)$, which indicates the chance of shaving out extra $H$ dependence.

\paragraph{Our contributions.}

We analyze the gap-dependent regret of the Monotonic Value Propagation (MVP, \citet{zhang2024settling} version) algorithm, which is a model-based algorithm already proven to be near-optimal in the sense of minimax and variance-only-dependent regrets.
After careful analysis, we show that the gap-dependent regret depends on a variance quantity $\cVarmax \le H\bbQ^*$, and the worst-case dependency on $H$ is $H^2$.
\textbf{\emph{We improve the above-mentioned two factors simultaneously.}}
Formally, with probability at least $1 - \delta$, the regret in $K$ episodes by MVP is bounded as
\begin{align}
   \wt{O} \left( \left(\sum_{(h, s, a) \in \Zsub} \frac{H^2\log K \land \cVarmax}{\Delta_h(s,a)}+\frac{(H^2 \land \cVarmax ) |\Zopt|}{\gapmin}+SAH^4 (S \lor H) \right) \log K \right).\label{eq:ub}
\end{align}
To the best of our knowledge, we are the first to incorporate a \emph{tighter} variance quantity into gap-dependent regrets, and the worst-case dependency of $H^2$ in gap-dependent terms is also the state-of-the-art (see \Cref{tab:comparison}).

To complement our upper bound, we provide a lower bound (see Theorem~\ref{thm:lb}) of 
\begin{align}
\Omega\left(\sum_{(h,s,a)\in \Zsub} \frac{H^2 \land \cVarmax}{\Delta_h(s,a)}\cdot \log K\right).\nonumber
\end{align}

With this lower bound, we show that the first term in the upper bound \eqref{eq:ub} is tight (modulo log terms). This implies that (i) It is necessary to introduce the conditional total variance (see Definition~\ref{def:varmax}) to derive a variance-aware gap-dependent bound. In comparison, the unconditional total variance (see Definition~\ref{def:unvarmax}) is sufficient for variance-aware minimax bounds (e.g., \cite{zhou2023sharp}); (ii) When the first term in \eqref{eq:ub} dominates, the order of $H$ cannot be improved.

\paragraph{Technical novelty.}

We propose a new variance metric to describe the upper bound of regret in gap-dependent MDPs. Our version of variance metric considers the conditional total variance to allow for some states with small visiting probability to accumulate a large regret over the whole training progress.

To derive a tighter regret bound using our new metric, we utilize a novel analysis which reweighs the suboptimality gaps.
Our approach does not require the clipping and recursion method in \cite{simchowitz2019non} for the main bound; instead, we directly prove that a certain weight sum over all suboptimality gaps times the visitation counts is bounded by a lower-order term of visitation counts, and establish a congregated upper bound of all visitation counts. We believe our approach is novel and reveals fundamental facts about suboptimality gaps.

We also propose a more refined version of clipping for optimal actions. Our version of clipping utilizes the new conditional variance metric while also providing an $O(H^2)$ worst case bound for $\Delta_{\mathrm{min}}$-dependent terms.

Finally, we prove that the $\Delta_h (s, a)$ terms in our upper bound match the lower bound modulo $\log$ factors. The construction is based on a reduction to Bernoulli bandits. A key insight is that low-frequency states, though often neglected in deriving minimax regret bounds, can still contribute substantially to regret in gap-dependent bounds.


\paragraph{Paper overview.}
In \Cref{sec:rel}, we introduce previous research about gap-dependent regret bound.
In \Cref{sec:pre}, we list the basic concepts of MDPs and define the conditional variance.
In \Cref{sec:main_results}, we describe the MVP algorithm and provide a proof sketch of the gap-dependent regret upper bound.
We conclude our paper in \Cref{sec:lb} with a matching lower bound.

\section{Related works}\label{sec:rel}

\paragraph{Gap-dependent regrets and sample complexities.}
Research on gap-dependent regrets originates from multi-armed bandits, which are special MDPs with $H = S = 1$.
\citet{auer2002finite} showed a $\sum_{a \in \cZ_{\textup{sub}}} \log K / \Delta (a)$ type regret when running an UCB algorithm on MABs.
\citet{bubeck2012regret} proposed algorithms achieving a $\sum_{a \in \cZ_{\textup{sub}}} (\Delta (a) + \log (1 / \varepsilon) / \Delta (a))$ \emph{bounded} regret given knowledge of the maximum reward $\max_a r (a)$ as well as a lower bound $\epsilon > 0$ of $\Delta$.

Aside from the works studying finite-horizon tabular MDPs mentioned in \Cref{sec:intro}, there is a line of work under the setting of gap-dependent regrets for infinite-horizon tabular MDPs \citep{auer2006logarithmic,tewari2007optimistic,auer2008near,ok2018exploration}, while in these works, the gaps are usually defined as the difference between policies instead of actions.
Recently, gap-dependent regrets have been studied for risk-sensitive RL \citep{fei2022cascaded}, linear/general function classes \citep{he2021logarithmic,papini2021reinforcement,velegkas2022reinforcement}, and Markov games \citep{dou2022gap}.

Gap-dependent sample complexities under online \citep{jonsson2020planning,marjani2020best,al2021navigating,wagenmaker2022beyond,tirinzoni2022near,wagenmaker2022instance,tirinzoni2023optimistic} and offline \citep{wang2022gap,nguyen2023instance} RL setting are also widely studied.

\paragraph{Minimax optimal regrets.}
Under the setting of time-inhomogeneous MDPs, algorithms achieving a high-probability regret upper bound of $\wt{O} (\sqrt{H^3 S A K})$ are \emph{(nearly) minimax optimal}.
There have been many works with this guarantee while optimizing the lower order terms: \citet{azar2017minimax,osband2017posterior,zanette2019tighter,simchowitz2019non,zhang2019regret,zhang2020almost,zhang2021reinforcement,menard2021ucb,li2021breaking,xiong2022near,zhou2023sharp,zhang2024settling}.
Notably, \citet{zhang2024settling} derived the tightest $\wt{O} (\sqrt{H^3 S A K} \land H K)$ regret up to logarithm factors.

\paragraph{Variance-dependent regrets.}
\citet{talebi2018variance} studied variance-dependent regrets for infinite horizon learning under strong assumptions on ergodicity of the MDPs.
\citet{zanette2019tighter} defined and incorporated the maximum per-step conditional variance, $\bbQ^*$, and first proved a $\wt{O} (\sqrt{H \bbQ^* \cdot S A K})$ regret for the finite-horizon setting.
\citet{zhou2023sharp,zhang2024settling} proved regrets depending on expected total variances (see our \Cref{def:varmax} for one of their quantities) that are more fine-grained than the coarse $H \bbQ^*$ upper bound.
Variance-dependent regrets have also been studied for bandits \citep{zhang2021improved,zhou2021nearly,kim2022improved,dai2022variance}.

\paragraph{Other problem-dependent regrets.}
Under infinite-horizon setting, \citet{bartlett2012regal,fruit2018efficient} studied regrets depending on the span of the optimal value function.
There are works studying first-order regrets, whose main order terms depend on value functions: \citet{jin2020reward,wagenmaker2022first,huang2023tackling}.

\colorlet{shadecolor}{gray!40}
\begin{table*}[t] 
    \centering
    \small
    \resizebox{0.99\linewidth}{!}{%
        \renewcommand{\arraystretch}{1.5}
        \begin{tabular}{|c|c|c|c|}
            \hline 
            \textbf{Algorithm} & \textbf{Gap-dependent Regret} & \scriptsize \makecell{\textbf{Variance-} \\ \textbf{dependent}} & \textbf{Minimax Optimal}\\
            
            \hhline{|=|=|=|=|}
            \multirow{2}{*}{\makecell{StrongEuler \\ \citep{simchowitz2019non}}} & $\wt{O} ( ((\sum_{h, s, a} H \bbQ^* / (\Delta_h(s,a) \lor \gapmin) + SAH^4 (S \lor H)) \cdot \log K )$ & Yes ($H \bbQ^*$) & \multirow{2}{*}{\makecell{\textbf{Yes} ($\wt{O} (\sqrt{H^3 S A K})$)}} \\
            
            \cline{2-3}
            & $\wt{O} ( (\sum_{(h, s, a) \in \Zsub} H^3 / \Delta_h(s,a) + |\Zopt| H^3 / \gapmin+SAH^4 (S \lor H) ) \log K )$ & No & \\

            \hline
            \makecell{Q-learning (UCB-H) \\ \citep{yang2021q}} & $\wt{O} (H^6 S A / \gapmin \cdot \log K)$ & No & \makecell{No ($\wt{O} (\sqrt{H^5 S A K})$) \\ \citep{jin2018q}} \\

            \hline
            \makecell{AMB \\ \citep{xu2021fine}} & $\wt{O} ( (\sum_{(h, s, a) \in \Zsub} H^5  / \Delta_h(s,a) + |\cZ_{\textup{mul}}| H^5 / \gapmin ) \log K + S A H^2 )$  & No & Not Provided \\

            \hline
            \makecell{UCB-Advantage \\ \citep{zheng2024gap}} & $\wt{O} ((H \bbQ^* + H) H^2 S A / \gapmin \cdot \log K + S^2 A H^9 \cdot \log^2 K)$  & Yes ($H \bbQ^*$)  & \makecell{\textbf{Yes} ($\wt{O} (\sqrt{H^3 S A K})$) \\ \citep{zhang2020almost}} \\

            \hline
            \makecell{Q-EarlySettled-Advantage \\ \citep{zheng2024gap}} & $\wt{O} ((H \bbQ^* + H^2) H^2 S A / \gapmin \cdot \log K + S A H^7 \cdot \log^2 K) $  & Yes ($H \bbQ^*$) & \makecell{\textbf{Yes} ($\wt{O} (\sqrt{H^3 S A K})$) \\ \citep{li2021breaking}} \\

            \hline
            \rowcolor{shadecolor} \Gape[0pt]
            [2pt]{\makecell{MVP \\ This work}} & $\wt{O} ( (\sum_{(h, s, a) \in \Zsub} (H^2\log K \land \cVarmax) / \Delta_h(s,a) + |\Zopt| (H^2 \land \cVarmax) / \gapmin+SAH^4 (S \lor H) ) \log K )$ & \textbf{Yes} ($H^2 \land \cVarmax$) & \makecell{\textbf{Yes} ($\wt{O} (\sqrt{H^3 S A K}) $) \\ \citep{zhang2024settling}} \\

            \hhline{|=|=|=|=|}
            \rowcolor{shadecolor} \Gape[0pt]
            [2pt]{\makecell{Lower Bound \\ This work}} & $\wt{\Omega} ( (\sum_{(h, s, a) \in \Zsub} (H^2 \land \cVarmax) / \Delta_h(s,a) \cdot \log K )$ & - & - \\
            \hline
        \end{tabular}
    }
    \caption{Comparison between different algorithms and their gap-dependent regrets for \emph{time-inhomogeneous} MDPs.
    The result in \citet{simchowitz2019non} is scaled accordingly as it originally studied \emph{time-homogeneous} MDPs.
    \textbf{Variance-dependence:} whether the gap-dependent regret is also variance-dependent.
    $\cVarmax \le H\bbQ^*$, so dependence on $H^2 \land \cVarmax$ is tighter.
    \textbf{Minimax Optimal:} whether the analyzed algorithm achieves a $\wt{O} (\sqrt{H^3 S A K})$ (main order) minimax regret.
    \citet{xu2021fine} did not provide such a guarantee.}
    \label{tab:comparison}
\end{table*}

\section{Preliminaries}\label{sec:pre}

\paragraph{Notations.}

For any event $\cE$, let $\mathbf 1\{\cE\}$ be the indicator function of $\cE$.
For any set $\cX$, we use $\Delta^\cX$ to denote the probability simplex over $\cX$.
For any positive integer $n$, we denote $[n] := \{1, 2, \ldots, n\}$.
$\tilde{O}, \tilde{\Omega}, \lesssim$ hide $\poly \log (S, A, H, 1 / \gapmin, 1 / \delta)$ factors.

\paragraph{Finite-horizon MDPs and trajectories.}
A finite-horizon MDP is described by a tuple $M = (\cS, \cA, H, P, R, \mu)$.
$\cS$ is the finite state space with size $S$ and $\cA$ is the finite action space with size $A$.
$H$ is the planning horizon.
For any $(s, a, h) \in \cS \times \cA \times [H]$, $P_{s,a,h}\in\Delta^\cS$ is the transition function and $R_{s,a,h}\in\Delta^{[0,H]}$ is the reward distribution with mean $r_h: \SA \to [0, H]$.
$\mu \in \Delta^\cS$ is the initial state distribution.
A trajectory $\{s_1,a_1,r_1',s_2,a_2,r_2',\cdots,s_H,a_H,r_H'\}$ is sampled with $s_1 \sim \mu, s_{h+1}\sim P_{s_h, a_h,h},r_h'\sim R_{s_h,a_h,h}$ where $a_h$ can be chosen arbitrarily.

Unlike most common settings, we relax the standard assumption that $R_{s, a, h} \in \Delta^{[0, 1]}$ (uniformly bounded reward) and instead assume a bounded total reward setting (\Cref{asp:bounded_total_reward}).
Problems under this setting can contain a spike in reward and are therefore harder than standard problems.

\begin{assumption} [Bounded total reward] \label{asp:bounded_total_reward}
 We assume that $\sum_{h=1}^H r_h' \le H$ for any possible trajectory.
\end{assumption}

\paragraph{Policies.}
A history-independent deterministic policy $\pi$ chooses an action based on the current state and time step.
Formally, $\pi = \{ \pi_h \}_{h \in [H]}$ where $\pi_h : \cS \to \cA$ maps a state to an action.
Any trajectory sampled by $\pi$ satisfies $a_h = \pi_h (s_h)$.
For any random variable $X$ related to a trajectory, we denote $\mathbb E^\pi[X]$ and $\mathbb V^\pi[X]$ as the expectation and variance of $X$ when the trajectory is sampled under $\pi$.

\paragraph{Value functions and $Q$-functions.}
Given $\pi$, we define its value function and $Q$-function as
\begin{align*}
    V_h^\pi (s) := \bbE^\pi \left[\left. \sum_{t = h}^H r_t\ \right|\ s_h = s \right], \quad
    Q_h^\pi (s, a) := \bbE^\pi \left[\left. \sum_{t = h}^H r_t\ \right|\ (s_h, a_h) = (s, a) \right].
\end{align*}
It is easy to verify that $Q_h^\pi (s, a) = r_h (s, a) + P_{s, a, h} V_{h + 1}^\pi$.
We define $V_0^\pi := \mathbb E^{s\sim\mu}[V_1^\pi(s)]$ as the expected total reward when executing policy $\pi$.

\paragraph{Learning objective.}
Episodic RL on MDPs proceeds for a total of $K$ episodes.
At the beginning of episode $k$, the learner chooses a policy $\pi^k$ and uses it to sample a trajectory.

We aim to maximize $V_0^\pi$.
Using dynamic programming, we can find a policy $\pi^*$ maximizing all $Q_h^\pi (s, a)$ simultaneously, and we denote $V^* := V^{\pi^*}, Q^* := Q^{\pi^*}$.

Performance is evaluated by the cumulative regret:
\[\Regret (K) := \sum_{k=1}^K \left(V_0^*-V_0^{\pi^k}\right).\]

\paragraph{Gap quantities.}
The suboptimality gap is defined as follows:
\[\Delta_h(s,a) :=V_h^*(s)-Q_h^*(s,a).\]
The sets of optimal and suboptimal actions are defined as \[\Zopt=\{(s,a,h)\in\mathcal S\times\mathcal A\times[H]:\Delta_h(s,a)=0\}, \quad \Zsub = \mathcal S\times\mathcal A\times[H] \backslash \Zopt.\]
The minimum gap $\gapmin=\min_{(s,a,h)\in\Zsub}\Delta_h(s,a)$ is the smallest positive gap.
WLOG, we only consider MDPs with nonempty $\Zsub$.

\paragraph{Variance quantities.}

The variance at each $(s, a, h)$ tuple \citep{zanette2019tighter,simchowitz2019non} is defined as
\[\Var_h^*(s,a) :=\mathbb V^{r\sim R_{s,a,h},s'\sim P_{s,a,h}}[r+V_{h+1}^*(s')].\]
The maximum per-step conditional variance is defined as $\bbQ^* := \max_{h, s, a} \Var_h^*(s,a)$.
Previous works including \citet{zheng2024gap} use $H \bbQ^*$ which could be as large as $H^3$ in their variance-dependent terms.

The maximum \emph{unconditional} total variance has been introduced in prior works \citep{zhou2023sharp,zhang2024settling} when studying variance-dependent regret bounds for MDPs.

\begin{definition}[Maximum unconditional total variance]\label{def:unvarmax}
\begin{align*}
\Varmax := \max_{\pi}\mathbb{E}^{\pi}\left[\sum_{h=1}^H\Var_{h}^*(s_{h},a_{h}) \right].
\end{align*}
\end{definition}

These works showed that $\Varmax \lesssim \min\{H \bbQ^*, H^2\}$ and incorporated it in the main order terms of variance-only-dependent regrets for better results.
However, as we will discuss in \Cref{thm:lb}, variance-aware gap-dependent regrets \emph{must} scale with separate variance quantities for each $(s, h)$ pair, even for those hard to visit.
Thus, the quantity should be conditioned on $(s, h)$.
We propose the following quantity as the maximum \emph{conditional} total variance:
    
\begin{definition}[Maximum conditional total variance]\label{def:varmax}
\begin{align*}
    \cVarmax: =\max_{\pi,s,h}\mathbb E^\pi\left[\sum_{h'=1}^H\Var_{h'}^*(s_{h'},a_{h'})\ \Bigg|\ s_h=s\right]. 
\end{align*}
\end{definition}

\begin{remark}
The maximum conditional total variance is novel in literature, as in variance-only-dependent works, $\Varmax$ is a better quantity, while in previous variance-aware gap-dependent works, researchers did not develop better approaches other than bounding total variance by $H \bbQ^*$.
By definition, $\cVarmax \le H \bbQ^*$, and our final results will scale with $\min \{ \cVarmax, H^2 \}$ after careful analysis, which can improve the dependency on $H$ by one order.
\end{remark}

\section{Main Results} \label{sec:main_results}

\subsection{Algorithm Overview: MVP} \label{sec:alg}

\begin{algorithm}
    \caption{Monotonic Value Propagation (MVP)} \label{alg:mvp}
    \begin{algorithmic}[1]
        \REQUIRE MDP $\mathcal M=(\State, \Action, H, P, R, \mu)$, learning episode number $K$, confidence parameter $\delta$, universal constants $c_1,c_2,c_3$, $\iota = \log(SAHK/\delta)$.
        \STATE Initialize: For all $(s,a,s',h)\in \State\times\Action\times\State\times[H+1]$, set $\theta_h(s,a),\kappa_h(s,a)\gets 0,n_h(s,a,s')\gets 0,n_h(s,a),Q_h(s,a),V_h(s)\gets 0$.
        \FOR{$k=1,2,\cdots,K$}
            \STATE Construct policy $\pi^k$ such that $\pi^k_h(s)=\arg\max_a Q_h(s,a)$.
            \STATE Observe trajectory $s_1^k,a_1^k,r_1^k,s_2^k,a_2^k,r_2^k,\cdots,s_h^k,a_h^k,r_h^k$.
            \FOR{$h=H, H-1, \ldots, 1$}
                \STATE $(s,a,s')\gets (s_h^k,a_h^k,s_{h+1}^k)$
                \STATE Update $n_h(s,a,s')\gets n_h(s,a,s')+1,n_h(s,a)\gets n_h(s,a)+1,\theta_h(s,a)\gets\theta_h(s,a)+r_h^k,\kappa_h(s,a)\gets\kappa_h(s,a)+(r_h^k)^2$.
                \STATE $\hat r_h(s,a)=\frac{\theta_h(s,a)}{n_h(s,a)}$
                \STATE $\hat \sigma_h(s,a)=\frac{\kappa_h(s,a)}{n_h(s,a)}$
                \STATE $\hat P_h(s,a,s')=\frac{n_h(s, a, s')}{n_h(s,a)}$
                \FOR{$(s,a)\in\State\times\Action$}
                    \STATE $b_h(s,a)\gets c_1\sqrt{\frac{\mathbb V^{s' \sim \hat P_{s,a,h}} [V_{h+1} (s')] \iota}{n_h(s,a) \lor 1}}+c_2\sqrt{\frac{(\hat\sigma_h(s,a)-(\hat r_h(s,a))^2) \iota}{n_h(s,a) \lor 1}}+c_3\frac{H \iota}{n_h(s,a) \lor 1}$
                    \STATE $Q_h(s,a)\gets\min\{\hat r_h(s,a)+\mathbb E^{s'\sim\hat P_{s,a,h}} V_{h+1}+b_h(s,a),H\}$
                    \STATE $V_h(s)\gets \max_a Q_h(s,a)$
                \ENDFOR
            \ENDFOR
        \ENDFOR
    \end{algorithmic}
\end{algorithm}

Monotonic Value Propagation (MVP, \Cref{alg:mvp}, \citet{zhang2021reinforcement,zhou2023sharp,zhang2024settling}) is a representative model-based optimistic algorithm which maintains upper bounds of $V^*$ and $Q^*$, namely $V^k$ and $Q^k$, in each episode.
The rollout policy $\pi^k$ picks the action that maximizes $Q_h^k (s, \cdot)$ at each step and updates\footnote{The original version of MVP contains a doubling mechanism to trigger updates of $V$ and $Q$ mainly to lower switching cost. Since switching cost is not central to gap-dependent analysis, we choose to update $V_h$ and $Q_h$ every episode for simplicity.} the upper bounds using Bellman equation with empirical estimates of reward and transitions:
\[
Q_h(s,a) \gets \hat r_h(s,a)+\mathbb E^{s\sim\hat P_{s,a,h}}V_{h+1}(s,a)+b_h(s,a),\quad V_h(s)\gets\max_a Q_h(s,a).
\]
Here $b_h(s,a)$ is a bonus term ensuring that $Q_h,V_h$ are upper bounds of $Q_h^*,V_h^*$ (``optimism'') with high probability.
For the proof of optimism, interested readers can refer to \citet{zhang2021reinforcement}.

\subsection{Gap-dependent Upper Bound}

Now, we present the main result of this work -- a gap-dependent regret upper bound ensured by MVP.
For the formal version, please refer to \Cref{sec:ub_proof}.

\begin{theorem}[Gap-dependent upper bound (informal)] \label{thm:upper_bound_informal}
    For any MDP instance, any episode number $K$, and $\delta>0$, MVP (\Cref{alg:mvp}) attains the following regret bound with probability at least $1-\delta$:
    \begin{align*}
        \Regret (K) \lesssim \left(\sum_{(h, s, a) \in \Zsub} \frac{H^2\log K \land \cVarmax}{\Delta_h(s,a)}+\frac{(H^2 \land \cVarmax ) |\Zopt|}{\gapmin}+SAH^4 (S \lor H) \right) \log K.
    \end{align*}
\end{theorem}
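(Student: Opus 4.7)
The plan is to run an optimism-based analysis that produces two different expressions for the regret and match them through a reweighting trick that avoids the clipping-and-recursion of Simchowitz-Jamieson. First I would establish, via MVP's Bernstein bonus and standard concentration, that $V^k_h \ge V^*_h$ and $Q^k_h \ge Q^*_h$ simultaneously for all $(k,h,s,a)$ with probability $\ge 1-\delta$. On this good event two decompositions of the regret become available: (i) by Bellman optimality, $V^*_1(s_1^k) - V^{\pi^k}_1(s_1^k) = \mathbb{E}^{\pi^k}[\sum_h \Delta_h(s_h,a_h)\mid s_1=s_1^k]$, so up to an $O(\sqrt{KH^2\iota})$ martingale, $\Regret(K) \lesssim \sum_{s,a,h}\Delta_h(s,a)\,n^K_h(s,a)$; and (ii) by optimism plus the Bellman recursion, $V^k_1(s_1^k) - V^{\pi^k}_1(s_1^k) \le \sum_h E^k_h(s_h^k,a_h^k) + (\text{martingale})$, where the surplus $E^k_h(s,a) = Q^k_h(s,a) - r_h(s,a) - P_{s,a,h}V^k_{h+1}$ is controlled by $2b^k_h(s,a)$.

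The main novelty is in how I handle (ii) to extract the tight $\cVarmax$ dependence. After replacing MVP's empirical variance $\mathbb V^{s'\sim\hat P_{s,a,h}}[V^k_{h+1}(s')]$ by $\Var_h^*(s,a)$ (the difference is folded into the $SAH^4(S\lor H)\log K$ lower-order term through an inductive-on-$h$ comparison of $V^k$ and $V^*$), the per-step bonus becomes $\tilde O\bigl(\sqrt{\Var_h^*(s,a)/n^k_h(s,a)}\bigr)$. The critical observation is the pointwise bound $\Var_h^*(s,a) \le H^2 \land \cVarmax$: the $H^2$ bound uses $V^* \le H$ from \Cref{asp:bounded_total_reward}, and the $\cVarmax$ bound follows by plugging any policy with $\pi_h(s)=a$ into \Cref{def:varmax} and keeping only the $h$-th summand, which equals $\Var_h^*(s,a)$. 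Aggregating by $(s,a,h)$ and using the pigeonhole $\sum_k 1/\sqrt{n^k_h(s,a)} \le 2\sqrt{n^K_h(s,a)}$ yields $\sum_{k,h} b^k_h(s_h^k,a_h^k) \lesssim \sum_{s,a,h}\sqrt{(H^2\land\cVarmax)\,n^K_h(s,a)\,\iota}$. For each $(s,a,h)\in\Zsub$ I would apply the weighted AM--GM $\sqrt{C n}\le C/(\alpha\Delta_h(s,a)) + \alpha\Delta_h(s,a)\,n/4$ with $C = (H^2\log K\land\cVarmax)\,\iota$ and small constant $\alpha$. This produces the desired $(H^2\log K \land \cVarmax)/\Delta_h(s,a)$ term plus a fraction of $\Delta_h(s,a)\,n^K_h(s,a)$, which by (i) is a fraction of $\Regret(K)$ and gets absorbed into the left-hand side. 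This is exactly the ``weighted sum over gaps times visitation counts bounded by a lower-order term of visitation counts'' step announced in the paper.

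For optimal triples $(s,a,h)\in\Zopt$ the AM--GM step fails since $\Delta_h(s,a)=0$. Here I would use a refined clipping: any trajectory sitting at an optimal $(s,a,h)$ can only incur subsequent regret via some downstream suboptimal tuple, so the surplus contribution attributable to an optimal step is either zero or bounded below by $\gapmin/H$; clipping at $\gapmin/H$ and repeating the AM--GM argument with the same $H^2\land\cVarmax$ variance control yields the $|\Zopt|(H^2\land\cVarmax)/\gapmin$ term. The main obstacle will be controlling the $V^k - V^*$ error inside the empirical-variance bonus without paying an extra factor of $H$: a crude $|V^k-V^*|\le H$ substitution would only recover the loose $H\bbQ^*$ scaling. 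I would resolve this by recursively bounding $V^k_h - V^*_h$ by future surpluses (and hence by the bonuses already analyzed), showing that the induced perturbation of the bonus-variance is a lower-order $\tilde O(SAH^4(S\lor H)\iota/n^k_h(s,a))$ term. Collecting the suboptimal-triple contribution, the optimal-triple contribution, and the lower-order $SAH^4(S\lor H)$ residue yields the stated bound, with the outer $\log K$ factor coming from the pigeonhole conversion of $\sum_k 1/\sqrt{n^k}$ into $\sqrt{n^K}$.
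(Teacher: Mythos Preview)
Your $\Zsub$ route---per-step bound $\Var_h^*(s,a)\le H^2\land\cVarmax$, then AM--GM with absorption into the performance-difference identity $\sum_{\Zsub}\Delta_h(s,a)\,n_h^K(s,a)\approx\Regret(K)$---is different from the paper's variance-weighted sum plus Cauchy--Schwarz, and it does appear to go through for the suboptimal triples. However, the $\Zopt$ step has a genuine gap that costs exactly the factor of $H$ the paper is trying to remove.

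Your proposed clipping threshold is the uniform $\gapmin/H$ from Simchowitz--Jamieson. With that threshold the clipped-surplus contribution of an optimal triple is
\[
\int_0^{n^*}\sqrt{\tfrac{\Var_h^*(s,a)\iota}{x}}\,\mathrm{d}x
\;\lesssim\;\frac{\Var_h^*(s,a)\,\iota}{\gapmin/H}
\;\le\;\frac{H\,(H^2\land\cVarmax)\,\iota}{\gapmin},
\]
where $n^*=\Var_h^*\iota/(\gapmin/H)^2$ is the last visit at which the surplus exceeds the threshold; summed over $\Zopt$ this yields $|\Zopt|\,H(H^2\land\cVarmax)\iota/\gapmin$, one $H$ too many. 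Replacing $\Var_h^*$ by the crude per-step bound $H^2\land\cVarmax$ before clipping does not help: once all tuples look identical variance-wise, the only way to keep the total clipped-away amount per episode below $c\gapmin$ is a uniform $\gapmin/H$ threshold, which reproduces the same loss.

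The paper avoids this by a \emph{variance-dependent} threshold $\epsilon_h(s,a)=c\,\gapmin\,\Var_h^*(s,a)/(H^2\land\cVarmax)$. The integral above then telescopes to $(H^2\land\cVarmax)\iota/\gapmin$ with no stray $H$. Crucially, validating this threshold (i.e., showing $\bar V\ge V^*-\gapmin/3$) uses the \emph{trajectory-sum} bound $\mathbb E^{\pi}\bigl[\sum_{h'}\Var_{h'}^*(s_{h'},a_{h'})\,\big|\,s_h=s\bigr]\le H^2\land\cVarmax$, which is the actual content of \Cref{def:varmax} and \Cref{thm:Vexpect}. Your ``critical observation'' $\Var_h^*(s,a)\le H^2\land\cVarmax$ is the trivial single-summand consequence of that definition and does not capture this; the per-step bound is correct but throws away precisely the structure needed to get the sharp $\Zopt$ term. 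This variance-aware clipping is the ``refined version of clipping for optimal actions'' that the paper lists as a technical novelty, and your sketch does not contain it.

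Two minor points: the statement ``the surplus contribution attributable to an optimal step is either zero or bounded below by $\gapmin/H$'' is not correct---the surplus at an optimal tuple is nonnegative but arbitrarily small---and the outer $\log K$ does not come from the pigeonhole $\sum_k 1/\sqrt{n^k_h}\le 2\sqrt{n^K_h}$ (that step is logarithm-free); it comes from $\iota=\log(SAHK/\delta)$ in the concentration bounds.
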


\begin{remark}
This bound contains a new notion of maximum conditional total variance (\Cref{def:unvarmax}).
Since this definition requires us to condition on any possible state, $\cVarmax$ can be as large as $\Theta(H^3)$.
However, $\cVarmax$ is bounded by $H\max_{s,a,h}\{\Var_h^*(s,a,h)\}$, so this term is still no worse than previous $O(H\mathbb Q^*)$ bounds.
Furthermore, there is a \emph{sufficient} condition to make $\cVarmax=O(H^2\log(1/\delta))$: for any policy $\pi$ and $(s,a,h)\in\State\times\Action\times[H]$, the state-action pair $(s,a)$ is not reachable at step $h$ if we sample the trajectory under $\pi$, or it is visited with probability at least $\delta$.
We can even generalize this concept to exclude the states that are difficult to reach from the definition of $\cVarmax$.
We omit this approach for simplicity.

The $H^2$ term in $H^2 \land \cVarmax$ is derived by conditioning on the event where all trajectories have bounded total variance to avoid the dependence on $\cVarmax$ when it is large. 

Our lower bound (\Cref{thm:lb}) shows that $\cVarmax$ cannot be replaced by $\Varmax$ (\Cref{def:varmax}), a quantity used by previous variance-only-dependent works.
Intuitively, $\Varmax$ can be very small as long as all states with large variance have small visiting probability, but those states can accumulate a total regret of order $\Var_h^*(s,a)/\Delta_h(s,a)$ that cannot be bounded by $\Varmax$.
The leading term matches with the lower bound modulo $\log$ factors.
Furthermore, \cite{simchowitz2019non} has shown that a $\gapmin$-dependent term is unavoidable for UCB-based algorithms.
Our coefficient of the $\gapmin$ term is also improved to worst case $O(H^2)$, better than previous worst-case factors of $H \bbQ^*$ \citep{simchowitz2019non} and $H^3 \bbQ^*$ \citep{zheng2024gap}.
\end{remark}

\subsection{Proof Sketch}

We present the high-level ideas in the proof for \Cref{thm:upper_bound_informal} here, deferring the details to \Cref{sec:ub_proof}. We assume the optimistic condition holds (see \Cref{thm:optimism}).

\paragraph{Regret decomposition.}
Following \citet{simchowitz2019non}, we define
\begin{align}
    E_h^k(s,a) := Q_h^k(s,a)-(R_h(s,a)+\mathbb E^{s'\sim P_{s,a,h}}[V_{h+1}^k(s')]) \label{eq:surplus}
\end{align}
as the surplus at $(s,a,h,k)\in\State\times\Action\times[H]\times[K]$.
Standard analysis show the regret bound
\[\Regret(K) \lesssim \mathbb E\left[\sum_{k=1}^K\sum_{h=1}^H E_h^k(s,a)\right].\]

\paragraph{Analyzing gaps and surpluses.}
Suppose that the algorithm takes action $a$ at state $s$ at episode $k$, stage $h$. By optimism, $Q_h^k(s,a)$ must be at least $V_h^*(s)=Q_h^*(s,a)+\Delta_h(s,a)$, so we have 
\begin{align*}E_h^k(s,a)+\mathbb E^{s'\sim P_{s,a,h}} [V_{h+1}^k(s')-V_{h+1}^*(s')]\geq \Delta_h(s,a).\end{align*} By recursively expanding the $V$ term, we have \begin{align}\Delta_h(s,a)\leq \mathbb E^{\pi^k}\left[\sum_{h'=h}^HE_h^k(s,a)\Bigg| (s_{h'},a_{h'})=(s,a)\right]\label{eq:lowerbound}.\end{align} That is, if the expectation of future surpluses is small, then the algorithm will avoid actions with large suboptimality gap.

The analysis of $E_h^k$ shows that (see \Cref{thm:surplusub})
\begin{align*}
E_h^k(s,a)\lesssim\sqrt{\frac{\Var_h^*(s,a)\iota}{n_h^k(s,a)}}+\underbrace{\sum_{h'\geq h}\mathbb E^{\pi^k}\left[\frac{SH\iota}{n_h^k(s_h,a_h)}\Bigg| s_h=s\right]}_{\textup{low order terms}}.
\end{align*}

We consider the restrictions of $n_h^k(s,a)$ when $\Delta_h(s,a)>0$. If the lower bound \Cref{eq:lowerbound} wrote $\Delta_h(s,a)\lesssim E_h^k(s,a)$, then $n_h^k(s,a)\lesssim\Var_h^*(s,a)\iota/\Delta_h(s,a)^2$, which would directly provide a regret bound. However, \Cref{eq:lowerbound} contains the sum all future surpluses, so we cannot directly apply this method.

We will circumvent this problem by adding \Cref{eq:lowerbound} over all $k$ and $h$. The summation of the left-hand side is $\sum_{s,a,h}\Delta_h(s,a)n_h^k(s,a)$, while the summation of the right-hand side can be shown as approximately (low-order terms discarded) $H\sum_{s,a,h}\sqrt{\Var_h^*(s,a)n_h^k(s,a)\iota}$.

This inequality has the form $\sum_{s,a,h}u_{s,a,h}n_h(s,a)\lessapprox\sum_{s,a,h}v_{s,a,h}\sqrt{n_h(s,a)}$ for some non-negative coefficients $u_{s,a,h},v_{s,a,h}$. It entails upper bounds of $n_h(s,a)$, and if we proceed with the calculations, we will recover the bound \[\Regret(K)\lesssim\sum_{s,a,h}\frac{H\Var_h^*(s,a)\iota}{\Delta_h(s,a)}+\text{(some low-order terms)}\]in \cite{simchowitz2019non} while avoiding complex calculations. In the latter steps, we will refine this method for a tighter bound.

\paragraph{Generalized weighted sum of suboptimality gaps.}
Intuitively, the previous bound is not balanced as $\Var_h^*(s,a)=\Omega(H^2)$ only happens for a small portion of $(s,a,h)$. In contrast, the summation of \Cref{eq:lowerbound} contains enough degrees of freedom for us to utilize it for a better bound. Let $w_h(s,a)$ be any set of nonnegative weights. Then the weighted sum of \Cref{eq:lowerbound} writes
\begin{align}
\sum_{s,a,h}w_h(s,a)\Delta_h(s,a)n_h^K(s,a)\lesssim\sum_{h=1}^H\sum_{k=1}^Kw_h(s_h^k,a_h^k)\sum_{h'=h}^H\mathbb E[E_{h'}^k(s_{h'}^k,a_{h'}^k)|\mathcal F_{k-1,h}],\label{eq:lb}
\end{align}
where $\mathcal F_{k-1,h}$ is the $\sigma$-field generated by first $k-1$ episodes and the first $h$ states in the $k$-th trajectory. We will choose $w_h(s,a)$ carefully to balance the contribution of each term.

\paragraph{Bounding weighted sum of surpluses.} The right-hand side of \Cref{eq:lb} needs to be manipulated carefully. Rewriting the summation order, the leading term of \Cref{eq:lb} becomes
\begin{align}
    \sum_{s',a'}\sum_{h'=1}^H\sum_{k=1}^K\sqrt{\frac{\Var_{h'}^*(s',a')\iota}{n_{h'}^k(s',a')}}\sum_{h=1}^{h'}w_h(s_h^k,a_h^k)\mathbb E[\mathbf 1\{(s_{h'}^k,a_{h'}^k)=(s',a')\}|\mathcal F_{k-1,h}].\notag
\end{align}

We will apply certain probability inequalities to approximate $\mathbb E[\mathbf 1\{(s_{h'}^k,a_{h'}^k)=(s',a')\}|\mathcal F_{k-1,h}]$ with $\mathbf 1\{(s_{h'}^k,a_{h'}^k)=(s',a')\}$ (approximation error omitted). Then, the innermost sum over $h$ contains only $w_h(s_h^k,a_h^k)$, which can be bounded by $\bar W=H^2\iota\land\cVarmax$ if we pick $w_h^k(s,a)=\Var_h^*(s,a)$. Now, the sum over $k$ is \[\sum_{n=1}^{n_{h'}^K(s',a')}\sqrt{\frac{\Var_{h'}^*(s',a')\iota}n}=O\left(\sqrt{\Var_{h'}^*(s',a')n_{h'}^K(s',a')\iota}\right),\] so by \Cref{eq:lb}, \begin{align}\sum_{s,a,h} Var_h^*(s,a)\Delta_h(s,a)n_h^K(s,a)\lesssim\bar W\underbrace{\sum_{s,a,h}\sqrt{\Var_{h}^*(s,a)n_h^K(s,a)\iota}}_{:=R}.\label{eq:final_lb}\end{align}

\paragraph{End of proof.} With similar (and simpler) arguments above, we have\[\Regret(K)\lesssim  R,\] where again, we ignore the lower-order terms.



We apply the Cauchy-Schwartz inequality to \Cref{eq:final_lb} and get \[\bar W R\cdot \left(\sum_{s,a}\sum_{h=1}^H\frac{\iota}{\Delta_h(s,a)}\right)\gtrapprox \left(\sum_{s,a,h}\sqrt{\Var_h^*(s,a)n_h^K(s,a)\iota}\right)^2=R^2,\] so we conclude that \begin{align*}&\Regret(K)\lesssim R \lessapprox \sum_{s,a,h}\frac{\bar W\iota}{\Delta_h(s,a)}.\end{align*}

\section{Gap-Dependent Lower Bound}\label{sec:lb}

In this section, we will prove the following gap-dependent regret lower bound.
It shows a separation between $\cVarmax$ and $\Varmax$, as well as the necessity of $\cVarmax$ in gap-dependent regrets.

\begin{theorem}\label{thm:lb}[Gap-dependent lower bound (informal)]
Fix $S,A,H$ and the target conditional variance $L \in [1, H^2]$.
Given a set of $S A H$ suboptimality gaps $\{\Delta_i\}$, assume that all non-zero gaps are sufficiently small.
For any algorithm, there always exists an MDP with gaps equal to $\Theta (\Delta_i)$, $\cVarmax = \Theta (L)$ but $\Varmax = O(1)$, such that
\begin{align*}
 \Regret (K) \ge \Omega\left( \sum_{i : \Delta_i > 0} \frac{L}{\Delta_i}\cdot \log K\right).
\end{align*}
\end{theorem}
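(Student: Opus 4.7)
The plan is to embed, for each index $i$ with $\Delta_i>0$, a hard two-armed bandit inside a layered MDP, and to route every trajectory through a rare stochastic \emph{gateway} whose activation probability is calibrated so that $\Varmax$ remains $O(1)$ while $\cVarmax$ reaches $\Theta(L)$. A standard change-of-measure argument then produces a per-sub-problem regret lower bound, which summed over $i$ yields the bound claimed in the theorem. This exposes the fundamental difference between the two variance quantities: localizing high variance in a rarely-visited state deflates $\Varmax$ but not $\cVarmax$, and the gap-dependent regret must scale with the latter.

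\textbf{Construction and variance separation.} Starting from $s_0$, the agent first uses deterministic transitions to ``address'' a target index $i$, reaching a dedicated gateway state $g_i$. Regardless of the action taken at $g_i$, the MDP transitions to a bandit state $b_i$ with probability $q:=\Theta(1/L)$ and otherwise to a zero-reward absorbing dead state. At $b_i$ the agent picks one of two arms whose rewards are Bernoulli on $\{0,H\}$ with success probabilities $L/H^2\pm\Delta_i/(2H)$; these are valid in $(0,1)$ whenever $\Delta_i$ is sufficiently small (the hypothesis of the theorem), have mean gap exactly $\Delta_i$, and have per-step variance $p(1-p)H^2=\Theta(L)$. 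Since the bandit is the only step emitting a reward and the total reward per trajectory is at most $H$, \Cref{asp:bounded_total_reward} is trivial, and by symmetry across sub-problems the gaps $\Delta_h(s,a)$ are $\Theta(\Delta_i)$ at each bandit position and $0$ elsewhere. A direct calculation shows that $V_h^*(g_i)=\Theta(L/H)$, so the gateway transition contributes variance $q(1-q)(L/H)^2=O(1/H^2)$; combined with the bandit contribution $q\cdot\Theta(L)=O(1)$ one obtains $\mathbb{E}^\pi[\sum_h\Var_h^*(s_h,a_h)]=O(1)$ for every $\pi$ and hence $\Varmax=O(1)$. Conversely, conditioning on $\{s_h=b_i\}$ preserves the $\Theta(L)$ bandit variance, giving $\cVarmax=\Theta(L)$.

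\textbf{Per-bandit lower bound, summation, and main obstacle.} The KL divergence between the two reward distributions at sub-problem $i$ is $\Theta(\Delta_i^2/L)$ by a Taylor expansion of the Bernoulli KL around $p=L/H^2$. A standard divergence-decomposition argument (e.g., Lattimore-Szepesv\'ari, Chapter 15) applied between the constructed MDP and its arm-swapped counterpart in sub-problem $i$ yields that for any algorithm the expected count of suboptimal-arm pulls at $b_i$ is $\Omega(L\log(qK)/\Delta_i^2)$, contributing $\Omega(L\log K/\Delta_i)$ to the regret since $\log(qK)=\Theta(\log K)$ for $K$ polynomially large in $L$. Summing over all $i$ with $\Delta_i>0$ gives the claim; a standard sign-flip adversary argument ensures any algorithm that skips some sub-problem suffers linear regret on the corresponding flipped instance, so the sum cannot be evaded. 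The hardest part of the argument will be the variance calibration at the gateway: a naive choice with $O(1)$-scaled bandit rewards would make the gateway variance $\Theta(qH^2)=\Theta(H^2/L)$, which overwhelms the $O(1)$ target when $L\ll H^2$; pushing the bandit reward scale up to $H$ and the Bernoulli probability down to $L/H^2$ is precisely what keeps the post-gateway $V^*$ value at $\Theta(L/H)$ and the gateway variance at $O(1/H^2)$, enabling the separation $\Varmax=O(1)\ll\cVarmax=\Theta(L)$. A secondary subtlety is deriving a \emph{non-asymptotic} bandit lower bound (rather than the Lai-Robbins asymptotic statement) so the logarithmic regret is valid at the finite horizon $K$ needed for the theorem.
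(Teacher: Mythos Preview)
Your variance-calibration idea is sound, but the agent-driven deterministic routing (``address a target index $i$'') is the wrong mechanism and breaks the lower bound. Suppose you set the good arm of every sub-problem to the same mean so that all routing choices have gap $0$ and the MDP's nonzero gaps are exactly the $\{\Delta_i\}$. Then the algorithm that always routes to sub-problem $1$ and runs UCB there incurs regret $O(L\log K/\Delta_1)$ and zero regret from every other sub-problem, so $\sum_i L\log K/\Delta_i$ is simply not a lower bound on your construction. Your sign-flip patch does not rescue this: swapping the arms in sub-problem $i$ makes routing to $i$ strictly best, and the swapped instance then carries a new nonzero routing gap of order $q\Delta_i=\Theta(\Delta_i/L)$, which is not $\Theta(\Delta_j)$ for any $j$ in the prescribed list; the swapped instance therefore falls outside the family ``gaps equal to $\Theta(\Delta_i)$'' and cannot witness the theorem. (With your actual parameterization $L/H^2\pm\Delta_i/(2H)$ the sub-problems already have unequal optimal values, so even the base instance has routing gaps outside $\{\Delta_i\}$.)

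The paper avoids this by using \emph{nature-driven} routing: a single hub state stays put with probability $1-\Theta(1/(LH))$ and otherwise jumps to a uniformly random bandit state, so the agent is forced to face the layer-$h$ bandit at each $s_i$ roughly $\Theta(K/(LSH))$ times regardless of its policy, and the per-bandit Lai--Robbins bounds sum directly. This also resolves a state-budget issue you have not addressed: deterministic addressing of $SAH$ sub-problems plus $SAH$ gateways and bandits requires far more than $S$ states, whereas the paper reuses the same $S$ bandit states across the $H$ layers to host all $SAH$ gaps with only $S+2$ states. Two minor notes: your gateway variance is $q(1-q)(L/H)^2=\Theta(L/H^2)$, not $O(1/H^2)$ (the conclusion $\Varmax=O(1)$ survives since $L\le H^2$); and the non-asymptotic concern you flag is moot, since the paper's formal statement is itself asymptotic via $\lim_{K\to\infty}$.
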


\paragraph{Proof sketch.} 
We sketch the proof as follows.
For simplicity, we assume there are $4$ states $\{\mathtt{A}, \mathtt{B}, \mathtt{C},\mathtt{D} \}$ in each $h$-th layer.
 The dynamics of the four states are presented below.

\begin{itemize}
    \item $\mathtt{A}:$ There is only one action at $\mathtt{A}$, which transits the agent to $\mathtt{A}$ in the next layer with probability $1-\frac{1}{LH}$, and $\mathtt{B}$ with probability $\frac{1}{LH}$.   The reward is $0$ at $\mathtt{A}$;
    \item $\mathtt{B}:$ There are $A$ actions at $\mathtt{B}$. For each action $a$, the agent is transported to $\mathtt{C}$ with probability $\frac{1}{2}- \frac{\Delta(a_i)}{4\sqrt L}$ and $\mathtt{D}$ with probability $\frac{1}{2}+ \frac{\Delta(a_i)}{4\sqrt L}$;
    \item $\mathtt{C}:$ This state is a terminal state with reward $\sqrt L$;
    \item $\mathtt{D}:$ This state is a terminal state with reward $0$.
\end{itemize}

In this instance, the learner makes a decision only at state $\mathtt{B}$ for each layer $h$, and state $\mathtt A$ has variance $O(\frac{1}{LH}\cdot (\sqrt L)^2)=O(H^{-1})$ and state $\mathtt B$ has variance $\Theta (L)$, showing $\cVarmax = \Theta(L)$. For any strategy $\pi$, it visits $\mathtt B$ with probability $1-(1-\frac 1{LH})^H=O(L^{-1})$, So \begin{align*}
    \Varmax\leq H\cdot O(H^{-1}) + LO(L^{-1})=O(1).
\end{align*}

Clearly,  the decision problem at state $\mathtt{B}$ and layer $h$ could be viewed as a Bernoulli bandit problem.
The expected visiting count at state $\mathtt{B}$ and layer $h$ is $\Theta (K / L)$.
Let $\Regret_{h,\mathtt{B}}(K)$ be the regret by taking suboptimal actions at $\mathtt{B}$ and the $h$-th layer. Consequently, applying the classical lower bound on regret for Bernoulli bandits yields:
\begin{align*}
\lim_{K\to\infty} \frac{\Regret_{h,\mathtt{B}}(K)}{\log(K/L)}\geq \Omega\left(\sum_{a} \frac{L}{\Delta_{h}(\mathtt{B},a)}\right).
\end{align*}

Thus,
\begin{align*}
\Regret (K) \geq \sum_{h=1}^H \Regret_{h,\mathtt{B}}(K) \geq \Omega\left(\sum_{h, a} \frac{L}{\Delta_{h}(\mathtt{B},a)}\cdot \log K\right).
\end{align*}
for sufficiently large $K$.

 \paragraph{Discussion.} 
 This example shows a separation between unconditional variance $\cVarmax$ and conditional variance $\Varmax$. Even if $\Varmax=O(1)$, there can still be a regret lower bound of order $\Theta(H^2)$. In this view, our introduction to $\cVarmax$ is essential in proving gap-dependent regret bounds.
 
 We also observe that the second term $\frac{(H^2 \land \cVarmax) \abs{\Zopt}}{\gapmin}$ in our upper bound \eqref{eq:ub} is not yet matched by this lower bound.
 This could pose a significant challenge for existing optimistic algorithms, as they typically explore all potentially optimal actions, resulting in additional surplus terms.
We refer the readers to Appendix~\ref{app:lb} the full proof of Theorem~\ref{thm:lb}.

\section{Conclusion}\label{sec:conclusion}
In this paper, we study gap-dependent regret bounds for episodic MDPs and demonstrate that the Monotonic Value Propagation (MVP) algorithm \cite{zhang2024settling} achieves a tighter upper bound compared to previous works from the aspects of tighter dependence on a better variance notion, as well as reduced order of $H$.
Our analysis centers around a careful bound of the weighted sum of suboptimality gaps.
Along the way, we introduce a new notion of \emph{maximum conditional total variance} and provide a lower bound to establish its necessity as well as the tightness of the $\frac{1}{\Delta_h (s, a)}$ term.

We also acknowledge some limitations.
First, the $\frac{(H^2 \land \cVarmax) \abs{\Zopt}}{\gapmin}$ term in our upper bound does not match the lower bound of $\frac{S}{\gapmin}$ in Theorem 2.3 of \citet{simchowitz2019non}.
Improving either the upper bound or lower bound will help advancing the understanding of gap-dependent regrets.
Second, we only apply our new techniques to tabular MDPs.
For future work, we believe our analysis can be adapted to other problem settings (e.g., linear MDPs \citep{wagenmaker2022instance} and MDPs with general function approximation) to derive tighter gap-dependent regret bounds.

\newpage

\section*{Acknowledgement}
SSD acknowledges the support of NSF DMS 2134106, NSF CCF 2212261, NSF IIS 2143493, NSF IIS 2229881, Alfred P. Sloan Research Fellowship, and Schmidt Sciences AI 2050 Fellowship.
RZ and MF acknowledge the support of NSF TRIPODS II DMS-2023166. The work of MF was supported in part by awards NSF CCF 2212261 and NSF CCF 2312775.

\bibliographystyle{plainnat}
\bibliography{ref.bib}

\newpage
\appendix

\section{Notations and Technical Lemmas}

\subsection{Notations}

We list notations in \Cref{tab:params_mdp,tab:values_algo,tab:other_notations}.

\begin{table}[ht]
    \centering
    \begin{tabular}{|c|c|}
        \hline
         $\State,S=|\State|$ & State space and its size\\
         $\Action, A=|\Action|$ & Action space and its size\\
         $H$ & Horizon \\
         $K$ & Learning episodes \\
         $s, s'$ & States in $\State$ \\
         $a, a'$ & Actions in $\Action$ \\
         $h, h', h^*$ & Horizon numbers \\
         $k, k'$ & Indices of learning episode \\
         $P_{s,a,h}$ & Transition probability \\
         $R_{s,a,h}$ & Distribution of rewards \\
         $\mu$ & Distribution of beginning state \\
         $r_h(s,a)$ & Expected reward \\
         $\pi$ & Policy \\
         $\pi_h(s)$ & Action that policy $\pi$ takes at state $s$, step $h$ \\
         $V_h^\pi(s),V_h^*(s)$ & $V$-function of policy $\pi$ and of optimal policy, respectively \\
         $Q_h^\pi(s,a),Q_h^*(s,a)$ & $Q$-function of policy $\pi$ and of optimal policy, respectively \\
         $\Var_h^*(s,a)$ & Variance at state $s$, action $a$, and step $h$ \\
         $\Varmax$ & Maximum unconditional variance \\
         $\cVarmax$ & Maximum conditional variance \\
         $\Delta_h(s,a)$ & Suboptimality gap \\
         $\gapmin$ & Minimal nonzero suboptimality gap \\
         $\Zsub$ & Set of suboptimal actions \\
         $\Zopt$ & Set of optimal actions \\
         \hline
    \end{tabular}
    \caption{Parameters of MDP}
    \label{tab:params_mdp}
\end{table}

\begin{table}[ht]
    \centering
    \begin{tabular}{|c|c|}
        \hline
        $s_h^k,a_h^k,r_h^k$ & States, actions, and rewards observed in the $k$-th episode \\
        $V_h^k(s)$ & $V_h$ of the algorithm before the $k$-th episode \\
        $Q_h^k(s,a)$ & $Q_h$ of the algorithm before the $k$-th episode \\
        $\hat r_h^k(s,a)$ & Estimation of $r_h(s,a)$ before the $k$-th episode \\
        $\hat\sigma_h^k(s,a)$ & Estimation of $\sigma_h(s,a)$ before the $k$-th episode \\
        $\hat P_{s,a,h}^k$ & Estimation of $P_{s,a,h}$ before the $k$-th episode \\ 
        $\hat n_h^k(s,a)$ & Visitation count at $(s,a,h)$ before the $k$-th episode \\
        $b_h^k(s,a)$ & Bonus term in the $k$-th episode \\
        $\pi^k$ & The policy at the $k$-th episode \\
        \hline
    \end{tabular}
    \caption{Values used in the algorithm}
    \label{tab:values_algo}
\end{table}

\begin{table}[ht]
    \centering
    \begin{tabular}{|c|c|}
        \hline
        $[n]$ & Set $\{1,2,\cdots,n\}$ \\
        $\Delta^B$ & Set of distribution functions over set $B$ \\
        $x\land y$ & $\min\{x,y\}$ \\
        $x\lor y$ & $\max\{x,y\}$ \\
        $\mathbf 1\{\varphi\}$ & Indicator function of $\varphi$, i.e. $1$ if $\varphi$ is true and $0$ otherwise \\
        $\clip{a}{\varepsilon}$ & $a\mathbf 1\{a\geq\varepsilon\}$ \\
        $\delta$ & Acceptable error probability \\
        $E_h^k(s,a)$ & Surplus; $Q_h^k(s,a)-r_h(s,a)-\mathbb E^{s'\sim P_{s,a,h}}[V_{h+1}^k(s')]$ \\
        $\bar E_h^k(s,a)$ & Clipped surplus \\
        $\iota$ & $\log(SAHK/\delta)$ \\
        $w_h(s,a)$ & Weights used in analysis \\
        $\bar W$ & $160H^2\log(4K(H+1)/\delta)\land \cVarmax$ \\
        $\Regret$ & Total regret \\
        $\mathcal F_k$ & $\sigma$-field generated by the first $k-1$ episodes of the algorithm \\
        $\mathcal F_{k,h}$ & $\sigma$-field generated by the first $k-1$ episodes and the first $h$ steps in the $k$-th episode \\
        $\sum_s,\sum_a,\sum_{s,a}$ & $\sum_{s\in\State},\sum_{a\in\Action},\sum_{s\in\State,a\in\Action}$, respectively \\
        $\mathbb E^{x\sim X},\mathbb V^{x\sim X}$ & Expectation when $x$ is sampled from distribution $X$ \\
        $\mathbb P^\pi,\mathbb E^{\pi}$ & Probability and expectation over a trajectory when following policy $\pi$ \\
        \hline
    \end{tabular}
    \caption{Other notations}
    \label{tab:other_notations}
\end{table}

\subsection{Technical Lemmas}
\begin{lemma}[Bennett's inequality, Theorem 3 in \citet{maurer2009empirical}]
    \label{thm:bennett}
    Let $X_1,X_2,\cdots,X_n$ be i.i.d. random variables with values $[0,a](a>0)$ and let $\delta>0$. Then, 
    \[\mathbb P\left[\left| \mathbb{E}[X_1] - \frac1n\sum_{i=1}^nX_i \right| > \sqrt{\frac{2\mathbb V[X_1]\log(2/\delta)}{n}} + \frac{a\log(2/\delta)}{n} \right] < \delta.\]
\end{lemma}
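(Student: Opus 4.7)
The plan is to prove this as a standard Bernstein-type concentration bound via the Chernoff--MGF route. First I would reduce the two-sided statement to two one-sided tails by a union bound, each at level $\delta/2$. For the upper tail, let $\mu = \mathbb{E}[X_1]$, $\sigma^2 = \mathbb{V}[X_1]$, and center by setting $Y_i = X_i - \mu \in [-\mu,\, a-\mu]$, so that $Y_i$ is zero-mean, bounded above by $a$, and has variance $\sigma^2$. By independence and Markov's inequality, for every $\lambda \ge 0$,
$$\mathbb{P}\!\left(\tfrac{1}{n}\sum_{i=1}^n X_i - \mu > t\right) \;\le\; \exp(-n \lambda t)\cdot \left(\mathbb{E}[e^{\lambda Y_1}]\right)^n.$$

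The next step is the MGF bound. Bennett's lemma for a zero-mean variable bounded above by $a$ with variance $\sigma^2$ gives $\log \mathbb{E}[e^{\lambda Y_1}] \le \frac{\sigma^2}{a^2}\bigl(e^{\lambda a} - 1 - \lambda a\bigr)$. Weakening this with the elementary inequality $e^x - 1 - x \le \frac{x^2/2}{1 - x/3}$ valid for $0 \le x < 3$ yields the Bernstein form $\log \mathbb{E}[e^{\lambda Y_1}] \le \frac{\lambda^2 \sigma^2/2}{1 - \lambda a/3}$ for $0 < \lambda < 3/a$. Plugging in and choosing $\lambda = t/(\sigma^2 + at/3)$ gives the familiar consolidated tail
$$\mathbb{P}\!\left(\tfrac{1}{n}\sum_{i=1}^n X_i - \mu > t\right) \;\le\; \exp\!\left(-\frac{n t^2/2}{\sigma^2 + at/3}\right).$$

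The final step — which is where the bookkeeping gets slightly delicate and is the main obstacle — is to invert this tail bound into an explicit bound on $t$. Setting the right-hand side equal to $\delta/2$ and rearranging produces a quadratic in $t$ whose positive root satisfies $t \le \sqrt{2 \sigma^2 \log(2/\delta)/n + (a \log(2/\delta)/(3n))^2} + \frac{a \log(2/\delta)}{3n}$. Applying $\sqrt{u+v} \le \sqrt{u} + \sqrt{v}$ and absorbing the $1/3$ factor into the looser constant $1$ yields the stated form $\sqrt{2 \sigma^2 \log(2/\delta)/n} + \frac{a \log(2/\delta)}{n}$. The lower tail is handled identically by replacing $Y_i$ with $-Y_i$, which is bounded above by $\mu \le a$ and has the same variance $\sigma^2$; a final union bound combines the two halves into the stated two-sided inequality.
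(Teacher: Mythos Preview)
The paper does not prove this lemma; it is stated with a citation to \citet{maurer2009empirical} and used as a black box. Your proposal is a correct and standard derivation: the Chernoff--Bennett MGF bound, the Bernstein weakening $e^x-1-x\le \tfrac{x^2/2}{1-x/3}$, and the quadratic inversion all go through, and your handling of the lower tail via $-Y_i\le \mu\le a$ is fine. The inversion indeed yields $t\le \sqrt{2\sigma^2\log(2/\delta)/n}+\tfrac{2a\log(2/\delta)}{3n}$, which is comfortably inside the stated constant $a\log(2/\delta)/n$, so the bookkeeping you flagged as delicate is not an obstacle.
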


\begin{lemma}[Freedman's inequality, Lemma 10 in \cite{zhang2020almost}]
    \label{thm:freedman}
    Let $(X_n)_{n\geq 1}$ be a martingale difference sequence (i.e., $\mathbb E[X_n|\mathcal{F}_{n-1}]=0$ for all $n\geq 1$, where $\mathcal{F}_k=\sigma(X_1,X_2,\cdots,X_k)$) such that $|X_n|\leq a$ for some $a>0$ and for all $n\geq 1$. Let $V_n=\sum_{k=1}^n\mathbb E[X_k^2|\mathcal F_{k-1}]$ for $n\geq 0$. Then, for any positive integer $n$, and any $\varepsilon,\delta>0$, we have \[ \mathbb{P}\left[ \left|\sum_{i=1}^nX_i\right| \geq 2\sqrt{V_n\log(1/\delta)} + 2\sqrt{\varepsilon\log(1/\delta)} + 2a\log(1/\delta)\right] \leq 2(na^2\varepsilon^{-1}+1)\delta.\]
\end{lemma}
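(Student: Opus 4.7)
The plan is to prove the stated Freedman inequality by combining the classical exponential supermartingale argument with a peeling (stratification) step over the random predictable quadratic variation $V_n$, then symmetrizing to get the two-sided bound.

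First, I would set up the standard exponential supermartingale. Writing $S_n = \sum_{i=1}^n X_i$, for each $\lambda \in (0, 1/a)$ define
\begin{equation*}
Z_n(\lambda) = \exp\!\Bigl(\lambda S_n - \phi(\lambda) V_n\Bigr),\qquad \phi(\lambda) = e^{\lambda a} - 1 - \lambda a.
\end{equation*}
Using the boundedness $|X_n|\le a$ together with the pointwise inequality $e^x \le 1 + x + \phi(a\lambda)\cdot(x/a)^2$ for $x\le \lambda a$, and taking conditional expectation given $\mathcal F_{n-1}$ (so the linear term vanishes by the martingale-difference property), one checks $\mathbb E[Z_n(\lambda)\mid \mathcal F_{n-1}]\le Z_{n-1}(\lambda)$. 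So $Z_n(\lambda)$ is a nonnegative supermartingale with $Z_0(\lambda)=1$, and Markov's inequality yields, for any $t,v>0$,
\begin{equation*}
\mathbb P\!\left[S_n \ge t,\; V_n \le v\right] \le \exp\!\bigl(-\lambda t + \phi(\lambda)\,v\bigr).
\end{equation*}
Optimizing $\lambda$ gives the one-sided fixed-variance Freedman bound, which simplifies via the standard inequality $\phi(\lambda)\le \lambda^2/(2(1-\lambda a/3))$ to
\begin{equation*}
\mathbb P\!\left[S_n \ge \sqrt{2 v \log(1/\delta)} + \tfrac{2}{3} a \log(1/\delta),\; V_n \le v\right]\le \delta.
\end{equation*}

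The main obstacle is that $V_n$ is random and can take values anywhere in $[0,na^2]$, so I cannot just plug in a single $v$. To handle this, I would carry out a peeling argument. Let $m = \lceil na^2/\varepsilon\rceil$ and define strata $I_j = ((j-1)\varepsilon,\; j\varepsilon]$ for $j=1,\dots,m$, together with $I_0 = \{0\}$. On the event $\{V_n\in I_j\}$ we have $V_n \le j\varepsilon \le V_n + \varepsilon$. Applying the fixed-variance bound with $v = j\varepsilon$ and error probability $\delta$ to each stratum, and union-bounding over the $m+1\le na^2\varepsilon^{-1}+1$ strata, gives
\begin{equation*}
\mathbb P\!\left[S_n \ge \sqrt{2(V_n+\varepsilon)\log(1/\delta)} + \tfrac{2}{3}a\log(1/\delta)\right] \le (na^2\varepsilon^{-1}+1)\,\delta.
\end{equation*}
Using $\sqrt{x+y}\le \sqrt{x}+\sqrt{y}$ and slightly loosening the numerical constants ($\sqrt 2\le 2$, $2/3\le 2$), the threshold is at most $2\sqrt{V_n\log(1/\delta)} + 2\sqrt{\varepsilon\log(1/\delta)} + 2a\log(1/\delta)$, matching the form in the statement.

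Finally, I apply exactly the same argument to the martingale $(-X_n)_{n\ge 1}$ (which also has $|-X_n|\le a$ and the same conditional second moments, hence the same $V_n$), to obtain a symmetric bound on $\mathbb P[S_n\le -(\cdots)]$. Taking a union of the two one-sided events doubles the failure probability, giving the $2(na^2\varepsilon^{-1}+1)\delta$ bound. The only subtle points are: (i) making sure the supermartingale argument goes through with a single $\lambda$ (optimizing pointwise is what yields the $\sqrt{v\log(1/\delta)}+a\log(1/\delta)$ form), and (ii) tracking constants carefully through the peeling so that the slack $\varepsilon$ shows up only as the additive $2\sqrt{\varepsilon\log(1/\delta)}$ term and the stratum count shows up only as the prefactor $(na^2\varepsilon^{-1}+1)$ in the probability.
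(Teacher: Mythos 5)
The paper does not prove this lemma at all---it is imported verbatim as Lemma 10 of \cite{zhang2020almost}---so there is no in-paper argument to compare against. Your proof is the standard route (exponential supermartingale, Markov plus optimization in $\lambda$ to get the fixed-variance Bernstein--Freedman bound, peeling over the random $V_n$ in strata of width $\varepsilon$, then symmetrization), and it is essentially correct; this is also how the cited source establishes the result. Two small bookkeeping points: your compensator should be $\phi(\lambda)V_n$ with $\phi(\lambda)=(e^{\lambda a}-1-\lambda a)/a^2$ (the $a^{-2}$ is needed for $\mathbb E[e^{\lambda X_k}\mid\mathcal F_{k-1}]\le\exp(\phi(\lambda)\mathbb E[X_k^2\mid\mathcal F_{k-1}])$ to hold; as written, $\phi(\lambda)$ and the exponent $\phi(a\lambda)(x/a)^2$ have inconsistent scaling), and your stratum count $m+1\le na^2\varepsilon^{-1}+2$ slightly overshoots the stated prefactor---merge $I_0$ into the first stratum by taking $I_1=[0,\varepsilon]$ to recover exactly $na^2\varepsilon^{-1}+1$ strata per side. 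Neither issue affects the validity of the argument.
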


\begin{lemma}[Lemma 10 in \cite{zhang2022horizon}]
    \label{lem:martingale}
Let $X_1, X_2, \ldots$ be a sequence of random variables taking values in $[0, l]$.
Define $\cF_k = \sigma (X_1, X_2, \ldots, X_{k - 1})$ and $Y_k = \bbE [X_k\ |\ \cF_k]$ for $k \ge 1$.
For any $\delta > 0$, we have that
\begin{align*}
    &\bbP \left[ \exists n, \sum_{k = 1}^n X_k \ge 3 \sum_{k = 1}^n Y_k + l \ln (1 / \delta) \right] \le \delta, \\
    &\bbP \left[ \exists n, \sum_{k = 1}^n Y_k \ge 3 \sum_{k = 1}^n X_k + l \ln (1 / \delta) \right] \le \delta.
\end{align*}
\end{lemma}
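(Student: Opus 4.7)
The plan is to establish both one-sided inequalities via the standard exponential-supermartingale technique combined with Ville's maximal inequality. I sketch the first inequality in detail; the second follows by a symmetric construction with $e^{-\lambda x}$ in place of $e^{\lambda x}$.

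\textbf{Step 1 (exponential supermartingale).} Fix $\lambda > 0$ and $\mu > 0$ to be chosen, and define
\[
M_n := \exp\!\left( \lambda \sum_{k=1}^n X_k \;-\; \mu \sum_{k=1}^n Y_k \right), \qquad M_0 = 1.
\]
I claim $(M_n)_{n \ge 0}$ is a nonnegative supermartingale with respect to the filtration $(\cF_{n+1})_{n\ge 0}$ provided $\mu \ge (e^{\lambda l}-1)/l$. To verify this, note that since $X_k \in [0,l]$ and $x \mapsto e^{\lambda x}$ is convex,
\[
e^{\lambda X_k} \;\le\; 1 + \tfrac{X_k}{l}(e^{\lambda l}-1).
\]
Taking conditional expectation given $\cF_k$ and using $Y_k = \bbE[X_k \mid \cF_k]$, together with the elementary bound $1+t \le e^t$, yields $\bbE[e^{\lambda X_k} \mid \cF_k] \le \exp(\tfrac{e^{\lambda l}-1}{l} Y_k) \le e^{\mu Y_k}$, from which $\bbE[M_n \mid \cF_n] \le M_{n-1}$ follows immediately.

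\textbf{Step 2 (Ville's inequality and choice of constants).} By Ville's maximal inequality applied to the nonnegative supermartingale $M_n$, $\bbP[\sup_n M_n \ge 1/\delta] \le \bbE[M_0]\cdot \delta = \delta$. Rearranging the event inside the probability gives
\[
\bbP\!\left[\exists n:\; \sum_{k=1}^n X_k \ge \tfrac{\mu}{\lambda} \sum_{k=1}^n Y_k + \tfrac{1}{\lambda}\ln(1/\delta)\right] \le \delta.
\]
Choosing $\lambda = 1/l$ and $\mu = (e-1)/l$ produces $\mu/\lambda = e - 1 < 3$ and $1/\lambda = l$, which is stronger than the first claimed inequality.

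\textbf{Step 3 (reverse inequality).} For the second bound, define $M_n' := \exp(\mu' \sum_{k=1}^n Y_k - \lambda' \sum_{k=1}^n X_k)$ and use the companion convexity estimate for the decreasing convex function $x \mapsto e^{-\lambda' x}$ on $[0,l]$, namely $e^{-\lambda' X_k} \le 1 - \tfrac{X_k}{l}(1 - e^{-\lambda' l})$. Conditioning on $\cF_k$ gives $\bbE[e^{-\lambda' X_k}\mid\cF_k] \le \exp\!\bigl(-\tfrac{1-e^{-\lambda' l}}{l}Y_k\bigr)$, so $M_n'$ is a supermartingale whenever $\mu' \le (1-e^{-\lambda' l})/l$. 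Writing $\alpha = \lambda' l$, Ville's inequality then gives $\sum Y_k \le \tfrac{\alpha}{1-e^{-\alpha}}\sum X_k + \tfrac{l}{1-e^{-\alpha}}\ln(1/\delta)$ simultaneously for all $n$ with probability at least $1-\delta$. Picking $\alpha$ with $\alpha/(1-e^{-\alpha}) \le 3$ yields the stated factor $3$ on $\sum X_k$.

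\textbf{Main obstacle.} In the first direction the constants fall out cleanly from $\lambda = 1/l$. The only subtle point is the second direction: the natural Ville bound gives coefficient $l/(1-e^{-\alpha}) > l$ on the $\ln(1/\delta)$ term, so matching the clean coefficient $l$ in the stated inequality requires either a slightly sharper exponential estimate (exploiting $X_k^2 \le l X_k$ in a Bernstein-style Taylor bound for $e^{-\lambda' X_k}$) or a careful joint optimization of $\alpha$ against the slack in the factor $3$ on $\sum X_k$. Either refinement is routine but is where most of the constant-chasing lives; the qualitative structure of the proof is the exponential-supermartingale plus Ville argument described above.
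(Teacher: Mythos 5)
The paper itself does not prove this lemma; it is imported verbatim as Lemma 10 of \citet{zhang2022horizon}, so there is no in-paper argument to compare against. Your Steps 1--2 are correct and complete for the \emph{first} inequality: the chord bound $e^{\lambda x}\le 1+\tfrac{x}{l}(e^{\lambda l}-1)$ with $\lambda=1/l$ makes $\exp\bigl(\tfrac1l\sum_k X_k-\tfrac{e-1}{l}\sum_k Y_k\bigr)$ a nonnegative supermartingale, and Ville's inequality yields coefficient $e-1<3$ on $\sum_k Y_k$ and exactly $l\ln(1/\delta)$, which implies the stated bound since $Y_k\ge 0$.

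The gap is in Step 3, and it is not the ``routine constant-chasing'' you defer: within your supermartingale family no choice of parameters can deliver the pair $(3,\,l)$. The condition $\mu'\le (1-e^{-\lambda' l})/l$ is tight, not an artifact of the chord bound: for $X_k$ supported on $\{0,l\}$ the chord inequality for $e^{-\lambda' x}$ is an equality, and requiring $\sup_{y\in[0,l]}e^{\mu' y}\bigl(1-\tfrac{y}{l}(1-e^{-\lambda' l})\bigr)\le 1$ forces $\mu'\le(1-e^{-\lambda' l})/l$ (examine the derivative at $y=0$). Hence $\lambda'/\mu'\le 3$ together with $1/\mu'\le l$ would need $1-e^{-\lambda' l}\ge 1$, which is impossible; the Bernstein-style refinement via $X_k^2\le lX_k$ that you suggest is strictly weaker than the chord bound here, so it cannot close the gap. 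In fact the target itself is unattainable: take $l=1$ and $X_k$ i.i.d.\ $\Ber(p)$, so $Y_k=p$ and $W_n:=\sum_{k\le n}(Y_k-3X_k)$ is a random walk with drift $-2p$ whose Cram\'er exponent $\theta^*$ solves $e^{\theta p}(1-p+pe^{-3\theta})=1$, giving $\theta^*\to\theta_0\approx 0.9405<1$ (the root of $\theta_0=1-e^{-3\theta_0}$) as $p\to 0$; optional stopping applied to the martingale $e^{\theta^* W_n}$, whose overshoot above a level is at most $p$, gives $\bbP[\exists n:\ W_n\ge\ln(1/\delta)]\ge e^{-\theta^*(\ln(1/\delta)+p)}\approx\delta^{0.94}>\delta$. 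So the second inequality with the literal constants $(3,l)$ fails, and the promised refinement cannot exist. What your argument \emph{does} prove, and what suffices for every downstream use in this paper (\Cref{thm:visitconcentration,thm:visitconcentrationV}, where the constants are immediately absorbed), is $\sum_k Y_k\le\frac{\alpha}{1-e^{-\alpha}}\sum_k X_k+\frac{l}{1-e^{-\alpha}}\ln(1/\delta)$ simultaneously for all $n$ with probability $1-\delta$, for any fixed $\alpha>0$ --- e.g.\ $4\sum_k X_k+1.03\,l\ln(1/\delta)$ at $\alpha=3.9$. You should state and prove that corrected version rather than the literal one.
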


\begin{lemma}[Lemma F.5 in \citet{simchowitz2019non}]
    \label{thm:vardiff}
    Let $X,Y$ be two random variables defined on the same probability space. Then \[|\sqrt{\mathbb V[X]}-\sqrt{\mathbb V[Y]}|\leq\sqrt{\mathbb E[(X-Y)^2]}.\]
\end{lemma}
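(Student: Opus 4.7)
The plan is to interpret the standard deviation $\sqrt{\mathbb V[X]}$ as the $L^2$-norm of the centered random variable and apply the triangle inequality. The key observation is the variational characterization $\sqrt{\mathbb V[X]} = \min_{c \in \mathbb R} \sqrt{\mathbb E[(X - c)^2]}$, which implies that $\sqrt{\mathbb V[X]} \leq \|X - c\|_{L^2}$ for every constant $c$; in particular we are free to use the suboptimal choice $c = \mathbb E[Y]$ rather than the minimizer $c = \mathbb E[X]$. This is the whole trick.

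With this in hand, first I would write
\[
\sqrt{\mathbb V[X]} \;\leq\; \|X - \mathbb E[Y]\|_{L^2} \;=\; \|(X - Y) + (Y - \mathbb E[Y])\|_{L^2},
\]
and then apply Minkowski's inequality in $L^2$ to split the right-hand side as
\[
\|X - Y\|_{L^2} + \|Y - \mathbb E[Y]\|_{L^2} \;=\; \sqrt{\mathbb E[(X - Y)^2]} + \sqrt{\mathbb V[Y]}.
\]
Rearranging gives $\sqrt{\mathbb V[X]} - \sqrt{\mathbb V[Y]} \leq \sqrt{\mathbb E[(X - Y)^2]}$. Since the hypothesis and conclusion are symmetric in $X$ and $Y$, running the same argument with the roles swapped yields $\sqrt{\mathbb V[Y]} - \sqrt{\mathbb V[X]} \leq \sqrt{\mathbb E[(X - Y)^2]}$, and the two combine to the absolute-value bound in the statement.

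There is essentially no obstacle here — this is a one-line consequence of the fact that $X \mapsto \sqrt{\mathbb V[X]}$ is a seminorm on $L^2$, factoring through the centering map $X \mapsto X - \mathbb E[X]$. The only point worth flagging is that the argument requires $X$ and $Y$ to live on a common probability space so that $X - Y$, and hence $\|X - Y\|_{L^2} = \sqrt{\mathbb E[(X-Y)^2]}$, is well defined — which is exactly the hypothesis of the lemma. An alternative, strictly more laborious route would be to square both sides and reduce the claim to the Cauchy--Schwarz inequality $\mathrm{Cov}(X, Y) \leq \sqrt{\mathbb V[X]\,\mathbb V[Y]}$ after cancellation of the $\mathbb E[X]^2$ and $\mathbb E[Y]^2$ terms; I would only fall back on this if the $L^2$-triangle-inequality presentation turned out to be notationally inconvenient for the surrounding proofs, since the variational argument above is significantly cleaner.
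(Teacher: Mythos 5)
Your proof is correct. The paper itself gives no proof of this lemma --- it is imported verbatim as Lemma F.5 of \citet{simchowitz2019non} --- and your argument (the variational characterization $\sqrt{\mathbb V[X]}=\min_c\|X-c\|_{L^2}$ with the suboptimal center $c=\mathbb E[Y]$, followed by Minkowski and symmetry) is a clean, standard route that is essentially equivalent to the usual one via the seminorm property of $X\mapsto\|X-\mathbb E[X]\|_{L^2}$, which would instead bound $|\,\|X-\mathbb E X\|_{L^2}-\|Y-\mathbb E Y\|_{L^2}|\le\sqrt{\mathbb V[X-Y]}\le\sqrt{\mathbb E[(X-Y)^2]}$.
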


\begin{lemma}[Lemma B.5 in \citet{simchowitz2019non}]
    \label{thm:distributeclipping}
    Let $a_1,a_2,\cdots,a_m$ be a sequence of nonnegative reals and $\varepsilon>0$. Then, \[\clip{\sum_{i=1}^ma_i}{\varepsilon}\leq 2\sum_{i=1}^m\clip{a_i}{\frac{\varepsilon}{2m}}.\]
\end{lemma}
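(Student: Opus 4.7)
The plan is to prove the inequality by a straightforward case split based on whether the unclipped sum $S := \sum_{i=1}^m a_i$ exceeds the threshold $\varepsilon$. If $S < \varepsilon$, then by definition of $\operatorname{clip}$, the left-hand side vanishes while the right-hand side is nonnegative, so the inequality is trivial. The substantive case is $S \ge \varepsilon$, where the left-hand side equals $S$ and I must exhibit that $S \le 2 \sum_{i=1}^m a_i \mathbf{1}\{a_i \ge \varepsilon/(2m)\}$.

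For the substantive case, I would partition the index set $[m]$ into the ``small'' indices $\cI_s := \{i : a_i < \varepsilon/(2m)\}$ and the ``large'' indices $\cI_\ell := \{i : a_i \ge \varepsilon/(2m)\}$. The total mass on $\cI_s$ is at most $m \cdot \varepsilon/(2m) = \varepsilon/2$, which in turn is at most $S/2$ because we are in the regime $S \ge \varepsilon$. Hence the large indices must carry at least half of the total, i.e., $\sum_{i \in \cI_\ell} a_i \ge S/2$. Since $\sum_{i \in \cI_\ell} a_i = \sum_{i=1}^m \clip{a_i}{\varepsilon/(2m)}$, multiplying by two yields $S \le 2 \sum_{i=1}^m \clip{a_i}{\varepsilon/(2m)}$, matching the claim.

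There is essentially no obstacle here: the only subtlety is to notice that one must use $\varepsilon \le S$ to convert the crude bound $\sum_{i \in \cI_s} a_i \le \varepsilon/2$ into a bound proportional to $S$, which is exactly what motivates the factor of $2$ on the right-hand side and the division by $2m$ (rather than $m$) inside the per-term clip. The choice of the threshold $\varepsilon/(2m)$ is tight in this argument: if we clipped individual terms at $\varepsilon/m$, the small indices could together contribute the full $\varepsilon \le S$ and the bookkeeping would fail.
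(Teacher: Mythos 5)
Your proof is correct; the paper itself gives no proof of this lemma and simply cites Lemma B.5 of \citet{simchowitz2019non}, whose argument is exactly your case split (trivial when the sum is below $\varepsilon$, and otherwise the ``small'' indices contribute at most $\varepsilon/2 \le S/2$, so the surviving terms carry at least half the mass). Nothing further is needed.
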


\subsection{Model errors}
Our analysis will mostly be based on the success of following inequalities. 

\begin{lemma}[Good events]
    \label{thm:concentration}
    Let $\iota=\log(SAHK/\delta)$. With probability at least $1-10\delta$, the following inequalities hold for all $s,a,s',h,k$:

    \[|\hat r_h^k(s,a)-r_h(s,a)|\leq\sqrt{\frac{2\mathbb V^{r'\sim R_{s,a,h}}[r']\iota}{n_h^k(s,a)}}+\frac{H\iota}{n_h^k(s,a)},\]

    \[|\hat P_{s,a,h}^k(s')-P_{s,a,h}(s')|\leq\sqrt{\frac{2P_{s,a,h}(s')\iota}{n_h^k(s,a)}}+\frac{\iota}{n_h^k(s,a)},\]
    
    \[|\mathbb E^{s'\sim \hat P_{s,a,h}^k}[V_{h+1}^*(s')]-\mathbb E^{s'\sim P_{s,a,h}^k}[V_{h+1}^*(s')]|\leq\sqrt{\frac{2\mathbb V^{s'\sim P_{s,a,h}}V_{h+1}^*(s')\iota}{n_h^k(s,a)}}+\frac{H\iota}{n_h^k(s,a)},\]

    \[\sqrt{\mathbb V^{s'\sim \hat P_{s,a,h}^k}[V_{h+1}^*(s')]}-\sqrt{\mathbb V^{s'\sim P_{s,a,h}}[V_{h+1}^*(s')]}\leq H\sqrt{\frac{2\iota}{n_h^k(s,a)-1}}.\]

    \[\sqrt{\hat\sigma_h^k(s,a)-(\hat r_h^k(s,a))^2}-\sqrt{\mathbb V^{r'\sim R_{s,a,h}}[r']}\leq H\sqrt{\frac{2\iota}{n_h^k(s,a)-1}}.\]
\end{lemma}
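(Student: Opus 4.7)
The plan is to establish each of the five concentration inequalities separately via Bennett's inequality (Lemma~\ref{thm:bennett}) applied to the appropriate empirical averages, then union-bound over all $(s,a,h,s')$ tuples and over the random visitation count $n \in [K]$ to obtain the uniform-in-$k$ statement. Because $\iota = \log(SAHK/\delta)$, the logarithmic cost of this union bound is absorbed into $\iota$; the constant $10$ in $10\delta$ accommodates the five concentration statements together with two-sided applications of Bennett where needed.

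First I would address the three mean-type bounds (statements 1, 2, 3). Fix $(s, a, h)$ and $n \in [K]$, and consider the first $n$ episodes $k_1 < k_2 < \cdots < k_n$ at which $(s_h^k, a_h^k) = (s, a)$. By the Markov property, conditional on those visitation times the observed rewards $r_h^{k_i}$ are i.i.d.\ from $R_{s,a,h}$ and the successor states $s_{h+1}^{k_i}$ are i.i.d.\ from $P_{s,a,h}$. Applying Bennett's inequality with failure probability $\delta/(SAHK)$ to three sequences then yields: (i) for the rewards, variance $\mathbb V^{r'\sim R_{s,a,h}}[r']$ and range $H$ by Assumption~\ref{asp:bounded_total_reward}; (ii) for the transition indicator $\mathbf 1\{s_{h+1} = s_0\}$, variance $P_{s,a,h}(s_0)(1 - P_{s,a,h}(s_0)) \le P_{s,a,h}(s_0)$ and range $1$; (iii) for the sequence $V_{h+1}^*(s_{h+1}^{k_i}) \in [0, H]$, variance $\mathbb V^{s'\sim P_{s,a,h}}[V_{h+1}^*(s')]$. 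Union-bounding over $(s,a,h,s',n)$ gives the first three inequalities, with the slight loosening $P(1-P)\le P$ producing the form stated in the lemma.

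Next I would address the empirical-standard-deviation bounds (statements 4 and 5). The most direct route is to invoke the Maurer--Pontil concentration for the empirical standard deviation of i.i.d.\ bounded variables: for any i.i.d.\ sequence in $[0, H]$, the gap between the empirical and true standard deviations is at most $H\sqrt{2\iota/(n-1)}$ with probability $1 - \delta/(SAHK)$. Applied to $V_{h+1}^*(s_{h+1}^{k_i})$ and to $r_h^{k_i}$ respectively, with a union bound over $(s,a,h,n)$, this yields statements 4 and 5. Alternatively, one can write the variance as $\mathbb E[X^2] - (\mathbb E[X])^2$, apply Bennett to each of the two averages separately with ranges $H^2$ and $H$, and invoke Lemma~\ref{thm:vardiff} to turn these mean deviations into a difference-of-square-roots bound.

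The main (and only genuinely non-routine) obstacle is the adaptive nature of the sampling times at $(s,a,h)$: the indices $k_1 < \cdots < k_n$ above are stopping times determined by the algorithm's own policy. The standard remedy is a coupling argument: for each $(s,a,h)$, imagine an infinite i.i.d.\ ``tape'' of (reward, next-state) pairs drawn in advance from $R_{s,a,h}\otimes P_{s,a,h}$, with the $i$-th visit to $(s,a,h)$ consuming the $i$-th entry of the tape. This makes the samples genuinely i.i.d.\ for each fixed $n$, so Bennett and Maurer--Pontil apply without modification, and the uniform-in-$n$ union bound absorbs the randomness of the stopping. After this reduction, every remaining step is a routine concentration calculation.
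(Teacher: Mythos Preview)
Your proposal is correct and matches the paper's own proof, which is essentially a one-liner citing Bennett's inequality for the first three bounds and Theorem~10 of \cite{maurer2009empirical} (the Maurer--Pontil empirical-standard-deviation inequality) for the last two. The only minor methodological difference is that the paper also invokes Freedman's inequality (Lemma~\ref{thm:freedman}) to handle the adaptive sampling in the first three bounds, whereas you reduce to the i.i.d.\ case via the tape/coupling argument and then apply Bennett directly; both routes are standard and yield the same result.
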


\begin{proof}
    The first three inequalities can be derived from Theorem~\ref{thm:bennett}, Theorem~\ref{thm:freedman}. The last two inequalities are adapted from Theorem 10 in \cite{maurer2009empirical}.
\end{proof}

\begin{lemma}
    \label{thm:p_concentrate}
    Let $V$ be a function defined on $\State$. Conditioned on the success of \Cref{thm:concentration}, 
    \begin{align*}|\mathbb E^{s'\sim\hat P_{s,a,h}}[V(s')]-\mathbb E^{s'\sim P_{s,a,h}}[V(s')]| \leq \sqrt{\frac{2S\mathbb E^{s'\sim P_{s,a,h}}[V(s')^2]\iota}{n_h^k(s,a)}}+\frac{\max_{s\in\State}|V(s)|S\iota}{n_h^k(s,a)}.
    \end{align*}
\end{lemma}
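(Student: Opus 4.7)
The plan is to bound the difference in expectations coordinate-by-coordinate using the per-entry concentration of $\hat{P}$ from \Cref{thm:concentration}, and then pass from $\ell_1$-style sums back to the $\ell_2$ quantity $\mathbb E^{s'\sim P_{s,a,h}}[V(s')^2]$ via Cauchy--Schwarz. Concretely, I first write
\begin{align*}
\bigl|\mathbb E^{s'\sim\hat P_{s,a,h}}[V(s')]-\mathbb E^{s'\sim P_{s,a,h}}[V(s')]\bigr|
 \le \sum_{s'\in\mathcal S}\bigl|\hat P_{s,a,h}(s')-P_{s,a,h}(s')\bigr|\cdot|V(s')|.
\end{align*}
Applying the second inequality of \Cref{thm:concentration} termwise splits this into a ``variance-like'' piece $\sum_{s'}\sqrt{2P_{s,a,h}(s')\iota/n_h^k(s,a)}\,|V(s')|$ and a ``bias-like'' piece $\sum_{s'}|V(s')|\cdot \iota/n_h^k(s,a)$.

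For the first piece I pull the common factor out and invoke Cauchy--Schwarz against the all-ones vector over $\mathcal S$:
\begin{align*}
\sum_{s'}\sqrt{P_{s,a,h}(s')}\,|V(s')|
\le \sqrt{\,|\mathcal S|\,}\,\Bigl(\sum_{s'}P_{s,a,h}(s')V(s')^2\Bigr)^{1/2}
= \sqrt{S\,\mathbb E^{s'\sim P_{s,a,h}}[V(s')^2]}.
\end{align*}
Multiplying by $\sqrt{2\iota/n_h^k(s,a)}$ yields exactly the first term of the claimed bound. For the second piece, I simply use $|V(s')|\le \max_{s\in\mathcal S}|V(s)|$ and $|\mathcal S|=S$ to obtain $\max_{s}|V(s)|\cdot S\iota/n_h^k(s,a)$, producing the second term. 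Summing the two bounds finishes the proof.

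There is essentially no obstacle here beyond choosing the right inequality, since \Cref{thm:concentration} already provides the $O(\sqrt{P(s')/n})$ coordinate-wise control and the only move is to trade the $\sum_{s'}\sqrt{P(s')}|V(s')|$ sum for $\sqrt{S\,\mathbb E[V^2]}$ via Cauchy--Schwarz; the $\sqrt{S}$ factor is the price of not knowing which coordinates of $V$ carry most of the mass. One small subtlety to flag in the write-up is that the bound holds deterministically on the event of \Cref{thm:concentration} and for an arbitrary (data-independent) $V$ on $\mathcal S$, so no additional union bound over $V$ is needed at this stage; when the lemma is later applied to a data-dependent $V$ such as $V_{h+1}^k$, handling that dependence is the job of the caller, not of this lemma.
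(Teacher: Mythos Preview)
Your proof is correct and follows essentially the same approach as the paper: expand the difference as $\sum_{s'}|\hat P-P|\,|V(s')|$, apply the per-coordinate bound from \Cref{thm:concentration}, then use Cauchy--Schwarz on the $\sqrt{P(s')}\,|V(s')|$ sum and the trivial bound $|V(s')|\le M$ on the remainder. The paper phrases the Cauchy--Schwarz step as $\sum_{s'}|V(s')|\sqrt{2P(s')\iota/n}\le\sqrt{(\sum_{s'}P(s')V(s')^2)(\sum_{s'}2\iota/n)}$, which is the same inequality you wrote.
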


\begin{proof}
    Let $M=\max_{s\in\State}|V(s)|$. Then, by \Cref{thm:concentration},
    \begin{align*}
        &|\mathbb E^{s'\sim\hat P_{s,a,h}}[V(s')]-\mathbb E^{s'\sim P_{s,a,h}}[V(s')]| \\
        = & \left|\sum_{s'\in \State}(\hat P_{s,a,h}^k(s')-P_{s,a,h}(s'))V(s')\right|\\
        \leq & \sum_{s'\in\State}|V(s')|\left(\sqrt{\frac{2P_{s,a,h}(s')\iota}{n_h^k(s,a)}}+\frac{\iota}{n_h^k(s,a)}\right)\\
        \leq &\sqrt{\left(\sum_{s'\in\State}P_{s,a,h}(s')V(s')^2\right)\left(\sum_{s'\in\State}\frac{2\iota}{n_h^k(s,a)}\right)}+\frac{MS\iota}{n_h^k(s,a)}\\
        = & \sqrt{\mathbb E^{s'\sim P_{s,a,h}}[V(s')^2]\cdot\frac{2S\iota}{n_h^k(s,a)}}+\frac{MS\iota}{n_h^k(s,a)}.
    \end{align*}
\end{proof}

\subsection{Variance bounds}

\begin{corollary}
    \label{thm:Vexpect}
    Let $\pi$ be any fixed policy. For any $h\in H$ and $s\in \State$, we have \[\mathbb E^\pi\left[\sum_{h'=h}^H\Var_h^*(s_{h'},a_{h'})\Bigg|s_h=s\right]\leq 160H^2(\log(4(H+1))+1).\]
\end{corollary}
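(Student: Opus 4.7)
The plan is to prove the stronger bound $\mathbb E^\pi[\sum_{h'=h}^H \Var_{h'}^*(s_{h'},a_{h'}) \mid s_h=s] \le 5H^2$, which trivially implies the $160H^2(\log(4(H+1))+1)$ bound stated; the inflated constant and $\log$ factor presumably arise from a concentration-based derivation elsewhere in the paper, but a direct telescoping argument suffices. The key idea is to split $\Var_{h'}^*(s,a) = \mathbb V^{r\sim R_{s,a,h'}}[r] + \mathbb V^{s'\sim P_{s,a,h'}}[V_{h'+1}^*(s')]$ (using independence of reward and transition), then handle the value-function part via a telescoping identity for $(V_{h'}^*)^2$ along the trajectory under $\pi$.

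First, the reward contribution is easy: since $r \in [0,H]$, one has $\mathbb V[r \mid s,a] \le \mathbb E[r^2 \mid s,a] \le H\cdot r_{h'}(s,a)$, so summing and taking expectation under $\pi$ yields $\mathbb E^\pi[\sum_{h'=h}^H \mathbb V[r \mid s_{h'},a_{h'}] \mid s_h=s] \le H\cdot V_h^\pi(s) \le H^2$. The heart of the proof is the value-function part. I will write
\begin{align*}
\mathbb V[V_{h'+1}^*(s_{h'+1}) \mid s_{h'},a_{h'}] = \mathbb E[(V_{h'+1}^*(s_{h'+1}))^2 \mid s_{h'},a_{h'}] - (Q_{h'}^*(s_{h'},a_{h'}) - r_{h'}(s_{h'},a_{h'}))^2,
\end{align*}
and then use the gap-based inequality $(V_{h'}^*)^2 - (Q_{h'}^*)^2 = (V_{h'}^*+Q_{h'}^*)\Delta_{h'} \le 2H\Delta_{h'}$, together with $(Q_{h'}^*-r_{h'})^2 = (Q_{h'}^*)^2 - 2r_{h'}Q_{h'}^* + r_{h'}^2 \ge (Q_{h'}^*)^2 - 2Hr_{h'}$, to derive
\begin{align*}
\mathbb V[V_{h'+1}^*(s_{h'+1}) \mid s_{h'},a_{h'}] \le \mathbb E[(V_{h'+1}^*(s_{h'+1}))^2 \mid s_{h'},a_{h'}] - (V_{h'}^*(s_{h'}))^2 + 2H\Delta_{h'}(s_{h'},a_{h'}) + 2Hr_{h'}(s_{h'},a_{h'}).
\end{align*}

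Now I sum this over $h'=h,\dots,H$ and take $\mathbb E^\pi[\cdot \mid s_h=s]$. The first two terms telescope via the tower property: $\mathbb E^\pi[\sum_{h'=h}^H (\mathbb E[(V_{h'+1}^*)^2 \mid s_{h'},a_{h'}] - (V_{h'}^*)^2) \mid s_h=s] = \mathbb E^\pi[(V_{H+1}^*(s_{H+1}))^2 \mid s_h=s] - (V_h^*(s))^2 = -(V_h^*(s))^2 \le 0$. The $\Delta$-sum is bounded by the performance difference lemma, $\mathbb E^\pi[\sum \Delta_{h'}(s_{h'},a_{h'}) \mid s_h=s] = V_h^*(s) - V_h^\pi(s) \le H$, and the reward-sum is $\mathbb E^\pi[\sum r_{h'}(s_{h'},a_{h'}) \mid s_h=s] = V_h^\pi(s) \le H$. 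Combining, $\mathbb E^\pi[\sum \mathbb V[V_{h'+1}^* \mid \cdot] \mid s_h=s] \le 2H\cdot H + 2H\cdot H = 4H^2$, which added to Step 1 gives the desired $5H^2 \le 160H^2(\log(4(H+1))+1)$.

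The main obstacle I anticipate is simply recognizing the correct choice of potential function $(V_{h'}^*)^2$ and the gap identity $(V^*+Q^*)(V^*-Q^*) \le 2H\Delta$; without these, a naive approach that directly bounds $\sum \tilde r_{h'} = \sum r_{h'} + \sum \Delta_{h'}$ (via law of total variance applied to the martingale whose quadratic variation is $\sum \Var_{h'}^*$) runs into the difficulty that $\sum \Delta_{h'}$ is only bounded by $H^2$ almost surely, which alone yields a looser $O(H^3)$ bound and likely forces the introduction of concentration/truncation arguments that produce the $\log H$ factor appearing in the stated corollary. If the reward and transition are not independent, an extra factor of $2$ from $\mathbb V[r+V^*] \le 2\mathbb V[r] + 2\mathbb V[V^*]$ only affects constants.
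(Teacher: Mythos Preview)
Your proof is correct and yields the sharper bound $5H^2$, which of course implies the stated inequality. Your route differs from the paper's in a meaningful way.

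The paper also proves a tighter bound (in fact $H^2$), but via a different mechanism: it upper-bounds each variance by a squared shifted increment, writing $\mathbb V[r'] = \mathbb E[(r'_{h'} - r_{h'})^2]$ and $\mathbb V[V_{h'+1}^*] \le \mathbb E[(V_{h'+1}^*(s_{h'+1}) - V_{h'}^*(s_{h'}) + r_{h'})^2]$ (using that variance is minimized by centering at the true mean $Q_{h'}^* - r_{h'} \le V_{h'}^* - r_{h'}$). It then invokes a supermartingale second-moment argument to pass from $\mathbb E[\sum (\text{squared increments})]$ to $\mathbb E[(\sum \text{increments})^2]$; the increments telescope to $\sum_{h'} r'_{h'} - V_h^*(s)$, whose square is at most $H^2$ almost surely. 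Your approach instead keeps reward and value parts separate, uses $(V_{h'}^*)^2$ as an explicit potential, and absorbs the $Q_{h'}^*$ vs.\ $V_{h'}^*$ mismatch into a $2H\Delta_{h'}$ term controlled by the performance-difference lemma. This is slightly lossier in constants ($5H^2$ versus $H^2$) but avoids the supermartingale cross-term step, which in the paper requires the independence of $r'$ and $s'$ together with the sign condition $\mathbb E[V_{h'+1}^* - V_{h'}^* + r_{h'}\mid \cdot]\le 0$ and is justified only briefly. Your argument is entirely elementary and more transparent about where each factor of $H$ enters.
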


\begin{proof}
    Recall that $\mathbb E^{s'\sim P_{s,a,h}}[V_{h+1}^*(s')]=Q_h^*(s,a)-r_h(x,a)\leq V_h^*(s)-r_h(s,a)$, so
    \begin{align}
        & \mathbb E^\pi\left[\sum_{h'=h}^H\Var_h^*(s_{h'},a_{h'})\Bigg|s_h=s\right]  \notag \\
        = & \mathbb E^\pi\left[\sum_{h'=h}^H \mathbb V^{r'\sim R_{s_{h'},a_{h'},h'}}[r'] + \sum_{h=h'}^{H}\mathbb{V}^{s'\sim P_{s_{h'},a_{h'},h'}}[V_{h'+1}^*(s')] \Bigg| s_h=s\right] \notag \\
        \leq & \mathbb E^\pi\left[\sum_{h'=h}^H\mathbb (r_{h'}'-r_{h'}(s_{h'},a_{h'}))^2+\sum_{h'=h}^H(V_{h'+1}^*(s_{h'+1})-V_{h'}^*(s_{h'},a_{h'})+r_{h'}(s_{h'}))^2\Bigg| s_h=s\right] \notag \\
        \leq & \mathbb E^\pi \left[\left(\sum_{h'=h}^H(r_{h'}'+V_{h'+1}^*(s_{h'+1})-V_{h'}^*(s_{h'}))\right)^2 \Bigg| s_h=s \right] \label{eq:independence} \\
        \leq & \mathbb E^\pi \left[\left(\sum_{h=h'}^{H} r_{h'}'-V_h^*(s_h)\right)^2 \Bigg| s_h=s\right] \leq H^2\notag,
    \end{align} where \Cref{eq:independence} is because of independence and that \[\mathbb E^{s'\sim P_{s_{h'},a_{h'},h}}[V_{h'+1}^*(s')-V_{h'}^*(s_{h'},a_{h'})+r_{h'}(s_{h'})]\leq 0=\mathbb E^{r'\sim R_{s_{h'},a_{h'},h}}[r'-r_{h'}(s_{h'},a_{h'})].\]
\end{proof}

\begin{lemma}[Lemma 42, \cite{zhou2023sharp}]
    \label{thm:variancebound}
    Let $\pi$ be any fixed policy. For any $\delta>0$, \[\mathbb P^\pi\left[\sum_{h'=h}^H\Var_h^*(s_{h'},a_{h'})\geq 160H^2\log(4(H+1)/\delta)\Bigg| s_h=s\right]\leq \delta.\]
\end{lemma}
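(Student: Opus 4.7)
The plan is to convert the in-expectation bound from \Cref{thm:Vexpect} into a sub-exponential tail bound by controlling the empirical quadratic variation of the natural Bellman martingale. Write $A := \sum_{h'=h}^H \Var_{h'}^*(s_{h'},a_{h'})$, and introduce the martingale difference sequence
\[
D_{h'} := V_{h'+1}^*(s_{h'+1}) + r_{h'}' - Q_{h'}^*(s_{h'},a_{h'}), \qquad h' = h, \ldots, H,
\]
with respect to the trajectory filtration conditioned on $s_h = s$. By \Cref{asp:bounded_total_reward} together with $V_{h'}^*, Q_{h'}^* \in [0,H]$, we have $|D_{h'}| \le 2H$ almost surely, and by construction $\mathbb{E}[D_{h'}^2 \mid \text{history including } (s_{h'},a_{h'})] = \Var_{h'}^*(s_{h'},a_{h'})$, so $\sum_{h'} D_{h'}^2$ is the empirical analog of $A$.

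My first step is to relate $A$ to $\sum_{h'} D_{h'}^2$ via the second direction of \Cref{lem:martingale} applied to $X_{h'} = D_{h'}^2 \in [0, 4H^2]$: with probability at least $1 - \delta/4$,
\[
A \;\le\; 3 \sum_{h'=h}^H D_{h'}^2 \;+\; 4H^2 \log(4/\delta).
\]
The second step bounds $\sum D_{h'}^2$ using the algebraic identity
\[
\sum_{h'=h}^H D_{h'}^2 \;=\; \Bigl(\sum_{h'=h}^H D_{h'}\Bigr)^{2} \;-\; 2\sum_{h''=h+1}^H M_{h''} D_{h''}, \qquad M_{h''} := \sum_{h' < h''} D_{h'},
\]
and invoking Freedman's inequality (\Cref{thm:freedman}) twice: once on $\sum D_{h'}$, whose quadratic variation is $A$, to control the squared partial sum by $O(A \log(4(H+1)/\delta) + H^2 \log^2(4(H+1)/\delta))$; and once on the cross-term $\sum M_{h''} D_{h''}$, whose conditional variance is at most $B^2 A$ with $B := \max_{h''}|M_{h''}|$. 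The envelope $B$ is itself obtained from a union-bounded Freedman application over the $H+1$ stopping times, which is where the $H+1$ factor inside the final logarithm originates. In each Freedman invocation I would set the slack parameter $\epsilon$ so that the probability inflation $2(na^2/\epsilon + 1)$ costs only a universal constant, and the four failure events combine at cost $4$ in the logarithm.

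Putting the pieces together yields a self-referential inequality of the shape
\[
A \;\le\; c_1 H \sqrt{A \log(4(H+1)/\delta)} \;+\; c_2 H^2 \log(4(H+1)/\delta),
\]
which I would close via AM-GM: the $\sqrt{A}$ term is absorbed by a fraction of $A$ on the left-hand side, yielding $A \le 160 H^2 \log(4(H+1)/\delta)$ after tracking constants. The main obstacle is precisely this self-referential closure: both Freedman applications unavoidably produce $\sqrt{A}$-dependent terms, and the envelope $B$ itself grows like $\sqrt{A}\cdot\text{polylog}$, so the parameter $\epsilon$ inside \Cref{thm:freedman} must be tuned carefully---balancing the deviation slack $\sqrt{\epsilon \log(1/\delta)}$ against the probability inflation $2(na^2/\epsilon + 1)$---to prevent the cumulative logarithmic factors from swallowing the slack needed for AM-GM to close the bound at the claimed constant.
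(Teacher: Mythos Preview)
Your proposal has a genuine gap: the self-referential inequality you claim to reach,
\[
A \;\le\; c_1 H \sqrt{A \log(4(H+1)/\delta)} \;+\; c_2 H^2 \log(4(H+1)/\delta),
\]
does \emph{not} follow from the steps you describe. Trace your own arithmetic. After the switch $A \le 3\sum D_{h'}^2 + 4H^2\log(4/\delta)$ and the identity $\sum D_{h'}^2 = (\sum D_{h'})^2 - 2\sum M_{h''}D_{h''}$, you need an upper bound on $(\sum D_{h'})^2 + 2|\sum M_{h''}D_{h''}|$. Freedman on $\sum D_{h'}$ (quadratic variation $A$, range $2H$) gives $|\sum D_{h'}| \lesssim \sqrt{A\log} + H\log$, so $(\sum D_{h'})^2 \lesssim A\log + H^2\log^2$. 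Your own bound $B \lesssim \sqrt{A\log}+H\log$ then makes the cross-term also $\lesssim B(\sqrt{A\log}+H\log) \lesssim A\log + H^2\log^2$. You therefore arrive at
\[
A \;\lesssim\; A\log(4(H+1)/\delta) + H^2\log^2(4(H+1)/\delta),
\]
which is useless: the leading coefficient on the right is $\log(\cdot) > 1$ and cannot be absorbed by AM-GM or any other closure argument. The $\sqrt{A}$ shape you want simply is not produced by squaring a Freedman bound on $\sum D_{h'}$.

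The paper's proof avoids this trap by a different decomposition. Instead of passing to the empirical $\sum D_{h'}^2$, it telescopes $V_{h'}^*(s_{h'})^2$ directly:
\[
\sum_{h'}\Var_{h'}^* \;\le\; \underbrace{\sum_{h'}\bigl(\mathbb E[V_{h'+1}^*(s')^2]-V_{h'+1}^*(s_{h'+1})^2\bigr)}_{\text{martingale in }V^{*2}} \;+\; 2H\underbrace{\sum_{h'}\bigl(V_{h'}^*(s_{h'})-Q_{h'}^*(s_{h'},a_{h'})+r_{h'}\bigr)}_{\text{telescopes to }V_h^*(s)+\text{martingale in }V^*} \;+\; H^2.
\]
The first martingale has conditional variance $\mathbb V[V_{h'+1}^{*2}(s')] \le 4H^2\,\mathbb V[V_{h'+1}^*(s')] \le 4H^2\Var_{h'}^*$, so its quadratic variation is $\le 4H^2 A$ and Freedman yields $O(H\sqrt{A\log})$. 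The second martingale has quadratic variation $\le A$, so after the $2H$ prefactor it is again $O(H\sqrt{A\log})$. Both pieces scale like $H\sqrt{A\log}$, not $A\log$, and that is exactly what allows the quadratic to close at $A \le 160H^2\log(4(H+1)/\delta)$. The essential point you are missing is that the relevant martingales must be built on $V^*$ (or $V^{*2}$) so that the quadratic variation is $\lesssim H^2 A$, rather than on the empirical squares $D_{h'}^2$ whose control costs you an extra factor of $A$.
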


\begin{proof}
    We have 
    \begin{align}
        & \sum_{h'=h}^H\Var_h^*(s_{h'},a_{h'})=\sum_{h'=h}^H\mathbb V^{r'\sim R_{s_{h'},a_{h'},h'}}[r'] + \sum_{h'=h}^H\mathbb V^{s'\sim P_{s_{h'},a_{h'},h'}}[V_{h'+1}^*(s')] \notag \\
        = & \sum_{h'=h}^H\mathbb E^{s'\sim P_{s_{h'},a_{h'},h'}}[V_{h'+1}^*(s')^2]-\sum_{h'=h}^H(Q_{h'}^*(s_{h'},a_{h'})-r_{h'}(s_{h'},a_{h'}))^2+\sum_{h'=h}^HHr_{h'}(s_{h'},a_{h'}) \notag \\
        \leq & \sum_{h'=h}^H(\mathbb E^{s'\sim P_{s_{h'},a_{h'},h'}}[V_{h'+1}^*(s')^2]-V_{h'+1}^*(s_{h'+1})^2) \notag \\
        & +\sum_{h'=h}^H(V_{h'}^*(s_{h'})^2-(Q_{h'}^*(s_{h'},a_{h'})-r_{h'}(s_{h'},a_{h'}))^2)+H^2 \notag \\
        \leq & 2\sqrt{2\sum_{h'=h}^H\mathbb V^{s'\sim P_{s_{h'},a_{h'},h'}}[V_{h'+1}^*(s_{h'+1})^2]\log(1/\delta)}+2\sqrt{H^4\log(1/\delta)}+2H^2\log(1/\delta)\label{eq:freedman1} \\
        & +2H\sum_{h'=h}^H(V_{h'}^*(s_{h'})-Q_{h'}^*(s_{h'},a_{h'})+r_{h'}(s_{h'},a_{h'}))+H^2 \notag \\
        \leq & 4H\sqrt{2\sum_{h'=h}^H\mathbb V^{s'\sim P_{s_{h'},a_{h'},h'}}[V_{h'+1}^*(s')^2])\log(1/\delta)}+5H^2\log(1/\delta)+2H\cdot V_{h}^*(s_h) \notag \\
        &\label{eq:freedman2} + 4H\sqrt{2\sum_{h'=h}^H\mathbb V^{s'\sim P_{s_{h'},a_{h'},h'}}[V_{h'+1}^*(s')^2])\log(1/\delta)}+4H\sqrt{H^2\log(1/\delta)}+4H^2\log(1/\delta) \\
        \leq & 8H\sqrt{2\sum_{h'=h}^H\Var_{h'}^*(s_{h'},a_{h'})} + 15H^2\log(1/\delta), \notag
    \end{align}
    where \Cref{eq:freedman1,eq:freedman2} holds with probability $1-2(H+1)\delta$ each by \Cref{thm:freedman}. Thus, by solving the quadratic equation, \[\mathbb P^\pi\left[\sum_{h'=h}^H\Var_h^*(s_{h'},a_{h'})\geq 160H^2\log(1/\delta)\Bigg| s_h=s\right]\leq 1-4(H+1)\delta.\]
    The proof is finished with rescaling $\delta$.
\end{proof}


\section{Proof of main theorem} \label{sec:ub_proof}

We begin by choosing the universal constants in the algorithm as $c_1=c_2=2,c_3=10$. 

\subsection{Clipping surpluses}
Existing analysis of MDP already shows that $Q_h^k$ and $V_h^k$ are upper bounds of $Q_h^*$ and $V_h^*$ with high probability as expected:
\begin{lemma}
    \label{thm:optimism}
    With probability at least $1-4\delta$, for all $s,a,h,k\in\State \times \Action\times [H]\times[K]$, 
    \[Q_h^k(s,a)\geq Q_h^*(s,a),V_h^k(s)\geq V_h^*(s).\]
\end{lemma}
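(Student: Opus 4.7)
My proof plan is a simultaneous backward induction on $h$ from $H+1$ down to $1$, conditioned on a good event under which the concentration inequalities of \Cref{thm:concentration} hold for every $(s,a,h,k)$. The base case $V_{H+1}^k = V_{H+1}^* = 0$ is immediate, and at each layer the implication $Q_h^k \geq Q_h^* \Rightarrow V_h^k \geq V_h^*$ is a one-line consequence of
\begin{align*}
V_h^k(s) = \max_a Q_h^k(s,a) \;\geq\; Q_h^k(s, \pi^*_h(s)) \;\geq\; Q_h^*(s, \pi^*_h(s)) = V_h^*(s).
\end{align*}
So the only nontrivial step is upgrading $V_{h+1}^k \geq V_{h+1}^*$ to $Q_h^k \geq Q_h^*$.

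Fix $(s,a,h,k)$. If $Q_h^k(s,a) = H$, optimism is automatic since \Cref{asp:bounded_total_reward} forces $Q_h^* \leq H$. Otherwise, I would decompose the optimism gap as
\begin{align*}
Q_h^k(s,a) - Q_h^*(s,a) &= \underbrace{\bigl(\hat r_h^k(s,a) - r_h(s,a)\bigr)}_{T_1} + \underbrace{\mathbb E^{s' \sim \hat P_{s,a,h}^k}\!\bigl[V_{h+1}^k(s') - V_{h+1}^*(s')\bigr]}_{T_2} \\
&\quad + \underbrace{\mathbb E^{s' \sim \hat P_{s,a,h}^k}\!\bigl[V_{h+1}^*(s')\bigr] - \mathbb E^{s' \sim P_{s,a,h}}\!\bigl[V_{h+1}^*(s')\bigr]}_{T_3} + b_h^k(s,a).
\end{align*}
The term $T_2 \geq 0$ by the inductive hypothesis and nonnegativity of $\hat P_{s,a,h}^k$, so it suffices to show $b_h^k(s,a) \geq |T_1| + |T_3|$. \Cref{thm:concentration} bounds each of $|T_1|$ and $|T_3|$ by $\sqrt{2\mathbb V \iota / n_h^k(s,a)} + H \iota / n_h^k(s,a)$ with $\mathbb V$ equal respectively to the true reward variance and $\mathbb V^{s' \sim P_{s,a,h}}[V_{h+1}^*(s')]$.

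The main obstacle is that the bonus $b_h^k$ uses \emph{empirical} standard deviations of the reward and of $V_{h+1}^k$ (not $V_{h+1}^*$) under $\hat P$. For the reward piece, the fifth inequality of \Cref{thm:concentration} directly gives $\sqrt{\hat\sigma_h^k - (\hat r_h^k)^2} \geq \sqrt{\mathbb V^{r' \sim R_{s,a,h}}[r']} - H\sqrt{2\iota/(n_h^k-1)}$, so $c_2 \sqrt{(\hat\sigma_h^k - (\hat r_h^k)^2)\iota/n_h^k}$ dominates $\sqrt{2\mathbb V[r']\iota/n_h^k}$ up to an $O(H\iota/n_h^k)$ correction absorbed by $c_3 H \iota / n_h^k$. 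For the transition piece, I would chain the fourth inequality of \Cref{thm:concentration} (swapping $\mathbb V^{\hat P}[V^*]$ for $\mathbb V^{P}[V^*]$ at cost $H\sqrt{2\iota/(n_h^k-1)}$) with \Cref{thm:vardiff} applied as
\begin{align*}
\left| \sqrt{\mathbb V^{s' \sim \hat P}[V_{h+1}^k(s')]} - \sqrt{\mathbb V^{s' \sim \hat P}[V_{h+1}^*(s')]} \right| \leq \sqrt{\mathbb E^{s' \sim \hat P}\!\bigl[(V_{h+1}^k(s') - V_{h+1}^*(s'))^2\bigr]}.
\end{align*}

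The subtle point, which I expect to be the hardest, is that this residual is only bounded by $H$ a priori, so a naive argument fails to close the loop. I would resolve it via the standard self-bounding trick that leverages the inductive hypothesis $V_{h+1}^k \geq V_{h+1}^*$ and the $\min(\cdot, H)$ clamp on $Q_h^k$: $(V_{h+1}^k - V_{h+1}^*)^2 \leq H(V_{h+1}^k - V_{h+1}^*)$, and then recursively unroll the pointwise difference into future bonuses which themselves contract in $n$ — this is precisely the ``monotonic value propagation'' motif that gives the algorithm its name. With $c_1 = c_2 = 2$ and $c_3 = 10$ fixed in the algorithm, all residuals are absorbed into the $c_3 H\iota/n_h^k$ term, so $b_h^k(s,a) \geq |T_1| + |T_3|$, closing the induction. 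A union bound over $(s,a,h,k)$ and the requisite concentration events — with the $SAHK$ prefactors absorbed into $\iota = \log(SAHK/\delta)$ — then yields the stated $1 - 4\delta$ probability.
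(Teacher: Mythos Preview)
Your backward-induction scaffold and decomposition $Q_h^k - Q_h^* = T_1 + T_2 + T_3 + b_h^k$ are correct, and you correctly locate the crux: the bonus carries $\mathbb V^{\hat P}[V_{h+1}^k]$ rather than $\mathbb V^{P}[V_{h+1}^*]$. But your proposed resolution does not close the argument, and it misidentifies what ``monotonic value propagation'' means.

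After the self-bounding step $(V^k-V^*)^2 \le H(V^k - V^*)$, the residual you must control is $c_1\sqrt{H T_2\,\iota/n}$. ``Recursively unrolling $V_{h+1}^k - V_{h+1}^*$ into future bonuses which contract in $n$'' produces an \emph{upper} bound on $T_2$, but at early episodes or at rarely visited triples those future bonuses are each $\Theta(H)$, so the residual is of order $H^{3/2}\sqrt{\iota/n}$ and is \emph{not} absorbed by $c_3 H\iota/n$ for large $n$. More structurally, your stated target $b_h^k \ge |T_1|+|T_3|$ is false in general: when $V_{h+1}^k$ is constant (e.g.\ the $\min(\cdot,H)$ clip is active everywhere, as in early training) the transition-variance part of $b_h^k$ vanishes while $\mathbb V^{P}[V_{h+1}^*]$ can be $\Theta(H^2)$. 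What \emph{is} provable is $T_2 + b_h^k \ge |T_1|+|T_3|$: the positive $T_2$ is precisely what absorbs the missing variance, via AM--GM $c_1\sqrt{H T_2 \iota/n} \le T_2 + \tfrac{c_1^2}{4}H\iota/n$, not via further unrolling of $T_2$.

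The paper's proof takes a different and cleaner route. The ``monotonic'' in MVP refers to the lemma (from \cite{zhang2024settling}) that the map
\[
v \;\longmapsto\; f_{\hat P,n}(v) \;:=\; \mathbb E^{s'\sim \hat P}[v(s')] + \max\Bigl\{ c_1\sqrt{\tfrac{\mathbb V^{s'\sim \hat P}[v(s')]\,\iota}{n}},\ \tfrac{c_1^2 H\iota}{n} \Bigr\}
\]
is coordinatewise nondecreasing in $v$; the constant requirement $c_3 \ge c_1^2 = 4$ is exactly what makes the algorithm's bonus dominate this $f$. Then $V_{h+1}^k \ge V_{h+1}^*$ yields $f_{\hat P,n}(V_{h+1}^k) \ge f_{\hat P,n}(V_{h+1}^*)$ by monotonicity, and empirical Bernstein gives $f_{\hat P,n}(V_{h+1}^*) \ge P_{s,a,h}V_{h+1}^*$ directly --- no comparison of $\mathbb V^{\hat P}[V_{h+1}^k]$ with $\mathbb V^{P}[V_{h+1}^*]$ is ever attempted.
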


\begin{proof}
    The proof is almost the same as Lemma 8 in \cite{zhang2024settling} with necessary modifications for our constant choices. Since $c_3=10\geq 4=c_1^2$, the monotonic function can be constructed as \[f_{P,n}(v):=\mathbb E^{s\sim P}[v(s)]+\max\left\{2\sqrt{\frac{\mathbb V^{s\sim P}[v(s)]\iota}{n}},\frac{4H\iota}{n}\right\}.\]
\end{proof}

We define clipped surpluses as \begin{align}
        \bar E_h^k(s_h,a_h)=\clip{E_h^k(s_h,a_h)}{c_4\gapmin\max\left\{\frac{\Var_h^*(s,a)}{\min\{H^2,\cVarmax\}}+\frac{1}{H}\right\}}. \label{eq:def_clippedsurplus}
    \end{align} Also, we recursively define \[\bar Q_{H+1}^k(s,a)=\bar V_{H+1}^k(s)=0,\] \[\bar Q_h^k(s,a)=r_h^k(s,a)+\bar E_h^k(s,a)+\mathbb E^{s'\sim P_{s,a,h}}[\bar V_{h+1}^k(s)],\bar V_h^k(s)=\max_a\bar Q_h^k(s,a)\] for $h=H,H-1,\cdots,1$, and $\bar Q_0^k=\bar V_0^k=\mathbb E^{s'\sim\mu}[\bar V_1^k(s')]$.

\begin{lemma}
    \label{thm:smallclip}
    \[\bar{{V}}_h^k(s)\geq{V}_h^{k}(s)-\frac{\gapmin}{3}.\]
\end{lemma}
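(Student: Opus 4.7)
The plan is downward induction on $h$ from $H+1$ down to $1$, exploiting that clipping can reduce a nonnegative surplus by at most the clipping threshold itself. Writing the threshold as $\epsilon_h(s,a) := c_4 \gapmin \bigl(\Var_h^*(s,a) / \min\{H^2, \cVarmax\} + 1/H\bigr)$, the definition in \eqref{eq:def_clippedsurplus} immediately yields $\bar E_h^k(s,a) \geq E_h^k(s,a) - \epsilon_h(s,a)$ for every $(s,a,h,k)$.

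The base case $h = H+1$ is immediate since $\bar V_{H+1}^k \equiv V_{H+1}^k \equiv 0$. For the inductive step, I will fix a state $s$, let $a^\star := \pi^k_h(s) = \arg\max_a Q_h^k(s,a)$ (so that $V_h^k(s) = Q_h^k(s, a^\star)$), and use the one-sided comparison $\bar V_h^k(s) \geq \bar Q_h^k(s, a^\star)$ (the argmax of $\bar Q^k$ may differ, but equality is not needed). Subtracting the Bellman expressions for $Q_h^k$ and $\bar Q_h^k$ and invoking the clipping lower bound gives
$$\bar V_h^k(s) - V_h^k(s) \;\geq\; -\epsilon_h(s, a^\star) + \mathbb E^{s' \sim P_{s, a^\star, h}}\left[\bar V_{h+1}^k(s') - V_{h+1}^k(s')\right].$$
Plugging in the inductive hypothesis inside the expectation and unrolling along the greedy policy $\pi^k$ (induced by $Q^k$, not $\bar Q^k$) yields the single clean inequality
$$V_h^k(s) - \bar V_h^k(s) \;\leq\; \mathbb E^{\pi^k}\!\left[\sum_{h'=h}^H \epsilon_{h'}(s_{h'}, a_{h'}) \;\bigg|\; s_h = s\right].$$

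It therefore suffices to show this expected sum is at most $\gapmin/3$. The $1/H$ part contributes at most $c_4 \gapmin (H - h + 1)/H \leq c_4 \gapmin$. For the variance-ratio part, I will combine two upper bounds on $\mathbb E^{\pi^k}\left[\sum_{h'=h}^H \Var_{h'}^*(s_{h'}, a_{h'}) \mid s_h = s\right]$: by the definition of $\cVarmax$, this conditional expectation is at most $\cVarmax$; by Corollary~\ref{thm:Vexpect} (whose proof actually delivers $\leq H^2$), it is also at most $O(H^2)$. Taking the smaller bound yields $\min\{H^2, \cVarmax\}$ up to an absolute constant, so after dividing by the denominator $\min\{H^2, \cVarmax\}$ the variance-ratio contribution is at most $c_4 \gapmin \cdot C$ for some universal $C$. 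Choosing $c_4 \leq 1/(3(1+C))$ then gives the desired $\gapmin/3$ bound.

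The only step requiring care is making sure the ratio $\mathbb E^{\pi^k}[\sum \Var_{h'}^*]/\min\{H^2, \cVarmax\}$ is universally $O(1)$ — this is handled case-wise: when $\cVarmax \leq H^2$ the bound from the definition dominates, and when $\cVarmax > H^2$ the bound from Corollary~\ref{thm:Vexpect} dominates, so both regimes are covered by a single universal constant. Aside from this bookkeeping, the argument uses neither optimism nor concentration, only the definitions of the clipping threshold, the greedy policy, and the two total-variance bounds.
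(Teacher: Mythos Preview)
Your proposal is correct and follows essentially the same approach as the paper: both use $\bar E_h^k \geq E_h^k - \epsilon_h$, unroll along the greedy policy $\pi^k$, and bound the resulting expected sum of thresholds via the definition of $\cVarmax$ together with the $H^2$ bound from (the proof of) Corollary~\ref{thm:Vexpect}. The paper's write-up subtracts $V_h^{\pi^k}$ from both $\bar V_h^k$ and $V_h^k$ whereas you work directly with their difference, but this is only cosmetic.
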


\begin{proof}
    We have $\bar{ E}_h^k(s,a)\geq  E_h^k(s,a)-\frac{\gapmin\Var_h^*(s,a)}{6(H^2\land\cVarmax)}-\frac{\gapmin}{6H}$ for any pair of $s,a,h$. Thus, 
    \[\begin{aligned}
        &\bar { V}_h^k(s)- V_h^{\pi^k}(s)\\
        =&\mathbb E^{\pi^k}\left[\sum_{h'= h}^H\bar{ E}_h^k(s_h,a_h)\Bigg|s_h=s\right]\\
        \geq&\mathbb E^{\pi^k}\left[\sum_{h'= h}^H{ E}_h^k(s_h,a_h)\Bigg| s_h=s\right]-\frac{\gapmin}{6(H^2\land\cVarmax)}\mathbb E^{\pi^k}\left[\sum_{h'=h}^H\Var_h^*(s_h,a_h)\Bigg| s_h=s\right]-\sum_{h'=h}^H\frac{\gapmin}{6H}\\
        \geq&  V_h^k(s)- V_h^{\pi^k}(s)-\frac{\gapmin}{3},
    \end{aligned}\]

    where the last line is due to Theorem~\ref{thm:Vexpect} and definition of $\cVarmax$.
\end{proof}

This lemma links the half-clipped values $\bar{ V}_h^k$ with the optimal values $ V_h^*$.
\begin{lemma}
    \label{thm:gaplb}
    Conditioned on success of \Cref{thm:optimism}, for any state $s\in \State$ and $h\in[H]$,
    \[{V}_h^*(s) - {V}_h^{\pi^k}(s) \leq \frac32(\bar{V}_h(s)-{V}_h^{\pi^k}(s)).\]
\end{lemma}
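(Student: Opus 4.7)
The plan is to establish a \emph{stronger} pointwise inequality, namely $\bar V_h^k(s)\ge V_h^*(s)$ for every $s\in\cS$ and every $h\in[H+1]$, from which the stated lemma follows in one line. Indeed, once $\bar V_h^k(s)\ge V_h^*(s)\ge V_h^{\pi^k}(s)$, the quantity $\bar V_h^k(s)-V_h^{\pi^k}(s)$ is nonnegative, and
\[
    V_h^*(s)-V_h^{\pi^k}(s)\;\le\;\bar V_h^k(s)-V_h^{\pi^k}(s)\;\le\;\tfrac{3}{2}\bigl(\bar V_h^k(s)-V_h^{\pi^k}(s)\bigr),
\]
which is the claim (in fact, strictly stronger than the stated constant $3/2$ would suggest).

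I would prove $\bar V_h^k\ge V_h^*$ by backward induction on $h$. The base case $h=H+1$ is immediate since $\bar V_{H+1}^k(s)=0=V_{H+1}^*(s)$ by convention. For the inductive step, I use $\bar V_h^k(s)=\max_a \bar Q_h^k(s,a)\ge \bar Q_h^k(s,\pi^*_h(s))$ and expand the Bellman-style recursion
\[
    \bar Q_h^k(s,\pi^*_h(s))\;=\;r_h(s,\pi^*_h(s))+\bar E_h^k(s,\pi^*_h(s))+\mathbb E^{s'\sim P_{s,\pi^*_h(s),h}}\!\left[\bar V_{h+1}^k(s')\right].
\]
Two local facts close the step: (i) the clipping operator, $\clip{x}{\varepsilon}=x\cdot\mathbf 1\{x\ge\varepsilon\}$ with $\varepsilon>0$, always outputs a nonnegative value, so $\bar E_h^k(s,a)\ge 0$ for every $(s,a,h,k)$; (ii) by the inductive hypothesis, $\bar V_{h+1}^k(s')\ge V_{h+1}^*(s')$ pointwise. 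Substituting these and using $r_h(s,\pi^*_h(s))+\mathbb E^{s'}[V_{h+1}^*(s')]=Q_h^*(s,\pi^*_h(s))=V_h^*(s)$ yields $\bar Q_h^k(s,\pi^*_h(s))\ge V_h^*(s)$, completing the induction.

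I do not anticipate a genuine obstacle: the argument is entirely structural and uses only nonnegativity of $\bar E$ plus the Bellman optimality of $V^*$. The main pitfall to avoid is trying to deduce the lemma directly from Lemma~\ref{thm:smallclip}, which only gives $\bar V_h^k(s)-V_h^{\pi^k}(s)\ge V_h^*(s)-V_h^{\pi^k}(s)-\gapmin/3$. That route succeeds when $V_h^*(s)-V_h^{\pi^k}(s)\ge\gapmin$ but breaks down precisely in the range $V_h^*(s)-V_h^{\pi^k}(s)\in(0,\gapmin)$, which can genuinely arise when $\pi^k$ takes a suboptimal action along some trajectory from $(s,h)$ with visitation probability strictly less than one. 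Bypassing Lemma~\ref{thm:smallclip} altogether and instead running the induction on $\bar V_h^k\ge V_h^*$ sidesteps this case analysis and, as a bonus, does not require conditioning on the optimism event of Lemma~\ref{thm:optimism} (which is retained in the statement only for consistency with the surrounding regret decomposition).
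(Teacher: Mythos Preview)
Your proposal is correct and takes a genuinely different, more elementary route than the paper. You observe that the clipping thresholds are strictly positive, so $\bar E_h^k(s,a)\ge 0$ unconditionally; a one-line backward induction using $\bar V_h^k(s)=\max_a\bar Q_h^k(s,a)\ge\bar Q_h^k(s,\pi_h^*(s))$ then yields $\bar V_h^k\ge V_h^*$ outright, which gives the lemma with constant $1$ rather than $3/2$, and without invoking either the optimism event (Lemma~\ref{thm:optimism}) or Lemma~\ref{thm:smallclip}. The paper instead follows the Simchowitz--Jamieson style decomposition: it partitions trajectories by the first step $h'$ at which $\pi^k$ plays a suboptimal action, unrolls both $V^*-V^{\pi^k}$ and $\bar V^k-V^{\pi^k}$ along the (optimal) prefix up to that step, and then at $(h',s_{h'})$ combines Lemma~\ref{thm:smallclip} ($\bar V^k\ge V^k-\gapmin/3$), optimism ($V^k\ge V^*$), and $\Delta_{h'}(s_{h'},a_{h'})\ge\gapmin$ to obtain the factor $3/2$. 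Your argument buys economy and a sharper constant; the paper's argument makes explicit where the $\gapmin/3$ slack from the clipping threshold is consumed, which motivates that particular choice of threshold, but is not logically required for this lemma. Your closing remark about the failure mode of appealing to Lemma~\ref{thm:smallclip} directly is exactly right: the paper handles the problematic range $V_h^*-V_h^{\pi^k}\in(0,\gapmin)$ via the first-suboptimal-step decomposition, which your induction simply sidesteps.
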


\begin{proof}
    The first step in the proof is to recursively expand both sides at all states where an optimal action is taken. Specifically, we let \[\mathcal E_{h^*}=\{\pi_{h'}^k(s_{h'})=a_{h'},h'=h,h+1,\cdots,h^*\},\] and $\mathcal E_{h^*}-\mathcal E_{h^*+1}$ as the set of the trajectories in $\mathcal E_{h^*}$ but not in $\mathcal E_{h^*+1}$ (that is, those trajectories where the first suboptimal action after the $h$-step is at the $(h^*+1)$-th step). Since trajectories are sampled with policy $\pi^k$, $\mathcal E_h$ is the set of all trajectories with $s_h=s$.
    
    We hope to claim that
    \begin{align}
        V_h^*(s)-V_h^{\pi^k}(s)& =\sum_{h'=h+1}^{h^*}\mathbb E^{\pi^k}[\mathbf 1\{\mathcal E_{h'-1}-\mathcal E_{h'}\}(V_{h'}^*(s_{h'})-V_{h'}^{\pi^k}(s_{h'}))|s_h=s]\label{eq:Vstar} \\
        & +\mathbb E^{\pi^k}[\mathbf 1\{\mathcal E_{h^*}\}(V_{h^*+1}^*(s_{h^*+1})-V_{h^*+1}^{\pi^k}(s_{h^*+1})|s_h=s]
    \end{align} and \begin{align}
        \bar V_h^k(s)-V_h^{\pi^k}(s)& \geq\sum_{h'=h+1}^{h^*}\mathbb E^{\pi^k}[\mathbf 1\{\mathcal E_{h'-1}-\mathcal E_{h'}\}(\bar V_{h'}^k(s_{h'})-V_{h'}^{\pi^k}(s_{h'}))|s_h=s]\label{eq:Vbar} \\
        & +\mathbb E^{\pi^k}[\mathbf 1\{\mathcal E_{h^*}\}(\bar V_{h^*+1}^k(s_{h^*+1})-V_{h^*+1}^{\pi^k}(s_{h^*+1}))|s_h=s].
    \end{align}
    These claims are proved by induction on $h^*$ and expanding the last term on event $\mathcal E_{h^*+1}$. For \Cref{eq:Vstar}, we have \begin{align*}
        &V_{h^*}^*(s_{h^*})-V_{h^*}^{\pi^k}(s_{h^*}) \\
        =&Q_{h^*}^*(s_{h^*},\pi_{h^*}^k(s_{h^*}))-Q_{h^*}^{\pi^k}(s_{h^*},\pi_{h^*}^k(s_{h^*})) \\
        =&\mathbb E^{s'\sim P_{s,\pi_{h^*},k}}[V_{h^*+1}^*(s')-V_{h^*+1}^*(s')]
    \end{align*} when the trajectory is in $\mathcal E_{h^*+1}$, and for \Cref{eq:Vbar}, we have \begin{align*}
        &\bar V_{h^*}^k(s)-V_{h^*}^{\pi^k}(s)=\bar Q_{h^*}^k(s,\pi_{h^*}^k(s))-Q_{h^*}^{\pi^k}(s,\pi_{h^*}^k(s)) \\
        =&\bar E_{h^*}^k(s,a)+\mathbb E^{s'\sim P_{s,\pi_{h^*}^k(s),k}}[\bar V_{h^*+1}^k(s')-\bar V_{h^*+1}^{\pi^k}(s')] \\
        \geq & \mathbb E^{s'\sim P_{s,\pi_{h^*}^k(s),k}}[\bar V_{h^*+1}^k(s')-\bar V_{h^*+1}^{\pi^k}(s')].
    \end{align*}

    We will use \Cref{eq:Vstar} and \Cref{eq:Vbar} when $h^*=H$. In this case, the last lines are both zero, so it suffices to show that \[\frac 32(\bar V_{h'}^k(s_{h'})-V_{h'}^{\pi^k}(s_{h'}))\geq V_{h'}^*(s_{h'})-V_{h'}^{\pi^k}(s_{h'})\] on $\mathcal E_{h'-1}-\mathcal E_{h'}$. In fact, since the trajectory is sampled from $\pi^k$, and since $a_{h'}$ is suboptimal, we have that \begin{align*}
        \bar V_{h'}^k(s_{h'})\geq V_h^k(s_{h'})-\frac{\gapmin}{3}=Q_{h'}^k(s_{h'},a_{h'})-\frac{\gapmin}{3} 
        \geq V_{h'}^*(s_{h'})-\frac{\Delta_{h'}(s_{h'},a_{h'})}{3}\geq\frac23V_{h'}^*(s_{h'}),
    \end{align*} where the last inequality is because $\Delta_{h'}(s_{h'},a_{h'})=V_{h'}^{\pi^k}(s_{h'})- Q_{h'}^*(s_{h'},a_{h'})\leq V_{h'}^{\pi^k}(s_{h'})$.\
\end{proof}

\begin{lemma}
    \label{thm:gaplbsurplus}
    Conditioned on success of \Cref{thm:optimism}, if $a=\pi_h^k(s)$, then \[\Delta_h(s,a)\leq \frac32\sum_{h'=h}^H\mathbb E^{\pi^k}[\bar E_{h'}^k(s_{h'},a_{h'})|(s_h,a_h)].\]
\end{lemma}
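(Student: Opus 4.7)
The plan is to use optimism to bound $\Delta_h(s,a)$ by a telescoping sum of unclipped surpluses along the trajectory of $\pi^k$, and then pay for the gap between $E_{h'}^k$ and $\bar E_{h'}^k$ using the fact that any positive $\Delta_h$ is at least $\gapmin$.

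First I would note that, since $a = \pi_h^k(s)$, the greedy construction of $\pi^k$ gives $Q_h^k(s,a) = V_h^k(s)$, and optimism (\Cref{thm:optimism}) yields $V_h^k(s) \ge V_h^*(s) = Q_h^*(s,a) + \Delta_h(s,a)$, so $\Delta_h(s,a) \le Q_h^k(s,a) - Q_h^*(s,a)$. I would then recursively unroll this difference: from the Bellman equation and \eqref{eq:surplus}, $Q_h^k - Q_h^* = E_h^k + \mathbb E^{s' \sim P_{s,a,h}}[V_{h+1}^k(s') - V_{h+1}^*(s')]$, and $V_{h+1}^k(s') - V_{h+1}^*(s') \le Q_{h+1}^k(s', \pi_{h+1}^k(s')) - Q_{h+1}^*(s', \pi_{h+1}^k(s'))$ since $\pi^k$ is greedy with respect to $Q^k$ and $V_{h+1}^*(s') \ge Q_{h+1}^*(s', \pi_{h+1}^k(s'))$. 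Iterating through the horizon yields
\[
\Delta_h(s,a) \;\le\; \sum_{h'=h}^H \mathbb E^{\pi^k}\!\left[E_{h'}^k(s_{h'}, a_{h'}) \,\middle|\, (s_h, a_h) = (s, a)\right].
\]

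Next I would pass from $E_{h'}^k$ to $\bar E_{h'}^k$ using the clipping threshold in \eqref{eq:def_clippedsurplus}. Whenever $\bar E_{h'}^k(s_{h'}, a_{h'}) = 0$, the unclipped $E_{h'}^k(s_{h'}, a_{h'})$ does not exceed the threshold, so pointwise $E_{h'}^k - \bar E_{h'}^k \le c\,\gapmin\bigl(\Var_{h'}^*(s_{h'}, a_{h'})/(H^2 \land \cVarmax) + 1/H\bigr)$ for the clipping constant $c$. Taking conditional expectations and summing over $h' \ge h$, the $1/H$ contributions add to at most $c\,\gapmin$, while the variance contribution equals
\[
\frac{c\,\gapmin}{H^2 \land \cVarmax}\cdot \mathbb E^{\pi^k}\!\left[\sum_{h' \ge h}\Var_{h'}^*(s_{h'}, a_{h'}) \,\middle|\, s_h = s\right] \le c\,\gapmin \cdot \wt O(1),
\]
where the last step combines the definition of $\cVarmax$ (tight when $\cVarmax \le H^2$) with the $\wt O(H^2)$ bound of \Cref{thm:Vexpect} (tight when $\cVarmax > H^2$). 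Choosing $c$ small enough, the total clipping error is at most $\gapmin/3$.

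Combining the two pieces gives $\Delta_h(s,a) \le \sum_{h'=h}^H \mathbb E^{\pi^k}[\bar E_{h'}^k(s_{h'}, a_{h'}) \mid (s_h, a_h) = (s, a)] + \gapmin/3$. When $\Delta_h(s,a) = 0$ the claim is trivial; otherwise $\Delta_h(s,a) \ge \gapmin$, so $\gapmin/3 \le \Delta_h(s,a)/3$ and rearranging recovers the factor $3/2$. The main obstacle I expect is the two-regime control of the variance contribution: because the threshold uses $H^2 \land \cVarmax$, one must simultaneously invoke the definition of $\cVarmax$ and \Cref{thm:Vexpect} so that the coefficient multiplying $\gapmin$ is a universal $\wt O(1)$, which is what ultimately pins down the admissible choice of the clipping constant.
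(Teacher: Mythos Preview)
Your argument is correct, and it is somewhat more direct than the paper's for this particular lemma. The paper introduces an auxiliary clipped value function $\bar V_h^k$ (defined recursively from the $\bar E_h^k$), first proves \Cref{thm:smallclip} ($\bar V_h^k \ge V_h^k - \gapmin/3$), then proves the stronger \Cref{thm:gaplb} ($V_h^*(s) - V_h^{\pi^k}(s) \le \tfrac32(\bar V_h^k(s) - V_h^{\pi^k}(s))$) via a trajectory decomposition by the first step at which $\pi^k$ takes a suboptimal action, and only then deduces \Cref{thm:gaplbsurplus} using $\Delta_h(s,a) \le V_h^*(s)-V_h^{\pi^k}(s)$ together with $\bar V_h^k - V_h^{\pi^k} = \sum_{h'\ge h}\mathbb E^{\pi^k}[\bar E_{h'}^k]$. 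You bypass both the $\bar V$ construction and \Cref{thm:gaplb}: you unroll with \emph{unclipped} surpluses to get $\Delta_h(s,a)\le\sum_{h'\ge h}\mathbb E^{\pi^k}[E_{h'}^k]$, bound the total clipping loss by $\gapmin/3$ exactly as in the proof of \Cref{thm:smallclip}, and then absorb that loss using $\Delta_h(s,a)\ge\gapmin$ whenever $\Delta_h(s,a)>0$. The tradeoff is that the paper's route, precisely because \Cref{thm:gaplb} controls $V_h^*-V_h^{\pi^k}$ (which may be strictly smaller than $\gapmin$), simultaneously yields \Cref{thm:regretub}; your shortcut does not, so the paper's extra machinery is still needed for the regret bound itself. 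One minor remark: the $\wt O(H^2)$ you invoke from \Cref{thm:Vexpect} is actually $\le H^2$ without a log factor (see its proof), which is what makes the fixed clipping constant $c_4=1/6$ work cleanly; with only an $\wt O(H^2)$ bound you would indeed need to adjust $c_4$ as you note.
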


\begin{lemma}
    \label{thm:regretub}
    Conditioned on success of \Cref{thm:optimism}, \[V_0^*-V_0^{\pi^k}\leq \frac32\sum_{h=1}^H\mathbb E^{\pi^k}[\bar E_{h}^k(s_{h},a_{h})].\]
\end{lemma}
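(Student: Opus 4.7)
The plan is a two-step combination: first invoke \Cref{thm:gaplb}, then telescope $\bar V^k - V^{\pi^k}$ along trajectories sampled under $\pi^k$ to arrive at an expression purely in terms of clipped surpluses.

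Step one: I would apply \Cref{thm:gaplb} at $h = 1$ and then take expectation over the initial state $s \sim \mu$. Because the pointwise inequality $V_1^*(s) - V_1^{\pi^k}(s) \leq \frac{3}{2}\bigl(\bar V_1^k(s) - V_1^{\pi^k}(s)\bigr)$ holds for every $s \in \cS$, averaging with respect to $\mu$ gives
\[ V_0^* - V_0^{\pi^k} \leq \frac{3}{2}\bigl(\bar V_0^k - V_0^{\pi^k}\bigr). \]

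Step two: I would prove the telescoping identity $\bar V_0^k - V_0^{\pi^k} = \sum_{h=1}^{H} \mathbb E^{\pi^k}[\bar E_h^k(s_h, a_h)]$, which is exactly the equality already invoked in the proof of \Cref{thm:smallclip}. The underlying single-step recursion comes from subtracting the Bellman equation $Q_h^{\pi^k}(s, \pi_h^k(s)) = r_h(s, \pi_h^k(s)) + \mathbb E^{s' \sim P_{s, \pi_h^k(s), h}}[V_{h+1}^{\pi^k}(s')]$ from the definition $\bar Q_h^k(s, \pi_h^k(s)) = r_h(s, \pi_h^k(s)) + \bar E_h^k(s, \pi_h^k(s)) + \mathbb E^{s' \sim P_{s, \pi_h^k(s), h}}[\bar V_{h+1}^k(s')]$, yielding
\[ \bar V_h^k(s) - V_h^{\pi^k}(s) = \bar E_h^k(s, \pi_h^k(s)) + \mathbb E^{s' \sim P_{s, \pi_h^k(s), h}}\bigl[\bar V_{h+1}^k(s') - V_{h+1}^{\pi^k}(s')\bigr]. \]
Unrolling from $h = 1$ down to $h = H$ with the terminal condition $\bar V_{H+1}^k = V_{H+1}^{\pi^k} = 0$, and then averaging the resulting expression over $s \sim \mu$, produces the stated identity.

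Combining the two steps yields the conclusion. The only subtle point will be justifying the step-two recursion using $\bar V_h^k(s) = \bar Q_h^k(s, \pi_h^k(s))$ rather than the $\max_a \bar Q_h^k(s, a)$ appearing in its formal definition; this is the same implicit convention already used inside \Cref{thm:smallclip}, and corresponds to interpreting $\bar V^k$ as the value of the auxiliary Bellman operator $\bar Q^k$ evaluated along the executed policy $\pi^k$. Once that convention is adopted (or, equivalently, once one notes that only the inequality $\bar V_h^k(s) \geq \bar Q_h^k(s, \pi_h^k(s))$ is needed together with the tightness provided by \Cref{thm:gaplb}), the recursion collapses exactly and the conclusion follows with no further computation.
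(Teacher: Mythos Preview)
Your proposal is correct and matches the paper's proof: apply \Cref{thm:gaplb} at $h=1$, average over $\mu$, and telescope $\bar V^k-V^{\pi^k}$ along $\pi^k$ into a sum of clipped surpluses (the paper presents these two ingredients in the reverse order but the content is identical). One small caveat: your parenthetical alternative---that the inequality $\bar V_h^k(s)\ge\bar Q_h^k(s,\pi_h^k(s))$ alone would suffice---points in the wrong direction for the upper bound you need, so the equality $\bar V_h^k(s)=\bar Q_h^k(s,\pi_h^k(s))$ (the $\pi^k$-evaluation convention you correctly identify) is genuinely required, which is exactly how the paper uses it in \Cref{thm:smallclip} and in the recursion here.
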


\begin{proof} 
    We prove \Cref{thm:gaplbsurplus,thm:regretub} together. By recursively applying \[\bar V_h^k(s)-V_h^{\pi^k}(s)=\bar{{E}}_h^k(s,\pi_h^k(s))+\mathbb{E}^{s'\sim P_{s,\pi_h^k(s),h}}[\bar{ V}_{h+1}^k(s)- V_{h+1}^{\pi^k}(s)],\] we have \[\bar V_h^k(s)-V_h^{\pi^k}(s)=\sum_{h'=h}^H\mathbb E^{\pi^k}\left[\bar E_{h'}^k(s_{h'},a_{h'})|s_h=s\right].\] Then we use \Cref{thm:gaplb} and \[\Delta_h(s,a)=V_h^*(s)-Q_h^*(s,a)\leq V_h^*(s)-V_h^{\pi^k}(s),\quad  V_0^*-V_0^{\pi^k}=\mathbb E^{\pi^k}[V_1^*(s_1)-V_1^{\pi^k}(s_1)],\] for \Cref{thm:gaplbsurplus,thm:regretub}, respectively.
\end{proof}

\subsection{Estimating Surpluses}\label{sec:surplus}

\begin{lemma}
    \label{thm:bonus}
    Conditioned on success of \Cref{thm:concentration}, \[b_h^k(s,a)\leq \frac{2}{H}\mathbb E^{s'\sim \hat P_{s,a,h}^k}[(V_{h+1}^k(s')-V_{h+1}^*(s'))^2]+2\sqrt{\frac{2\Var_h^*(s,a)\iota}{n_h^k(s,a)}}+\frac{20H\iota}{n_h^k(s,a)}.\]
\end{lemma}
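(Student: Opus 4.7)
The plan is to expand the definition of $b_h^k(s,a)$ as the sum of three pieces coming from lines 12--12 of \Cref{alg:mvp} with our choices $c_1=c_2=2, c_3=10$, then bound each piece using the concentration inequalities from \Cref{thm:concentration} and the elementary inequality $\left|\sqrt{\mathbb V[X]}-\sqrt{\mathbb V[Y]}\right|\le\sqrt{\mathbb E[(X-Y)^2]}$ from \Cref{thm:vardiff}. The target structure on the right-hand side has a single ``main'' variance term $2\sqrt{2\Var_h^*(s,a)\iota/n_h^k(s,a)}$ together with a regression term $(2/H)\mathbb E^{s'\sim\hat P_{s,a,h}^k}[(V_{h+1}^k(s')-V_{h+1}^*(s'))^2]$ and a pure low-order residual. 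So the natural organization is: first convert all empirical/approximate variance quantities into $\mathbb V^{s'\sim P_{s,a,h}}[V_{h+1}^*(s')]+\mathbb V^{r'\sim R_{s,a,h}}[r']=\Var_h^*(s,a)$; then merge the main terms; then collect residuals into the $H\iota/n$ bucket.

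For the $V$-variance piece $2\sqrt{\mathbb V^{s'\sim\hat P_{s,a,h}^k}[V_{h+1}^k(s')]\,\iota/n_h^k(s,a)}$, I would apply \Cref{thm:vardiff} with $X=V_{h+1}^k(s')$ and $Y=V_{h+1}^*(s')$ under the measure $\hat P_{s,a,h}^k$ to peel off a $\sqrt{\mathbb E^{s'\sim\hat P_{s,a,h}^k}[(V_{h+1}^k-V_{h+1}^*)^2]}$ error, and then apply the last inequality of \Cref{thm:concentration} to swap $\hat P$ for $P$ in the remaining $\mathbb V[V^*]$ piece at the cost of an $H\sqrt{2\iota/(n_h^k(s,a)-1)}$ residual. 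Using $\sqrt{A+B+C}\le\sqrt A+\sqrt B+\sqrt C$ turns the bound into the sum of three pieces, each of which is multiplied by the common factor $\sqrt{\iota/n_h^k(s,a)}$. The $\sqrt{\mathbb E^{s'\sim\hat P}[(V_{h+1}^k-V_{h+1}^*)^2]}\cdot\sqrt{\iota/n_h^k(s,a)}$ piece is handled by weighted AM-GM $2\sqrt{XY}\le cX+Y/c$ with $c=2/H$, which produces exactly the $(2/H)\mathbb E[(V^k-V^*)^2]$ term in the statement along with an $(H/2)\iota/n$ residual. The cross $H\sqrt{2\iota/(n-1)}\cdot\sqrt{\iota/n}$ term is absorbed into $4H\iota/n$ (valid for $n\ge 2$; the $n=1$ case is handled by the $n_h^k(s,a)\vee 1$ convention and the trivial bound $b_h^k\le O(H)$).

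For the reward-variance piece $2\sqrt{(\hat\sigma_h^k(s,a)-(\hat r_h^k(s,a))^2)\iota/n_h^k(s,a)}$, I would similarly invoke the final inequality of \Cref{thm:concentration} on the sample reward variance to introduce the true $\mathbb V^{r'\sim R_{s,a,h}}[r']$ at the cost of another $H\sqrt{2\iota/(n-1)}$ residual, which again folds into $4H\iota/n$. At this point the two ``main'' square-root terms are $2\sqrt{\mathbb V^{s'\sim P}[V_{h+1}^*]\iota/n}$ and $2\sqrt{\mathbb V^{r'}[r']\iota/n}$; merging them via $\sqrt\alpha+\sqrt\beta\le\sqrt{2(\alpha+\beta)}$ and the decomposition $\Var_h^*(s,a)=\mathbb V^{r'\sim R}[r']+\mathbb V^{s'\sim P}[V_{h+1}^*(s')]$ gives $2\sqrt{2\Var_h^*(s,a)\iota/n_h^k(s,a)}$ exactly as in the statement.

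Finally I would tally the constants on the residual $H\iota/n_h^k(s,a)$ bucket: $H/2$ from the AM-GM step, $4H$ from each of the two $H\sqrt{2\iota/(n-1)}\cdot\sqrt{\iota/n}$ cross terms, and the explicit $c_3H\iota/n=10H\iota/n$ bonus; the sum is at most $18.5H\iota/n\le 20H\iota/n$. The main obstacles are mechanical rather than conceptual: making the AM-GM split with the precise constant $c=2/H$ so that the regression coefficient is the prescribed $2/H$, and carefully turning each $\sqrt{2\iota/(n-1)}\cdot\sqrt{\iota/n}$ into $\iota/n$ with explicit constants (handling the boundary $n\le 1$ separately via the $\lor 1$ convention). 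Everything else reduces to triangle-type inequalities that are already packaged in \Cref{thm:vardiff} and \Cref{thm:concentration}.
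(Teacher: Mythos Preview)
Your proposal is correct and follows essentially the same route as the paper: telescope $\sqrt{\mathbb V^{\hat P}[V_{h+1}^k]}$ via \Cref{thm:vardiff} and the fourth inequality of \Cref{thm:concentration} (you wrote ``last'' there, but you clearly mean the $\hat P\to P$ variance bound), then apply weighted AM--GM with weight $2/H$, handle the reward-variance piece with the actual last inequality of \Cref{thm:concentration}, merge the two main square roots into $2\sqrt{2\Var_h^*\iota/n}$, and tally residuals. Your constant count $0.5+4+4+10=18.5\le 20$ agrees with the paper's (which rounds up to $6+4+10=20$), and the $n_h^k(s,a)\le 1$ boundary is dispatched exactly as in the paper by observing the right-hand side already exceeds $b_h^k$; the mention of $\sqrt{A+B+C}\le\sqrt A+\sqrt B+\sqrt C$ is harmless but unnecessary since the decomposition already lands as a sum at the square-root level.
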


\begin{proof}
    Recall that our choice of bonus in the algorithm is \[b_h^k(s,a)= 2\sqrt{\frac{\mathbb V^{s' \sim \hat P_{s,a,h}^k} [V_{h+1} (s')] \iota}{n_h^k(s,a) }}+2\sqrt{\frac{(\hat\sigma_h^k(s,a)-(\hat r_h^k(s,a))^2) \iota}{n_h^k(s,a)}}+\frac{10H \iota}{n_h^k(s,a)}.\] Since the last term is at least $H$ if $n_h^k(s,a)=1$, it suffices to consider $n_h^k(s,a)\geq 2$. The first term can be bounded using \begin{align*}
         &\sqrt{\mathbb V^{s' \sim \hat P_{s,a,h}^k} [V_{h+1}^k(s')]}= \left(\sqrt{\mathbb V^{s' \sim \hat P_{s,a,h}^k} [V_{h+1}^k(s')]}-\sqrt{\mathbb V^{s' \sim \hat P_{s,a,h}^k} [V_{h+1}^*(s')]}\right) \\ 
         & +\left(\sqrt{\mathbb V^{s' \sim \hat P_{s,a,h}^k} [V_{h+1}^*(s')]}-\sqrt{\mathbb V^{s' \sim P_{s,a,h}^k} [V_{h+1}^*(s')]}\right)+\sqrt{\mathbb V^{s' \sim P_{s,a,h}^k} [V_{h+1}^*(s')]} \\
        \leq & \sqrt{\mathbb E^{s'\sim\hat P_{s,a,h}^k}[(V_{h+1}^k(s')-V_{h+1}^*(s'))^2]}+H\sqrt{\frac{2\iota}{n_h^k(s,a)-1}}+\sqrt{\mathbb V^{s'\sim P_{s,a,h}^k}[V_{h+1}^*(s')]}
    \end{align*} by \Cref{thm:vardiff,thm:concentration}, so \begin{align*}
        & \sqrt{\frac{\mathbb V^{s' \sim \hat P_{s,a,h}^k} [V_{h+1} (s')] \iota}{n_h^k(s,a) }} \\
        \leq &\sqrt{\mathbb E^{s'\sim\hat P_{s,a,h}^k}[(V_{h+1}^k(s')-V_{h+1}^*(s'))^2]\cdot\frac{\iota}{n_h^k(s,a)}}+\frac{2H\iota}{n_h^k(s,a)}+\sqrt{\frac{\mathbb V^{s'\sim P_{s,a,h}^k}[V_{h+1}^*(s')]\iota}{n_h^k(s,a)}} \\
        \leq &\frac{1}{H}\mathbb E^{s'\sim \hat P_{s,a,h}^k}[(V_{h+1}^k(s')-V_{h+1}^*(s'))^2]+\sqrt{\frac{\mathbb V^{s'\sim P_{s,a,h}^k}[V_{h+1}^*(s')]\iota}{n_h^k(s,a)}}+\frac{3H\iota}{n_h^k(s,a)}.
    \end{align*}

    The second term of $b_h^k(s,a)$ can easily be bounded by \Cref{thm:concentration} as \[\sqrt{\frac{(\hat\sigma_h^k(s,a)-(\hat r_h^k(s,a))^2) \iota}{n_h^k(s,a)}}\leq\sqrt{\frac{\mathbb V^{r'\sim R_{s,a,h}}[r']\iota}{n_h^k(s,a)}}+\frac{2H\iota}{n_h^k(s,a)}.\] Thus \[b_h^k(s,a)\leq \frac{2}{H}\mathbb E^{s'\sim \hat P_{s,a,h}^k}[(V_{h+1}^k(s')-V_{h+1}^*(s'))^2]+2\sqrt{\frac{2\Var_h^*(s,a)\iota}{n_h^k(s,a)}}+\frac{20H\iota}{n_h^k(s,a)}.\]
\end{proof}

\begin{lemma}
    \label{thm:vtail}
    Conditioned on success of \Cref{thm:concentration}, 
    \begin{align*}
        V_h^k(s)-V_h^*(s)\leq \mathbb E^{\pi^k}\left[\sum_{h'=h}^HH\land 22H\sqrt{\frac{S\iota}{n_{h'}^k(s_{h'},a_{h'})}}\Bigg | s_h=s\right]
    \end{align*}
\end{lemma}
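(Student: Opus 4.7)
The plan is to first reduce $V_h^k(s) - V_h^*(s)$ to an expected sum of surpluses along the $\pi^k$-trajectory, and then bound each surplus by $H \land 22H\sqrt{S\iota/n_h^k(s,a)}$ under the good event in \Cref{thm:concentration}.

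For the reduction, I would use $V_h^*(s) \geq V_h^{\pi^k}(s)$ to write $V_h^k(s) - V_h^*(s) \leq V_h^k(s) - V_h^{\pi^k}(s)$. Since $\pi_h^k$ is greedy with respect to $Q_h^k$, we have $V_h^k(s) - V_h^{\pi^k}(s) = Q_h^k(s,\pi_h^k(s)) - Q_h^{\pi^k}(s,\pi_h^k(s))$, and the definition \eqref{eq:surplus} of the surplus gives $Q_h^k(s,a) - Q_h^{\pi^k}(s,a) = E_h^k(s,a) + \mathbb{E}^{s'\sim P_{s,a,h}}[V_{h+1}^k(s') - V_{h+1}^{\pi^k}(s')]$. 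Recursively unrolling yields
\[
V_h^k(s) - V_h^*(s) \;\leq\; \mathbb{E}^{\pi^k}\left[\sum_{h'=h}^H E_{h'}^k(s_{h'},a_{h'})\,\Bigg|\,s_h=s\right].
\]

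Next I would prove the pointwise bound $E_h^k(s,a) \leq H \land 22H\sqrt{S\iota/n_h^k(s,a)}$. The trivial side is immediate: since $Q_h^k(s,a) \leq H$ and $r_h(s,a), V_{h+1}^k \geq 0$, we get $E_h^k(s,a) \leq H$. For the other branch, we use $Q_h^k(s,a) \leq \hat{r}_h^k(s,a) + \mathbb{E}^{s'\sim \hat P^k_{s,a,h}}[V_{h+1}^k(s')] + b_h^k(s,a)$, which gives
\[
E_h^k(s,a) \leq (\hat{r}_h^k - r_h) + (\mathbb{E}^{s'\sim \hat P^k} - \mathbb{E}^{s'\sim P})[V_{h+1}^k(s')] + b_h^k(s,a).
\]
The reward term is bounded via \Cref{thm:concentration} using $\Var^{r'\sim R}[r'] \leq H^2$; the transition term is bounded via \Cref{thm:p_concentrate} using $V_{h+1}^k \leq H$, producing the $H\sqrt{S\iota/n_h^k}$ leading order; and the bonus term is bounded by combining $\mathbb{V}^{s'\sim \hat P^k}[V_{h+1}^k] \leq H^2$ and $\hat\sigma_h^k - (\hat r_h^k)^2 \leq H^2$ in the definition of $b_h^k$. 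Collecting pieces and using $S \geq 1$ (so $\sqrt{\iota/n} \leq \sqrt{S\iota/n}$) gives a bound of the form $E_h^k(s,a) \leq C_1 H\sqrt{S\iota/n_h^k} + C_2 SH\iota/n_h^k$ for explicit small constants $C_1,C_2$.

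To combine the two estimates, I would split on the size of $n_h^k(s,a)$: when $n_h^k(s,a) \geq S\iota$ we have $SH\iota/n_h^k \leq H\sqrt{S\iota/n_h^k}$, so both terms merge into $22H\sqrt{S\iota/n_h^k}$ after constant bookkeeping; when $n_h^k(s,a) < S\iota$ the bound $22H\sqrt{S\iota/n_h^k} \geq H$ is vacuous and the trivial bound $E_h^k \leq H$ takes over. Either way the minimum $H \land 22H\sqrt{S\iota/n_h^k(s,a)}$ dominates $E_h^k(s,a)$, and plugging this into the telescoped decomposition finishes the proof. The main obstacle will be verifying the constant $22$: the transition term, the reward term, and the three pieces of $b_h^k$ all contribute coefficients of $\sqrt{\iota/n}$ type, and one must check that none of these blows up — in particular, the bonus analysis used only $V_{h+1}^k \leq H$ rather than the tighter $V_{h+1}^k - V_{h+1}^*$ controlled in \Cref{thm:bonus}, which is the correct (looser) choice for this lemma.
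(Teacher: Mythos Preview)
Your proposal is correct and follows essentially the same route as the paper. The only cosmetic differences are (i) you telescope via $V_h^{\pi^k}$ to reach $\sum_{h'} E_{h'}^k$ whereas the paper recurses directly on $V_{h'}^k - V_{h'}^*$ via $V_h^*(s) \ge Q_h^*(s,\pi_h^k(s))$, and (ii) you bound $(\hat P - P)V_{h+1}^k$ in one shot by \Cref{thm:p_concentrate} whereas the paper splits it into $(\hat P - P)(V_{h+1}^k - V_{h+1}^*)$ and $(\hat P - P)V_{h+1}^*$ before using the same crude $H$ bound; both routes land on $(C_1)H\sqrt{S\iota/n} + (C_2)SH\iota/n$ with $C_1 + C_2 \le 22$ and your version is in fact slightly tighter on the constant.
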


\begin{proof}
    We begin by decomposing $V_h^k(s')-V_h^*(s')$ as follows: \begin{align*}
        & V_h^k(s)-V_h^*(s)\leq V_h^k(s)-Q_h^*(s,\pi_h^k(s)) \\
        = & \hat r_h^k(s,a)+b_h^k(s,a)+\mathbb E^{s'\sim\hat P_{s,a,h}}V_{h+1}^k(s')-r_h(s,a)-\mathbb E^{s'\sim P_{s,a,h}}V_{h+1}^*(s') \\
        = &(\hat r_h^k(s,a)-r_h(s,a))+(\mathbb E^{s'\sim\hat P_{s,a,h}}(V_{h+1}^k(s')-V_{h+1}^*(s'))-\mathbb E^{s'\sim P_{s,a,h}}(V_{h+1}^k(s')-V_{h+1}^*(s'))) \\
        & + (\mathbb E^{s'\sim \hat P_{s,a,h}}V_{h+1}^*(s')-\mathbb E^{s'\sim P_{s,a,h}}V_{h+1}^*(s')) + \mathbb E^{s'\sim P_{s,a,h}}(V_{h+1}^k(s')-V_{h+1}^*(s'))+b_h^k(s,a).
    \end{align*}

    By \Cref{thm:concentration,thm:p_concentrate} (with $V=V_{h+1}^k-V_{h+1}^*$) and definition of $b_h^k(s,a)$, we conclude \begin{align*}
    & V_h^k(s)-V_h^*(s)\leq \mathbb E^{s'\sim P_{s,a,h}}[V_{h+1}^k(s')-V_{h+1}^*(s')]+\left(\sqrt{\frac{2\mathbb V^{r'\sim R_{s,a,h}}[r']\iota}{n_h^k(s,a)}}+\frac{H\iota}{n_h^k(s,a)}\right) \\
    & +\left(\sqrt{\frac{2S\mathbb E^{s'\sim P_{s,a,h}}[(V_{h+1}^k(s')-V_{h+1}^*(s'))^2]\iota}{n_h^k(s,a)}}+\frac{SH\iota}{n_h^k(s,a)}\right) \\
    & +\left(\sqrt{\frac{2\mathbb V^{s'\sim P_{s,a,h}}[V_{h+1}^*(s')^2]\iota}{n_h^k(s,a)}}+\frac{H\iota}{n_h^k(s,a)}\right) \\
    & +\left(2\sqrt{\frac{\mathbb V^{s' \sim \hat P_{s,a,h}^k} [V_{h+1} (s')] \iota}{n_h^k(s,a) }}+2\sqrt{\frac{(\hat\sigma_h^k(s,a)-(\hat r_h^k(s,a))^2) \iota}{n_h^k(s,a)}}+\frac{10H \iota}{n_h^k(s,a)}\right) \\
    \leq & \mathbb E^{s'\sim P_{s,a,h}}[V_{h+1}^k(s')-V_{h+1}^*(s')]+(3\sqrt 2+4)H\sqrt{\frac{S\iota}{n_h^k(s,a)}}+\frac{13SH\iota}{n_h^k(s,a)}.
    \end{align*}

    If $S\iota\leq {n_h^k(s,a)}$ then we have \begin{align}V_h^k(s)-V_h^*(s)\leq\mathbb E^{s'\sim P_{s,a,h}}[V_{h+1}^k(s')-V_{h+1}^*(s')]+\left(H\land22H\sqrt{\frac{S\iota}{n_h^k(s,a)}}\right).\label{eq:localvdif} \end{align} If $\iota>n_h^k(s,a)$, then \Cref{eq:localvdif} also holds since $V_h^k(s)-V_h^*(s)\leq H$. Thus, we can recursively apply \Cref{eq:localvdif} and conclude \begin{align*}
        V_h^k(s)-V_h^*(s)\leq \mathbb E^{\pi^k}\left[\sum_{h'=h}^HH\land 22H\sqrt{\frac{S\iota}{n_{h'}^k(s_{h'},a_{h'})}}\Bigg | s_h=s\right]
    \end{align*}
\end{proof}

\begin{lemma}
    \label{thm:surplusub}
    Conditioned on success of \Cref{thm:concentration}, if $a=\pi_h^k(s)$ then \begin{align*}
        E_h^k(s,a)\leq \left(H\land 5\sqrt{\frac{\Var_h^*(s,a)\iota}{n_h^k(s,a)}}\right)+\mathbb E^{\pi^k}\left[\sum_{h'=h}^H3H^2\land \frac{1500SH^2\iota}{n_{h'}^k(s_{h'},a_{h'})}\Bigg|(s_h,a_h)=(s,a)\right].
    \end{align*}
\end{lemma}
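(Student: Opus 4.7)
The plan is to decompose $E_h^k(s,a)$ along the algorithm's update rule and bound each component using the concentration results and \Cref{thm:bonus}. If $Q_h^k(s,a)$ is clipped to $H$ in the algorithm, then $E_h^k(s,a)\le H$, which is absorbed by the $H\wedge$ part of the leading RHS term; otherwise $Q_h^k(s,a)=\hat r_h^k(s,a)+b_h^k(s,a)+\mathbb E^{s'\sim\hat P^k_{s,a,h}}[V_{h+1}^k(s')]$, giving
\[
E_h^k(s,a)=(\hat r_h^k-r_h)+(\mathbb E^{s'\sim\hat P^k}-\mathbb E^{s'\sim P})V_{h+1}^*+(\mathbb E^{s'\sim\hat P^k}-\mathbb E^{s'\sim P})(V_{h+1}^k-V_{h+1}^*)+b_h^k,
\]
all evaluated at $(s,a,h)$. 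I would bound the reward and $V^*$-transition pieces by \Cref{thm:concentration}, the $(V^k-V^*)$-transition piece by \Cref{thm:p_concentrate} applied with $V=V_{h+1}^k-V_{h+1}^*$, and the bonus by \Cref{thm:bonus}. Using the independence-based identity $\Var_h^*(s,a)=\mathbb V^{r'\sim R_{s,a,h}}[r']+\mathbb V^{s'\sim P_{s,a,h}}[V_{h+1}^*(s')]$ together with $\sqrt a+\sqrt b\le\sqrt{2(a+b)}$, the reward-variance, $V^*$-transition-variance, and bonus-variance pieces combine to at most $(2+2\sqrt2)\sqrt{\Var_h^*(s,a)\iota/n_h^k(s,a)}\le 5\sqrt{\Var_h^*(s,a)\iota/n_h^k(s,a)}$, producing the first RHS term.

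Next I would handle the $(V^k-V^*)^2$-type pieces. For the cross-term $\sqrt{2S\,\mathbb E^{s'\sim P}[(V_{h+1}^k-V_{h+1}^*)^2]\iota/n_h^k(s,a)}$ coming from \Cref{thm:p_concentrate}, I apply the AM-GM inequality $\sqrt{xy}\le\epsilon x+y/(4\epsilon)$ with $x=\mathbb E^{s'\sim P}[(V^k-V^*)^2]/H$ and $y=2SH\iota/n_h^k(s,a)$. For the $\frac{2}{H}\mathbb E^{s'\sim\hat P^k}[(V_{h+1}^k-V_{h+1}^*)^2]$ piece from the bonus, first change measure from $\hat P^k$ back to $P$ via \Cref{thm:p_concentrate} (with target $Y=(V^k-V^*)^2\le H^2$ so that $\mathbb E^P[Y^2]\le H^2\,\mathbb E^P[Y]$), then apply AM-GM once more. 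The net contribution reduces to $\frac{c}{H}\mathbb E^{s'\sim P_{s,a,h}}[(V_{h+1}^k-V_{h+1}^*)^2]+O(SH\iota/n_h^k(s,a))$ for an absolute constant $c$. Invoking \Cref{thm:vtail}, Jensen's inequality, and Cauchy--Schwarz in the form $\bigl(\sum_{h'=h+1}^H x_{h'}\bigr)^2\le H\sum_{h'=h+1}^H x_{h'}^2$, together with the observation $(H\wedge 22H\sqrt{S\iota/n})^2=H^2\wedge 484 H^2S\iota/n$, yields
\[
\frac{1}{H}\mathbb E^{s'\sim P_{s,a,h}}[(V_{h+1}^k-V_{h+1}^*)^2(s')]\le\mathbb E^{\pi^k}\!\bigl[\textstyle\sum_{h'=h+1}^H H^2\wedge 484 H^2S\iota/n_{h'}^k(s_{h'},a_{h'})\bigm|(s_h,a_h)=(s,a)\bigr].
\]
Multiplying by $c$ fits into the $3H^2\wedge 1500 SH^2\iota/n_{h'}^k$ envelope for $h'\ge h+1$, and the accumulated $O(SH\iota/n_h^k(s,a))$ low-order terms are absorbed into the $h'=h$ slot of the sum, since $SH\iota/n_h^k(s,a)\le 1500SH^2\iota/n_h^k(s,a)$ for $H\ge 1$.

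The main obstacle will be tracking constants across the AM-GM steps and the change-of-measure so that they fit within the stated $3$ and $1500$; using conservative AM-GM parameters (e.g.\ $\epsilon\le 1/4$) together with the generous slack in the final coefficients should suffice. A minor technical point is the factorization $\Var_h^*(s,a)=\mathbb V[r']+\mathbb V[V_{h+1}^*]$, which is immediate from the MDP definition since $r'$ and $s'$ are drawn independently from $R_{s,a,h}$ and $P_{s,a,h}$, respectively.
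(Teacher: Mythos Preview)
Your proposal is correct and follows essentially the same approach as the paper's proof: the same decomposition of $E_h^k(s,a)$, the same application of \Cref{thm:concentration}, \Cref{thm:p_concentrate}, and \Cref{thm:bonus}, the same AM-GM treatment of the cross terms, the same change-of-measure from $\hat P$ to $P$ via \Cref{thm:p_concentrate} applied to $(V_{h+1}^k-V_{h+1}^*)^2$, and the same use of \Cref{thm:vtail} with Jensen and Cauchy--Schwarz to bound the squared value-difference term. Your observation that the low-order $O(SH\iota/n_h^k(s,a))$ terms are absorbed into the $h'=h$ slot of the sum, and that the $H\wedge$ comes from the algorithm's clipping, match the paper exactly.
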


\begin{proof}
    Similar to the proof of \Cref{thm:vtail}, \begin{align*}
        &E_h^k(s,a)=Q_h^k(s,a)-r_h(s,a)-\mathbb E^{s'\sim P_{s,a,h}}[V_{h+1}^k(s')] \\
        = & \hat r_h^k(s,a)+b_h^k(s,a)+\mathbb E^{s'\sim\hat P_{s,a,h}}V_{h+1}^k(s')-r_h(s,a)-\mathbb E^{s'\sim P_{s,a,h}}V_{h+1}^k(s') \\
        \leq &|\hat r_h^k(s,a)-r_h(s,a)|+|\mathbb E^{s'\sim\hat P_{s,a,h}}(V_{h+1}^k(s')-V_{h+1}^*(s'))-\mathbb E^{s'\sim P_{s,a,h}}(V_{h+1}^k(s')-V_{h+1}^*(s'))| \\
        & + |\mathbb E^{s'\sim \hat P_{s,a,h}}V_{h+1}^*(s')-\mathbb E^{s'\sim P_{s,a,h}}V_{h+1}^*(s')|+b_h^k(s,a).
    \end{align*}

    We will apply \Cref{thm:concentration,thm:p_concentrate,thm:bonus} to bound each term. So \begin{align*}
        &E_h^k(s,a)\leq \sqrt{\frac{2\mathbb V^{r'\sim R_{s,a,h}}[r']\iota}{n_h^k(s,a)}}+\frac{H\iota}{n_h^k(s,a)} \\
        &+\frac{1}{H}\mathbb E^{s'\sim P_{s,a,h}}[(V_{h+1}^k(s')-V_{h+1}^*(s'))^2]+\frac{SH\iota}{n_h^k(s,a)} \\
        &+\sqrt{\frac{2\mathbb V^{s'\sim P_{s,a,h}}[V_h^*(s')]\iota}{n_h^k(s,a)}}+\frac{H\iota}{n_h^k(s,a)} \\
        &+\frac{2}{H}\mathbb E^{s'\sim\hat P_{s,a,h}^k}[(V_{h+1}^k(s')-V_{h+1}^*(s'))^2]+2\sqrt{\frac{2\Var_h^*(s,a)\iota}{n_h^k(s,a)}}+\frac{20H\iota}{n_h^k(s,a)}.
    \end{align*}

    By \Cref{thm:p_concentrate} (with $V=(V_{h+1}^k-V_{h+1}^*)^2$) again,
    \begin{align}
        &\mathbb E^{s'\sim \hat P_{s,a,h}^k}[(V_{h+1}^k(s')-V_{h+1}^*(s'))^2]-\mathbb E^{s'\sim P_{s,a,h}}[(V_{h+1}^k(s')-V_{h+1}^*(s'))^2] \\
        \leq &\sqrt{\frac{2S\mathbb E^{s'\sim P_{s,a,h}}[(V_{h+1}^k(s')-V_{h+1}^k(s'))^4]\iota}{n_h^k(s,a)}}+\frac{SH^2\iota}{n_h^k(s,a)} \\
        \leq &\sqrt{\frac{2H^2S\mathbb E^{s'\sim P_{s,a,h}}[(V_{h+1}^k(s')-V_{h+1}^k(s'))^2]\iota}{n_h^k(s,a)}}+\frac{SH^2\iota}{n_h^k(s,a)} \\
        \leq &\mathbb E^{s'\sim P_{s,a,h}}[(V_{h+1}^k(s')-V_{h+1}^*(s'))^2]+\frac{2SH^2\iota}{n_h^k(s,a)}.
    \end{align}

    By \Cref{thm:vtail}, \begin{align*}&(V_{h+1}^k(s')-V_{h+1}^*(s'))^2\leq \mathbb E^{\pi^k}\left[\left(\sum_{h'=h+1}^HH\land 22H\sqrt{\frac{S\iota}{n_{h'}^k(s_{h'},a_{h'})}}\right)^2\Bigg|(s_h,a_h)=(s,a)\right] \\
    \leq &\mathbb E^{\pi^k}\left[\sum_{h'=h+1}^HH^3\land \frac{500SH^3\iota}{n_{h'}^k(s_{h'},a_{h'})}\Bigg|(s_h,a_h)=(s,a)\right].\end{align*}

    Hence, \begin{align*}
        &E_h^k(s,a)\leq(2+2\sqrt 2)\sqrt{\frac{\Var_h^*(s,a)\iota}{n_h^k(s,a)}}+\frac{3}{H}\mathbb E^{s'\sim P_{s,a,h}}[(V_{h+1}^k(s')-V_{h+1}^*(s'))^2]+\frac{27SH\iota}{n_h^k(s,a)} \\
        \leq & 5\sqrt{\frac{\Var_h^*(s,a)\iota}{n_h^k(s,a)}}+\mathbb E^{\pi^k}\left[\sum_{h'=h}^H3H^2\land \frac{1500SH^2\iota}{n_{h'}^k(s_{h'},a_{h'})}\Bigg|(s_h,a_h)=(s,a)\right].
    \end{align*}

    The extra ``$H\land$'' part is because $E_h^k(s,a)\leq H$ by definition.
\end{proof}

\subsection{Concentration of visitation count}

This lemma shows that the sum of visiting probabilities is bounded by $n_h^k$.

\begin{lemma}
    \label{thm:visitconcentration}
    With probability at least $1-\delta$,
    \[\sum_{k'=1}^k\mathbb E[\mathbf 1\{(s_{h}^{k'},a_{h}^{k'})=(s,a)\}|\mathcal F_{k'}]\leq3n_h^k(s,a)+\iota\] for all $s,a,h,k$, where $\mathcal F_{k}$ is the $\sigma$-field generated by the first $k-1$ episodes.
\end{lemma}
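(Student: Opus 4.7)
The plan is to reduce \Cref{thm:visitconcentration} to a direct application of the second part of \Cref{lem:martingale}, specialized to the indicator sequence of visits to $(s,a,h)$, followed by a union bound over all state--action--step triples.

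First, I will fix $(s,a,h) \in \State \times \Action \times [H]$ and define $X_{k'} := \mathbf 1\{(s_h^{k'},a_h^{k'}) = (s,a)\}$ for $k' \geq 1$. Let $\mathcal F_{k'}$ denote the $\sigma$-field generated by the first $k'-1$ complete episodes, as in the statement of the lemma. Since the policy $\pi^{k'}$ used in episode $k'$ is $\mathcal F_{k'}$-measurable (it is constructed from the counts and estimates formed before episode $k'$) and $s_1^{k'} \sim \mu$ is drawn independently of $\mathcal F_{k'}$, the conditional mean $Y_{k'} := \mathbb E[X_{k'} \mid \mathcal F_{k'}]$ is exactly the visitation probability of $(s,a)$ at step $h$ under $\pi^{k'}$. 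Each $X_{k'}$ is a Bernoulli random variable with values in $[0,1]$, so \Cref{lem:martingale} (second inequality) with $l=1$ and confidence $\delta/(SAH)$ yields, with probability at least $1-\delta/(SAH)$,
\[
\sum_{k'=1}^k Y_{k'} \;\leq\; 3 \sum_{k'=1}^k X_{k'} + \ln(SAH/\delta) \quad \text{for every } k \geq 1.
\]

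Next, since $n_h^k(s,a)$ counts the visits accumulated \emph{before} episode $k$ (see \Cref{tab:values_algo}), I have $\sum_{k'=1}^k X_{k'} = n_h^k(s,a) + X_k \leq n_h^k(s,a) + 1$. Hence the right-hand side above is at most $3 n_h^k(s,a) + 3 + \ln(SAH/\delta)$, which is bounded by $3 n_h^k(s,a) + \iota$ provided $K \geq e^3$ (for the finite leftover regime the claim is immediate after absorbing the constant $3$ into $\iota = \log(SAHK/\delta)$ by adjusting universal constants). A union bound over the $SAH$ triples $(s,a,h)$ then produces the desired bound uniformly in $(s,a,h,k)$ with probability at least $1-\delta$.

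There is no real obstacle in this argument, as it is essentially a bookkeeping exercise around the Freedman-type bound in \Cref{lem:martingale}. The only points that require a bit of care are (i) verifying that $\pi^{k'}$ is indeed $\mathcal F_{k'}$-measurable so that $Y_{k'}$ is a genuine conditional expectation of an observation made in episode $k'$, and (ii) the off-by-one indexing of $n_h^k(s,a)$ against $\sum_{k'=1}^k X_{k'}$, together with the logarithmic absorption of the resulting additive constants into the shorthand $\iota$.
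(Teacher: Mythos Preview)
Your proposal is correct and follows exactly the paper's approach: the paper's own proof is the single line ``This is a direct consequence of \Cref{lem:martingale},'' and you have simply spelled out the details (the union bound over $(s,a,h)$, the measurability of $\pi^{k'}$, and the off-by-one between $n_h^k(s,a)$ and $\sum_{k'\le k}X_{k'}$) that the paper leaves implicit. Your handling of the additive constant $3$ via the $\log K$ slack in $\iota$ is also standard and consistent with the paper's level of rigor.
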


\begin{proof}
    This is a direct consequence of \Cref{lem:martingale}.
\end{proof}

The next lemma considers a weighted sum over visiting probabilities.

\begin{lemma}
    \label{thm:visitconcentrationV}
    With probability at least $1-2\delta$,
    \[\sum_{k'=1}^k\sum_{h'=1}^hw_h(s_{h'}^{k'},a_{h'}^{k'})\mathbb E[\mathbf 1\{(s_{h}^{k'},a_{h}^{k'})=(s,a)\}|\mathcal F_{k',h'}]\leq9\bar Wn_h^k(s,a)+4H\bar W\iota,\] for all $s,a,h,k$, where we recall the definition \[w_h(s,a)=\Var_h^*(s,a),\bar W=\min\{160H^2\log(4K(H+1)/\delta),\cVarmax\}\] and $\mathcal F_{k,h}$ is the $\sigma$-field generated by the first $(k-1)$ episodes and the first $h$ steps of the $k$-th episode.
\end{lemma}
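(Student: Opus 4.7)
The plan is to recognize the left-hand side as the sum of martingale conditional expectations and compare it to the realized variance sum along trajectories visiting $(s,a)$ at step $h$, which can be controlled using the total-variance tools already developed.

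I would first set $\zeta_{h'}^{k'} := w_{h'}(s_{h'}^{k'}, a_{h'}^{k'}) \mathbf{1}\{(s_h^{k'}, a_h^{k'}) = (s,a)\}$ and $Z_{k'} := \sum_{h'=1}^h \zeta_{h'}^{k'}$. Since $w_{h'}(s_{h'}^{k'}, a_{h'}^{k'}) = \Var_{h'}^*(s_{h'}^{k'}, a_{h'}^{k'})$ is $\mathcal F_{k',h'}$-measurable, each summand in the statement equals $\mathbb E[\zeta_{h'}^{k'}|\mathcal F_{k',h'}]$. Ordering pairs $(k', h')$ lexicographically and applying \Cref{lem:martingale} (via its ``$\exists n$'' form, together with a union bound over $h \in [H]$) yields, with probability at least $1-\delta$,
\[
\sum_{k'=1}^k \sum_{h'=1}^h \mathbb E[\zeta_{h'}^{k'}|\mathcal F_{k',h'}] \;\leq\; 3 \sum_{k'=1}^k Z_{k'} + O(H \bar W \iota).
\]

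Next I would bound $\sum_{k'=1}^k Z_{k'}$ by $\bar W n_h^k(s,a)$ (up to constants). For each $k'$ with $(s_h^{k'}, a_h^{k'}) = (s,a)$, the contribution is $\sum_{h'=1}^h \Var_{h'}^*(s_{h'}^{k'}, a_{h'}^{k'}) \leq \sum_{h'=1}^H \Var_{h'}^*(s_{h'}^{k'}, a_{h'}^{k'})$. By \Cref{thm:variancebound} (applied at the starting step and union-bounded over $k' \in [K]$), this trajectory total variance is at most $160 H^2 \log(4K(H+1)/\delta)$ for every $k'$ with high probability, yielding the $160 H^2 \log$ piece of $\bar W$. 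For the $\cVarmax$ piece, since $\pi^{k'}$ is deterministic, conditioning on $(s_h,a_h)=(s,a)$ is equivalent to conditioning on $s_h = s$, so the definition of $\cVarmax$ gives $\mathbb E[Z_{k'}|\mathcal F_{k'}] \leq \cVarmax \cdot \mathbb P^{\pi^{k'}}[(s_h,a_h)=(s,a)]$. Summing and invoking \Cref{thm:visitconcentration} yields $\sum_{k'} \mathbb E[Z_{k'}|\mathcal F_{k'}] \leq 3 \cVarmax n_h^k(s,a) + \cVarmax \iota$, and a second application of \Cref{lem:martingale} converts this into a high-probability bound on $\sum_{k'} Z_{k'}$. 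Taking the smaller of the two estimates produces $\sum_{k'} Z_{k'} \leq 3 \bar W n_h^k(s,a) + O(\bar W \iota)$.

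Combining the two steps, with probability at least $1 - 2\delta$ we obtain
\[
\sum_{k'=1}^k \sum_{h'=1}^h w_{h'}(s_{h'}^{k'},a_{h'}^{k'}) \mathbb E[\mathbf{1}\{(s_h^{k'}, a_h^{k'}) = (s,a)\}|\mathcal F_{k',h'}] \;\leq\; 9 \bar W n_h^k(s,a) + 4H \bar W \iota,
\]
as claimed.

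The main obstacle is that the two components of $\bar W$ have qualitatively different natures: $160 H^2 \log(\cdot)$ is a high-probability per-trajectory bound (coming from \Cref{thm:variancebound}), whereas $\cVarmax$ controls only an expectation (by definition). Unifying them in a single high-probability statement requires nesting two martingale concentrations---one to convert the $\cVarmax$ expectation into a high-probability sum (via $\sum_{k'} \mathbb E[Z_{k'}|\mathcal F_{k'}]$ and \Cref{thm:visitconcentration}), and another to lift the realized $Z_{k'}$ sum to the finer conditional-expectation sum of interest. Each application of \Cref{lem:martingale} contributes a factor of $3$ to the leading term, giving $9 = 3 \times 3$, and the combined additive overhead, together with the step-wise union bound, absorbs into the $4H \bar W \iota$ term.
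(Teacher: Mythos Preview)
Your overall strategy matches the paper's: one martingale step converts the conditional-expectation sum to $3\sum_{k'}Z_{k'}+\bar W\iota$, and then $\sum_{k'}Z_{k'}$ is bounded by the two pieces of $\bar W$ separately, with the $160H^2\log$ piece handled via \Cref{thm:variancebound} exactly as you describe. The discrepancy is in the $\cVarmax$ piece, where your detour through $\mathbb E[Z_{k'}\mid\mathcal F_{k'}]$ and \Cref{thm:visitconcentration} stacks \emph{three} factors of $3$ rather than two: the chain
\[
\sum_{k'}Z_{k'}\;\le\;3\sum_{k'}\mathbb E[Z_{k'}\mid\mathcal F_{k'}]+H\cVarmax\iota\;\le\;3\cVarmax\bigl(3n_h^k(s,a)+\iota\bigr)+H\cVarmax\iota
\]
already gives leading constant $9$ on $\cVarmax\,n_h^k(s,a)$, and combining with the first step yields $27\bar W\,n_h^k(s,a)$, not $9\bar W\,n_h^k(s,a)$. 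Your count ``$9=3\times 3$'' overlooks that \Cref{thm:visitconcentration} is itself an application of \Cref{lem:martingale} contributing its own factor of $3$; likewise the claim ``$\sum_{k'}Z_{k'}\le 3\bar W n_h^k(s,a)+O(\bar W\iota)$'' does not follow from your argument when $\bar W=\cVarmax$.

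The paper avoids this extra factor by applying \Cref{lem:martingale} to $X_{k'}=Z_{k'}$ with a \emph{richer} filtration---morally $\mathcal G_{k'}=\mathcal F_{k'}\vee\sigma(s_h^{k'})$---under which the visit indicator $\mathbf 1\{(s_h^{k'},a_h^{k'})=(s,a)\}$ is already measurable. Then
\[
\mathbb E[Z_{k'}\mid\mathcal G_{k'}]=\mathbf 1\{(s_h^{k'},a_h^{k'})=(s,a)\}\cdot\mathbb E^{\pi^{k'}}\Bigl[\sum_{h'\le h}\Var_{h'}^*(s_{h'},a_{h'})\,\Big|\,s_h=s\Bigr]\le \cVarmax\cdot\mathbf 1\{(s_h^{k'},a_h^{k'})=(s,a)\},
\]
so summing gives $\sum_{k'}\mathbb E[Z_{k'}\mid\mathcal G_{k'}]\le\cVarmax\,n_h^k(s,a)$ directly, with no recourse to \Cref{thm:visitconcentration}. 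A single martingale step then yields $\sum_{k'}Z_{k'}\le 3\cVarmax\,n_h^k(s,a)+H\cVarmax\iota$, and the final constant is $3\times 3=9$ as stated. Your route is perfectly fine for proving the lemma up to absolute constants (which is all that the downstream $\wt O$ bounds require), but to recover the stated $9$ you need this sharper conditioning rather than the two-stage reduction.
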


\begin{proof}
    By \Cref{lem:martingale} and $w_h(s_{h'}^k,a_{h'}^k)\leq\bar W$, \[\sum_{k'=1}^k\sum_{h'=1}^hw_h(s_{h'}^{k'},a_{h'}^{k'})\mathbb E[\mathbf 1\{(s_{h},a_{h})=(s,a)\}|\mathcal F_{k',h'}]\leq 3\sum_{k'=1}^k\sum_{h'=1}^hw_h(s_{h'}^{k'},a_{h'}^{k'})\mathbf 1\{(s_{h},a_{h})=(s,a)\}+\bar W\iota\] for all $s,a,h,k$. Then, we will bound $\sum_{k'=1}^k\sum_{h'=1}^hw_h(s_{h'}^{k'},a_{h'}^{k'})\mathbf 1\{(s_{h},a_{h})=(s,a)\}$ in two different ways for each term in $\bar{W}$.

    First, we apply \Cref{lem:martingale} again with respect to only the sum over $k'$ with $(s_{h}^{k'},a_{h}^{k'})=(s,a)$. This shows \[\sum_{k'=1}^k\sum_{h'=1}^hw_h(s_{h'}^{k'},a_{h'}^{k'})\mathbf 1\{(s_{h}^{k'},a_{h}^{k'})=(s,a)\}\leq 3\cVarmax n_h^k(s,a)+H\cVarmax\iota.\]

    Second, by \Cref{thm:variancebound}, with probability $1-\delta$, \[\sum_{h'=1}^Hw_h(s_{h'}^{k'},a_{h'}^{k'})\mathbf 1\{(s_h^{k'},a_h^{k'})=(s,a)\}\leq 160H^2\log(4K(H+1)/\delta)\mathbf 1\{(s_h^{k'},a_h^{k'})=(s,a)\}\] for all $k'=1,2,\cdots,K$. Thus, \[\sum_{k'=1}^k\sum_{h'=1}^hw_h(s_{h'}^{k'},a_{h'}^{k'})\mathbf 1\{(s_{h}^{k'},a_{h}^{k'})=(s,a)\}\leq160H^2\log(4K(H+1)/\delta)n_h^k(s,a).\] Hence we conclude \[\sum_{k'=1}^k\sum_{h'=1}^hw_h(s_{h'}^{k'},a_{h'}^{k'})\mathbb E[\mathbf 1\{(s_{h}^{k'},a_{h}^{k'})=(s,a)\}|\mathcal F_{k',h'}]\leq 9\bar Wn_h^k(s,a)+4H\bar W\iota.\]
\end{proof}

\begin{lemma}
    \label{thm:integration_bare}
    Let $f$ be a non-increasing nonnegative function defined on $[1,+\infty)$ with $f(1)\leq M$. Conditioned on success event of \Cref{thm:visitconcentration}, we have \[\sum_{k=1}^Kf(n_h^k(s,a))\mathbb P[(s_{h}^{k},a_{h}^{k})=(s,a)|\mathcal F_{k}]\leq M(\iota+3)+3\int_1^{n_h^K(s,a)}f(x)\mathrm dx.\]
\end{lemma}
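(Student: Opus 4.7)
The plan is to proceed by Abel (summation-by-parts), exploiting the two facts that $f(n_h^k(s,a))$ is non-increasing in $k$ (because $f$ is non-increasing and $n_h^k(s,a)$ is non-decreasing in $k$) and that \Cref{thm:visitconcentration} gives a clean upper bound on the \emph{cumulative} conditional visit probabilities.

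Write $p_k := \mathbb{P}[(s_h^k,a_h^k)=(s,a)\mid \mathcal{F}_k]$, $S_k := \sum_{k'=1}^{k} p_{k'}$, and $A_k := 3n_h^k(s,a) + \iota$. By \Cref{thm:visitconcentration}, $S_k \le A_k$ for every $k$. Summation by parts gives
\begin{align*}
\sum_{k=1}^{K} f(n_h^k(s,a))\, p_k
= f(n_h^K(s,a))\, S_K + \sum_{k=1}^{K-1}\bigl(f(n_h^k(s,a)) - f(n_h^{k+1}(s,a))\bigr)\, S_k.
\end{align*}
The bracketed differences are nonnegative since $f$ is non-increasing and $n_h^k$ is non-decreasing in $k$, so we may replace each $S_k$ by its upper bound $A_k$. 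A second application of summation by parts then telescopes the expression into
\begin{align*}
f(n_h^1(s,a))\, A_1 + \sum_{k=2}^{K} f(n_h^k(s,a))\,(A_k - A_{k-1}).
\end{align*}

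The first term is easy: $n_h^1(s,a)=0$ so $A_1=\iota$, and (using the convention $f(0)\le M$ extended from $f(1)\le M$) the first term is at most $M\iota$. For the sum, note $A_k - A_{k-1} = 3\cdot \mathbf{1}\{(s_h^{k-1},a_h^{k-1})=(s,a)\}$, and when this indicator equals $1$ we have $n_h^k(s,a) = n_h^{k-1}(s,a)+1$. Thus each integer $j\in\{1,2,\dots,n_h^K(s,a)\}$ appears exactly once as $n_h^k(s,a)$ at a ``jump'' time, so the sum collapses to $3\sum_{j=1}^{n_h^K(s,a)} f(j)$. Finally, since $f$ is non-increasing, $f(j)\le \int_{j-1}^{j} f(x)\,\mathrm{d}x$ for $j\ge 2$, and $f(1)\le M$, giving
\begin{align*}
3\sum_{j=1}^{n_h^K(s,a)} f(j) \le 3M + 3\int_{1}^{n_h^K(s,a)} f(x)\,\mathrm{d}x.
\end{align*}
Adding the $M\iota$ from the boundary term yields exactly $M(\iota+3) + 3\int_{1}^{n_h^K(s,a)} f(x)\,\mathrm{d}x$.

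There is no real obstacle beyond bookkeeping; the main care needed is (i) justifying that Abel summation preserves the inequality when upper bounds are substituted---which is valid precisely because the increments $f(n_h^k)-f(n_h^{k+1})$ are nonnegative---and (ii) handling the boundary where $n_h^1=0$ by invoking the bound $f\le M$ on values below $1$, which is implicit in the hypothesis $f(1)\le M$ together with $f$ being non-increasing.
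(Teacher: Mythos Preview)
Your proof is correct and takes a different route than the paper's. The paper splits the sum at the first index $K_0$ where the cumulative conditional probability $n_k' := \sum_{k' \le k} p_{k'}$ crosses $\iota + 3$: for $k \le K_0$ it bounds $f(n_h^k) \le M$ directly, while for $k > K_0$ it uses the concentration $n_h^k \ge (n_k'-\iota)/3$ to get $f(n_h^k) \le f((n_k'-\iota)/3)$ and then compares the resulting Riemann sum in the variable $n_k'$ to an integral. Your Abel-summation approach is cleaner: it avoids the threshold split entirely, applies the concentration bound uniformly after the first summation-by-parts (legitimate precisely because the increments $f(n_h^k)-f(n_h^{k+1})$ are nonnegative), and the reverse summation-by-parts collapses everything to the explicit jump sum $3\sum_{j=1}^{n_h^K} f(j)$, landing exactly on the constant $M(\iota+3)$. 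One minor correction: your justification in (ii) that $f(0)\le M$ is ``implicit in $f(1)\le M$ together with $f$ non-increasing'' has the monotonicity backwards---non-increasing gives $f(0)\ge f(1)$, not $\le$. The lemma statement is itself slightly imprecise here (it declares $f$ on $[1,\infty)$ yet the sum evaluates $f(n_h^1)=f(0)$), and both your argument and the paper's tacitly assume $f\le M$ at every point of evaluation; this holds in all of the paper's applications because the functions used are explicitly capped by $H$ or $3H^2$, so this is a mis-stated convention rather than a gap.
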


\begin{proof}
    Let $n_k'=\sum_{k'=1}^k\mathbb P[(s_{h}^{k'},a_{h}^{k'})=(s,a)|\mathcal F_{k'}]\leq3n_h^k(s,a)+\iota$ and \[K_0=\min\{k:n_k'\geq \iota+3\}.\] (If $n_K'<\iota+3$ then we define $K_0=K$.) Then, \begin{align*}
        & \sum_{k=1}^Kf(n_h^k(s,a))\mathbb P[(s_{h}^{k},a_{h}^{k})=(s,a)|\mathcal F_{k}] \\
        = & \sum_{k=1}^{K_0}f(n_h^k(s,a))\mathbb P[(s_{h}^{k},a_{h}^{k})=(s,a)|\mathcal F_{k}]+\sum_{k=K_0+1}^Kf(n_h^k(s,a))\mathbb P[(s_{h}^{k},a_{h}^{k})=(s,a)|\mathcal F_{k}] \\
        \leq & M\sum_{k=1}^{K_0}\mathbb P[(s_{h}^{k},a_{h}^{k})=(s,a)|\mathcal F_{k}]+\sum_{k=K_0+1}^Kf((n_k'-\iota)/3)(n_k'-n_{k-1}') \\
        \leq & Mn_{K_0}'+\sum_{k=K_0+1}^K\int _{n_{k-1}'}^{n_k'}f((x-\iota)/3)\mathrm dx \\
        \leq &M(\iota+3)+\int_{n_{K_0}'}^{n_K'}f((x-\iota)/3)\mathrm dx \leq M(\iota+3)+\int_{\iota+3}^{n_K'}f((x-\iota)/3)\mathrm dx \\
        = & M(\iota+3)+3\int_1^{(n_K'-\iota)/3}f(x)\mathrm dx \leq M(\iota+3)+3\int_1^{n_h^K(s,a)}f(x)\mathrm dx.
    \end{align*}
\end{proof}

\begin{lemma}
    \label{thm:integration_weighted}
    Let $f$ be a non-increasing nonnegative function defined on $[0,+\infty)$ with upper bound $M$. Conditioned on success event of \Cref{thm:visitconcentrationV}, we have \[\sum_{k=1}^Kf(n_h^k(s,a))\sum_{h'=1}^hw_h(s_{h'}^k,a_{h'}^k)\mathbb P[(s_{h}^{k},a_{h}^{k})=(s,a)|\mathcal F_{k,h'}]\leq M\bar W(4H\iota+9)+9\bar W\int_1^{n_h^K(s,a)}f(x)\mathrm dx.\]
\end{lemma}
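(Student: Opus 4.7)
The plan is to mirror the proof of Lemma~\ref{thm:integration_bare}, replacing the unweighted concentration of Lemma~\ref{thm:visitconcentration} with the weighted version in Lemma~\ref{thm:visitconcentrationV} and carrying the extra factor of $\bar W$ throughout. First I would introduce the cumulative weighted probability
\[ n_k' := \sum_{k'=1}^k \sum_{h'=1}^h w_{h'}(s_{h'}^{k'}, a_{h'}^{k'})\, \mathbb{P}\!\left[(s_h^{k'}, a_h^{k'}) = (s,a) \,\middle|\, \mathcal{F}_{k',h'}\right]. \]
From Lemma~\ref{thm:visitconcentrationV} we have $n_k' \le 9\bar W\, n_h^k(s,a) + 4H\bar W\iota$, which rearranges to $n_h^k(s,a) \ge (n_k' - 4H\bar W\iota)/(9\bar W)$. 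This is the direct analog of the bound $n_h^k(s,a) \ge (n_k' - \iota)/3$ that drove the proof of Lemma~\ref{thm:integration_bare}.

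Next I would define the threshold $K_0 := \min\{k : n_k' \ge \bar W(4H\iota + 9)\}$ (taking $K_0 = K$ if the threshold is never reached) and split the target sum at $K_0$. For $k \le K_0$, use the trivial bound $f(n_h^k(s,a)) \le M$ and observe that the weighted double sum over $h'$ and $k \le K_0$ telescopes to exactly $n_{K_0}'$, contributing at most $M\bar W(4H\iota + 9)$ (modulo the one-step overshoot discussed below). For $k > K_0$, monotonicity of $f$ gives $f(n_h^k(s,a)) \le f\!\big((n_k' - 4H\bar W\iota)/(9\bar W)\big)$, and the standard Riemann-sum comparison yields
\[ \sum_{k = K_0+1}^{K} f\!\left(\tfrac{n_k' - 4H\bar W\iota}{9\bar W}\right)(n_k' - n_{k-1}') \;\le\; \int_{n_{K_0}'}^{n_K'} f\!\left(\tfrac{x - 4H\bar W\iota}{9\bar W}\right) dx. \]
The change of variables $y = (x - 4H\bar W\iota)/(9\bar W)$ converts this to $9\bar W \int_1^{n_h^K(s,a)} f(y)\, dy$, since $(n_{K_0}' - 4H\bar W\iota)/(9\bar W) \ge 1$ by definition of $K_0$, and $(n_K' - 4H\bar W\iota)/(9\bar W) \le n_h^K(s,a)$ by the inverted concentration. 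Adding the two pieces gives the claimed $M\bar W(4H\iota + 9) + 9\bar W \int_1^{n_h^K(s,a)} f(x)\, dx$.

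The only subtle point is a boundary effect at $K_0$: because a single weighted increment $w_{h'}(s_{h'}^k,a_{h'}^k)\mathbb P[\cdots \mid \mathcal F_{k,h'}]$ can be as large as $\bar W$ and summed over $h'$ can reach $H\bar W$, the value $n_{K_0}'$ may overshoot the threshold $\bar W(4H\iota + 9)$ by one jump. This is the same slack already present in Lemma~\ref{thm:integration_bare} (where a jump of size at most $1$ is similarly absorbed), and it is swallowed by the universal constants in the stated bound. No new probabilistic argument is needed beyond invoking Lemma~\ref{thm:visitconcentrationV}; the rest is deterministic calculus.
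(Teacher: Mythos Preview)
Your proposal is correct and follows exactly the approach the paper intends: the paper's own proof consists solely of the remark ``The proof is similar to that of Lemma~\ref{thm:integration_bare},'' and your writeup carries out precisely that adaptation, replacing the unweighted concentration bound with Lemma~\ref{thm:visitconcentrationV} and propagating the factor $9\bar W$ through the threshold, the Riemann-sum comparison, and the change of variables. Your observation about the overshoot at $K_0$ is apt (and in fact the same boundary slack is silently present in the paper's proof of Lemma~\ref{thm:integration_bare}); since the lemma is only ever used inside $\lesssim$ bounds, the extra additive $M\bar W H$ is harmless.
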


The proof is similar to that of \Cref{thm:integration_bare}.

\subsection{Final calculations}

Our calculations are conditioned on success of \Cref{thm:concentration,thm:optimism,thm:visitconcentration,thm:visitconcentrationV}, and they happen simultaneously with probability at least $1-20\delta$.

We begin by analyzing the clipped surplus. By \Cref{thm:distributeclipping,thm:surplusub}, we have \begin{align*}&\bar E_h^k(s,a) \\
\leq &2\clip{H\land 5\sqrt{\frac{\Var_h^*(s,a)\iota}{n_h^k(s,a)}}}{\frac{\gapmin\Var_h^*(s,a)}{24(H^2\land \cVarmax)}} \\
 & +2\clip{\mathbb E^{\pi^k}\left[\sum_{h'=h}^H3H^2\land \frac{1500SH^2\iota}{n_{h'}^k(s_{h'},a_{h'})}\Bigg|(s_h,a_h)=(s,a)\right]}{\frac{\gapmin}{24H^2}} \\
= &2\sum_{s',a'}\mathbf 1\{(s',a')=(s,a)\}\clip{H\land 5\sqrt{\frac{\Var_h^*(s',a')\iota}{n_h^k(s',a')}}}{\frac{\gapmin\Var_h^*(s',a')}{24(H^2\land \cVarmax)}} \\
 & +2\clip{\sum_{s',a'}\sum_{h'=h}^H\mathbb E^{\pi^k}\left[\left(3H^2\land \frac{1500SH^2\iota}{n_{h'}^k(s',a')}\right)\mathbf 1\{(s_{h'},a_{h'})=(s',a')\}\Bigg| (s_h,a_h)=(s,a)\right]}{\frac{\gapmin}{24H^2}} \\
\leq &2\sum_{s',a'}\mathbf 1\{(s',a')=(s,a)\}f_h(s,a;n_h^k(s,a)) \\
 & +4\sum_{s',a'}\sum_{h'=h}^H\clip{\left(3H^2\land \frac{1500SH^2\iota}{n_{h'}^k(s,a)}\right)\mathbb E^{\pi^k}\left[\mathbf 1\{(s_{h'},a_{h'})=(s,a)\}| (s_h,a_h)=(s,a)\right]}{\frac{\gapmin}{48SAH^3}} \\
\leq &2\sum_{s',a'}\mathbf 1\{(s',a')=(s,a)\}f_h(s,a;n_h^k(s,a)) \\
 & +4\sum_{'s,a'}\sum_{h'=h}^H\mathbb E^{\pi^k}\left[\mathbf 1\{(s,a)=(s_{h'},a_{h'})\}| (s_h,a_h)=(s,a)\right]\clip{3H^2\land \frac{1500SH^2\iota}{n_{h'}^k(s,a)}}{\frac{\gapmin}{48SAH^3}} \\
 = & 2\sum_{s',a'}\mathbf 1\{(s',a')=(s,a)\}f_h(s,a;n_h^k(s,a)) \\
 & +4\sum_{s',a'}\sum_{h'=h}^H\mathbb P^{\pi^k}[(s',a')=(s_{h'},a_{h'})|(s_h,a_h)=(s,a)]g(n_{h'}^k(s',a')),\end{align*} where \begin{align*}
     f_h(s,a;x)=\clip{H\land 5\sqrt{\frac{\Var_h^*(s,a)\iota}{x}}}{\frac{\gapmin\Var_h^*(s,a)}{24(H^2\land \cVarmax)}}, g(x)=\clip{3H^2\land \frac{1500SH^2\iota}{x}}{\frac{\gapmin}{48SAH^3}}.
 \end{align*}

\subsubsection{Bounding regret}

By \Cref{thm:regretub}, we have \begin{align*}
    &\Regret(K)\leq \frac 32\sum_{k=1}^K\sum_{h=1}^H\mathbb E^{\pi^k}\left[\bar E_h^k(s_h,a_h)\right]=\frac32\sum_{k=1}^K\sum_{h=1}^H\mathbb E[\bar E_h^k(s_h^k,a_h^k)|\mathcal F_{k}] \\
    \leq & 3\sum_{k=1}^K\sum_{h=1}^K\sum_{s,a}\mathbb E\left[\mathbf 1\{(s,a)=(s_h^k,a_h^k)\}f_h(s,a;n_h^k(s,a))|\mathcal F_{k}\right] \\
    & + 6\sum_{k=1}^K\sum_{h=1}^K\sum_{s,a}\sum_{h'=h}^H\mathbb E\left[\mathbb P^{\pi^k}\left[(s,a)=(s_{h'},a_{h'})| (s_h,a_h)=(s_h^k,a_h^k)\right]g(n_{h'}^k(s,a))|\mathcal F_{k}\right] \\
    = & 3\sum_{s,a}\sum_{h=1}^K\sum_{k=1}^Kf_h(s,a;n_h^k(s,a))\mathbb P[(s,a)=(s_h^k,a_h^k)|\mathcal F_{k}] \\
    & + 6\sum_{h=1}^H\sum_{s,a}\sum_{h'=h}^H\sum_{k=1}^Kg(n_{h'}^k(s,a))\mathbb P\left[(s,a)=(s_{h'}^k,a_{h'}^k)|\mathcal F_{k}\right].
\end{align*}

We will use \Cref{thm:integration_bare} to bound the two sums. 

For the first sum, we have then \begin{align*}
    & \sum_{k=1}^Kf_h(s,a;n_h^k(s,a))\mathbb P[(s,a)=(s_h^k,a_h^k)] \leq H(\iota+3)+3\int_1^{n_h^K(s,a)}f_h(s,a;n_h^k(s,a))\mathrm dx.
\end{align*}

When $\Delta_h(s,a)>0$, we bound the integration by \begin{align*}
    & \int_1^{n_h^K(s,a)}f_h(s,a;n_h^k(s,a))\mathrm dx=\int_1^{n_h^K(s,a)}\clip{5\sqrt{\frac{\Var_h^*(s,a)\iota}{x}}}{\frac{\gapmin\Var_h^*(s,a)}{24(H^2\land\cVarmax)}}\mathrm dx \\
    \leq & \int_0^{n_h^K(s,a)}5\sqrt{\frac{\Var_h^*(s,a)\iota}{x}}\mathrm dx=10\sqrt{\Var_h^*(s,a)n_h^K(s,a)\iota}.
\end{align*}

When $\Delta_h(s,a)=0$, we bound by \begin{align*}
    &\int_1^{n_h^K(s,a)}f_h(s,a;n_h^k(s,a))\mathrm dx=\int_1^{n_h^K(s,a)}\clip{5\sqrt{\frac{\Var_h^*(s,a)\iota}{x}}}{\frac{\gapmin\Var_h^*(s,a)}{24(H^2\land\cVarmax)}}\mathrm dx \\
    \leq &\int_0^{+\infty}\clip{5\sqrt{\frac{\Var_h^*(s,a)\iota}{x}}}{\frac{\gapmin\Var_h^*(s,a)}{24(H^2\land\cVarmax)}}\mathrm dx \\
    = & \int_0^{(120(H^2\land\cVarmax))^2\iota/\gapmin^2\Var_h^*(s,a)}5\sqrt{\frac{\Var_h^*(s,a)\iota}{x}}\mathrm dx \\
    = & \frac{1200(H^2\land\cVarmax)\iota}{\gapmin}.
\end{align*}

For the second sum, we bound similarly that \begin{align*}
    & \sum_{h'=h}^Hg(n_{h'}^k(s,a))\mathbb P\left[(s,a)=(s_{h'}^k,a_{h'}^k)\right]\\
    \leq & 3H^2(\iota+3)+3\int_1^{n_h^K(s,a)}\clip{\frac{1500SH^2\iota}{x}}{\frac{\gapmin}{48SAH^3}}\mathrm dx \\
    \leq & 3H^2(\iota+3)+3\int_1^{72000S^2AH^5\iota/\gapmin}\frac{1500SH^2\iota}{x}\mathrm dx \\
    = & 3H^2(\iota+3)+4500SH^2\iota\log(72000S^2AH^5\iota/\gapmin).
\end{align*}

Thus, \begin{align}
\Regret(K)\leq &3\sum_{(s,a,h)\in\Zsub}(4H\iota+30\sqrt{\Var_h^*(s,a)n_h^K(s,a)\iota}) + 3\sum_{(s,a,h)\in Zopt}\left(4H\iota+\frac{3600(H^2\land\cVarmax)\iota}{\gapmin}\right) \notag \\
& + 6SAH^2(12H^2+4500SH^2\iota\log(72000S^2AH^5\iota/\gapmin))\notag\\
 \leq& 90\sum_{(s,a,h)\in\Zsub}\sqrt{\Var_h^*(s,a)n_h^K(s,a)\iota} +10800\sum_{(s,a,h)\in Zopt}\frac{(H^2\land\cVarmax)\iota}{\gapmin} \notag\\
 &+27000S^2AH^4\iota\log(72000S^2AH^5\iota/\gapmin)+96SAH^5\iota.\label{eq:regret}
 \end{align}

\subsubsection{Lower-bounding visitation count}

Recall the lower bound in \Cref{thm:gaplbsurplus}. We begin by taking a weighted sum over all states visited during the algorithm:
\begin{align*}
    &\sum_{k=1}^K\sum_{h=1}^Hw_h(s_h^k,a_h^k)\Delta_h(s_h^k,a_h^k)\leq\frac32\sum_{k=1}^K\sum_{h=1}^Hw_h(s_h^k,a_h^k)\sum_{h'=h}^H\mathbb E[\bar E_{h'}^k(s_{h'}^k,a_{h'}^k)|\mathcal F_{k,h}] \\
    \leq & 3 \sum_{k=1}^K\sum_{h=1}^Hw_h(s_h^k,a_h^k)\sum_{h'=h}^H\sum_{s,a}\mathbb E\left[\mathbf 1\{(s,a)=(s_{h'}^k,a_{h'}^k)\}f_h(s,a;n_{h'}^k(s,a))|\mathcal F_{k,h}\right] \\
    & + 6\sum_{k=1}^K\sum_{h=1}^Kw_h(s_h^k,a_h^k)\sum_{h'=h}^H\sum_{s,a}\sum_{h^*=h'}^H\mathbb E\left[\mathbb P^{\pi^k}\left[(s,a)=(s_{h^*},a_{h^*})| (s_{h'},a_{h'})=(s_{h'}^k,a_{h'}^k)\right]g(n_{h^*}^k(s,a))|\mathcal F_{k,h}\right] \\
    = & 3 \sum_{s,a}\sum_{h'=1}^H\sum_{k=1}^Kf_h(s,a;n_{h'}^k(s,a))\sum_{h=1}^{h'}w_h(s_h^k,a_h^k)\mathbb P\left[(s,a)=(s_{h'}^k,a_{h'}^k)|\mathcal F_{k,h}\right] \\
    & + 6\sum_{s,a}\sum_{h^*=1}^H\sum_{k=1}^Kg(n_{h^*}^k(s,a))\sum_{h=1}^{h^*}\sum_{h'=h}^{h^*}w_h(s_h^k,a_h^k)\mathbb P[(s,a)=(s_{h^*}^k,a_{h^*}^k)| \mathcal F_{k,h}] \\
    = & 3 \sum_{s,a}\sum_{h'=1}^H\sum_{k=1}^Kf_h(s,a;n_{h'}^k(s,a))\sum_{h=1}^{h'}w_h(s_h^k,a_h^k)\mathbb P\left[(s,a)=(s_{h'}^k,a_{h'}^k)|\mathcal F_{k,h}\right] \\
    & + 6H\sum_{s,a}\sum_{h^*=1}^H\sum_{k=1}^Kg(n_{h^*}^k(s,a))\sum_{h=1}^{h^*}w_h(s_h^k,a_h^k)\mathbb P[(s,a)=(s_{h^*}^k,a_{h^*}^k)| \mathcal F_{k,h}].
\end{align*}

Take $w_h(s,a)=\Var_h^*(s,a)$, it follows from \Cref{thm:integration_weighted} that
\begin{align}
    & \sum_{s,a}\sum_{h=1}^Hw_h(s,a)\Delta_h(s,a)n_h^k(s,a)=\sum_{k=1}^K\sum_{h=1}^Hw_h(s_h^k,a_h^k)\Delta_h(s_h^k,a_h^k) \notag \\
    \leq &3\sum_{s,a}\sum_{h'=1}^H\bar W\left(3H^2(4H\iota+9)+9\int_1^{n_h^K(s,a)}f_{h'}(s,a;x)\mathrm dx\right) \notag \\
    & +6H\sum_{s,a}\sum_{h^*=1}^H\bar W\left(3H^2(4H\iota+9)+9\int_1^{n_h^K(s,a)}f_{h^*}(s,a;x)\mathrm dx\right) \notag \\
    \leq &351\bar WSAH^5\iota+270\bar W\sum_{(s,a,h)\in\Zsub}\sqrt{\Var_h^*(s,a)n_h^K(s,a)\iota} \notag \\
    & +\frac{32400|\Zopt|\bar W(H^2\land\cVarmax)\iota}{\gapmin}+81000S^2AH^4\iota\bar W\log(72000S^2AH^5\iota\gapmin). \label{eq:regretubconstructed}
\end{align}

For notational simplicity, we denote
\[R_0=\sum_{(s,a,h)\in\Zsub}\sqrt{\Var_h^*(s,a)\Delta_h(s,a)\iota}.\]
By \Cref{eq:regretubconstructed} and Cauchy-Schwarz inequality, \begin{align*}
    & \bar W\left(351SAH^5\iota+270R_0 
     +\frac{32400|\Zopt|(H^2\land\cVarmax)\iota}{\gapmin}+81000S^2AH^4\iota\log(72000S^2AH^5\iota\gapmin)\right) \\
     & \cdot\left(\sum_{(s,a,h)\in\Zsub}\frac{\iota}{\Delta_h(s,a)}\right)\geq R_0^2.
\end{align*}

It follows by solving the quadratic equation that \[R_0\leq\sum_{(s,a,h)\in\Zsub}\frac{540\bar W\iota}{\Delta_h(s,a)}+2SAH^5\iota 
     +\frac{120|\Zopt|(H^2\land\cVarmax)\iota}{\gapmin}+300S^2AH^4\iota\log(72000S^2AH^5\iota\gapmin).\]

From \Cref{eq:regret}, \begin{align*}
    & \Regret(K)\leq 90R_0+96SAH^4\iota+10800\sum_{(s,a,h)\in Zopt}\frac{(H^2\land\cVarmax)\iota}{\gapmin}+27000S^2AH^3\iota\log(72000S^2AH^5\iota/\gapmin) \\
    \leq & \sum_{(s,a,h)\in\Zsub}\frac{48600\bar W\iota}{\Delta_h(s,a)}+\frac{21600|\Zopt|(H^2\land\cVarmax)\iota}{\gapmin}+270000S^2AH^4\iota\log(10SAH\iota/\gapmin) + 276SAH^5\iota,
\end{align*} with probability at least $1-20\delta$, as we have claimed in the main text.

Thus we have proved the following main theorem.

\begin{theorem} [Formal statement of \Cref{thm:upper_bound_informal}] \label{thm:upper_bound}
    Suppose we run MVP algorithm with universal constants $c_1=c_2=2,c_3=10$. For any MDP instance $\mathcal M$ satisfying \Cref{asp:bounded_total_reward} and any confidence parameter $\delta>0$, any episode number $K\geq 1$, with probability at least $1-20\delta$, \begin{align*}
    \Regret(K)&\lesssim\sum_{(s,a,h)\in\Zsub}\frac{(H^2\log(HK/\delta)\land\cVarmax)\log(SAHK/\delta)}{\Delta_h(s,a)} \\
    &+\frac{|\Zopt|(H^2\land \cVarmax)\log(SAHK/\delta)}{\gapmin} \\ 
    &+S^2AH^4\log(SAHK/\delta)\log(SAH\gapmin^{-1}\log(SAHK/\delta)) \\
    &+SAH^5\log(SAHK/\delta).
    \end{align*}
\end{theorem}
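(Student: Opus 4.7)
The plan is to combine MVP's optimism with a refined weighted-sum argument over suboptimality gaps. Conditioning on the good events of \Cref{thm:optimism,thm:concentration,thm:visitconcentration,thm:visitconcentrationV}, the per-episode regret is driven by the surplus $E_h^k(s,a)$ from \Cref{eq:surplus}. The first step is to decompose $V_0^* - V_0^{\pi^k}$ into a sum of expected \emph{clipped} surpluses $\bar E_h^k$ at two distinct thresholds: a variance-proportional cutoff of order $\gapmin\,\Var_h^*(s,a)/(H^2\wedge\cVarmax)$ for the leading $\sqrt{\Var_h^*\iota/n}$ piece, and a flat cutoff of order $\gapmin/H$ for the propagated lower-order term. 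The choice is calibrated so that \Cref{thm:Vexpect} bounds the aggregated discarded mass along any trajectory by a constant fraction of $\gapmin$, which keeps the half-clipped value $\bar V_h^k$ within $\gapmin/3$ of $V_h^k$ and hence (via optimism) allows the true regret to be bounded by the clipped surpluses up to a constant factor, mirroring \Cref{thm:gaplb,thm:regretub}.

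With the clipped decomposition in place, I would expand $\bar E_h^k$ using \Cref{thm:surplusub} and distribute the clip with \Cref{thm:distributeclipping}. Summing over $(k,h)$ and invoking \Cref{thm:integration_bare}, each leading contribution at a suboptimal tuple integrates to $\tilde O(\sqrt{\Var_h^*(s,a)\,n_h^K(s,a)\,\iota})$, while the optimal $(s,a,h)$ contributions saturate at $\tilde O((H^2\wedge\cVarmax)\iota/\gapmin)$ because the clip truncates the integral at $x \asymp (H^2\wedge\cVarmax)^2\iota/(\gapmin^2\Var_h^*)$. This produces a first inequality of the form $\Regret(K)\lesssim R_0 + |\Zopt|(H^2\wedge\cVarmax)\iota/\gapmin + \text{l.o.t.}$, where $R_0 := \sum_{(s,a,h)\in\Zsub}\sqrt{\Var_h^*(s,a)\,n_h^K(s,a)\,\iota}$; the propagated $SH^2\iota/n$ terms from \Cref{thm:vtail} contribute only the stated $S^2AH^4$-type lower-order term.

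To close the loop and replace the unknown visitation counts $n_h^K(s,a)$ by $\Delta_h(s,a)$, I would apply \Cref{thm:gaplbsurplus} with the weight $w_h(s,a)=\Var_h^*(s,a)$, summed over $(k,h)$. The critical structural fact is that on the good event of \Cref{thm:visitconcentrationV}, the cumulative prefix weight $\sum_{h'\le h} w_{h'}(s_{h'}^k,a_{h'}^k)$ along any trajectory is bounded by $\bar W := (H^2\log(K(H+1)/\delta))\wedge\cVarmax$, via the high-probability control of the trajectory-level total variance in \Cref{thm:variancebound}. This gives $\sum_{(s,a,h)}\Var_h^*(s,a)\,\Delta_h(s,a)\,n_h^K(s,a) \lesssim \bar W\,R_0 + \text{l.o.t.}$, and a Cauchy--Schwarz step against $\sum_{(s,a,h)\in\Zsub}\iota/\Delta_h(s,a)$ yields the self-bounding inequality $R_0^2 \lesssim \bar W \cdot R_0 \cdot \sum_{\Zsub}\iota/\Delta_h(s,a) + R_0\cdot\text{l.o.t.}$, which solves to $R_0 \lesssim \sum_{\Zsub}\bar W\,\iota/\Delta_h(s,a) + \text{l.o.t.}$ Substituting back into the regret inequality delivers the claimed bound.

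The main obstacle is the weighted-sum step. A naive bound on $\sum_{h'\le h} w_{h'}(s_{h'}^k,a_{h'}^k)$ would lose a full factor of $H\cdot \bbQ^*$ per trajectory and wipe out both the $\cVarmax$ refinement and the improved $H^2$ dependence relative to \citet{simchowitz2019non,zheng2024gap}. The fix requires the conditional-total-variance concentration of \Cref{thm:variancebound} to show that this prefix weight is controlled by $\bar W$ \emph{along every trajectory} on a high-probability event, and then a Freedman-style martingale argument (captured by \Cref{thm:visitconcentrationV}) to convert a weighted sum of visiting probabilities into a multiple of the actual visitation count $n_h^K(s,a)$, with an additive $H\bar W\iota$ slack. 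Once these two ingredients are in place, the bookkeeping leading from \Cref{eq:regret} and \Cref{eq:regretubconstructed} to the final bound is essentially mechanical.
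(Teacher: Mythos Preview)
Your proposal is correct and follows essentially the same route as the paper: clip surpluses at the variance-proportional threshold $\gapmin\Var_h^*(s,a)/(H^2\wedge\cVarmax)+\gapmin/H$, use \Cref{thm:Vexpect} to show $\bar V_h^k\ge V_h^k-\gapmin/3$, expand via \Cref{thm:surplusub} and \Cref{thm:distributeclipping}, integrate the $\sqrt{\Var^*\iota/n}$ piece over $\Zsub$ to get $R_0$ and saturate over $\Zopt$, then take the $\Var_h^*$-weighted sum of \Cref{thm:gaplbsurplus} and use \Cref{thm:visitconcentrationV}/\Cref{thm:variancebound} together with Cauchy--Schwarz to self-bound $R_0$. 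One small wording caveat: the prefix weight $\sum_{h'\le h}\Var_{h'}^*(s_{h'}^k,a_{h'}^k)$ is not pathwise bounded by $\cVarmax$ (only its conditional expectation is); the $\cVarmax$ half of $\bar W$ enters through a second martingale step inside \Cref{thm:visitconcentrationV}, while the $H^2\log$ half is the pathwise bound from \Cref{thm:variancebound}---but you already invoke both lemmas, so this is a presentational point rather than a gap.
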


\section{Regret Lower Bound}\label{app:lb}

\begin{theorem} [Formal statement of \Cref{thm:lb}] \label{thm:lb_formal}
For a given configuration of $S,A,H$, target conditional variance $L\in [1, H^2]$, as well as a set of suboptimality gaps $\boldsymbol{\Delta} = \{ \Delta_1, \Delta_2, \ldots, \Delta_{S A H}\}$, we make the following mild assumptions:

\begin{itemize}

\item Let $\cI = \{ i \mid \Delta_i = 0\}$.
Assume that $\abs{\cI} \ge S H$, i.e., the suboptimality gaps are realizable.

\item Assume that $\Delta_i < \sqrt{L}$ for all $1 \le i \le S A H$.

\end{itemize}

For any algorithm $\boldsymbol{\pi}$, there exists an MDP instance $\cM^{\boldsymbol{\pi}}$ satisfying:

\begin{itemize}

\item It has $\abs{\bar{\cS}} = S + 2$ states and $A$ actions.

\item There exists $\cS \subset \bar{\cS}$ such that $\abs{\cS} = S$, and a bijection $\sigma$ between $[H] \times \cS \times \cA$ and $[S A H]$, satisfying $\Delta_h (s, a) = \frac{1}{4} \Delta_{\sigma (h, s, a)}$ for any $(h, s, a) \in [H] \times \cS \times \cA$.

\item $\cVarmax = \Theta (L)$, while $\Varmax \le O (1)$.

\end{itemize}

such that
\begin{align*}
\lim_{K \to \infty} \frac{\bbE^{\boldsymbol{\pi}} [\Regret (\cM^{\boldsymbol{\pi}}, K)]}{\log K}
 \ge \Omega \left( \sum_{i : \Delta_i>0} \frac{L}{\Delta_i}\right).
\end{align*}
\end{theorem}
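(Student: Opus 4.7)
I plan to construct, for any algorithm $\boldsymbol{\pi}$, a tailored MDP $\cM^{\boldsymbol{\pi}}$ in which the agent is forced to solve many Bernoulli bandit subproblems, and derive the lower bound from the classical Lai--Robbins asymptotic bound applied to each one. Concretely, let $\bar\cS = \{\mathtt{A}, \mathtt{B}_1, \ldots, \mathtt{B}_{S-1}, \mathtt{C}, \mathtt{D}\}$ with $\cS = \{\mathtt{A}, \mathtt{B}_1, \ldots, \mathtt{B}_{S-1}\}$ and deterministic initial state $\mathtt{A}$. At $\mathtt{A}$ every action shares the same transition: jump to each $\mathtt{B}_j$ at the next layer with probability $\frac{1}{(S-1)LH}$ and otherwise stay at $\mathtt{A}$, always with zero reward. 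At $\mathtt{B}_j$ in layer $h$ under action $a$, transition to $\mathtt{C}$ with probability $\tfrac{1}{2} - D_{h,j,a}$ collecting reward $\sqrt{L}$, and to $\mathtt{D}$ with probability $\tfrac{1}{2} + D_{h,j,a}$ with zero reward; $\mathtt{C}$ and $\mathtt{D}$ self-loop with zero reward. I set $D_{h,j,a} = \Delta_{\sigma(h, \mathtt{B}_j, a)}/(4\sqrt{L})$ via a bijection $\sigma:[H]\times\cS\times\cA\to[SAH]$ that sends zero indices of $\{\Delta_i\}$ to all $(h, \mathtt{A}, a)$ tuples and to one designated optimal arm per $(h, \mathtt{B}_j)$. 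The assumption $\Delta_i < \sqrt{L}$ keeps every $D_{h,j,a} < 1/4$ so all transition probabilities lie in $(0,1)$, and $|\cI| \ge SH$ supplies enough zero indices for the assignment by a pigeonhole matching on the residual zeros.

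\emph{Gap and variance checks.} A one-step Bellman calculation gives $V_h^*(\mathtt{B}_j) = \sqrt{L}/2$ and $Q_h^*(\mathtt{B}_j, a) = \sqrt{L}(\tfrac{1}{2} - D_{h,j,a})$, hence $\Delta_h(\mathtt{B}_j, a) = \sqrt{L}\,D_{h,j,a} = \Delta_{\sigma(h,\mathtt{B}_j,a)}/4$ as required, while all actions at $\mathtt{A}$ induce identical $Q^*$-values and so $\Delta_h(\mathtt{A}, a) \equiv 0$. For variances, $\Var_h^*(\mathtt{B}_j, a) = L(\tfrac{1}{4} - D_{h,j,a}^2) = \Theta(L)$, giving $\cVarmax \ge L/4$; at $\mathtt{A}$ the per-step variance of $V_{h+1}^*$ is $\Theta(1/H)$ (jump probability $1/(LH)$ times squared value gap $\Theta(L)$), summing to $O(1)$ along the $\mathtt{A}$-chain, and this chain visits some $\mathtt{B}_j$ at most once per episode with total probability $O(1/L)$. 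Combining gives $\Varmax \le O(1) + O(1/L) \cdot \Theta(L) = O(1)$ and $\cVarmax = \Theta(L)$.

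\emph{Reduction to bandits and the main obstacle.} For each pair $(h, j)$, conditional on $\{s_h = \mathtt{B}_j\}$ the agent effectively plays a Bernoulli bandit with arm gaps $D_{h,j,\cdot}$, and the expected number of visits over $K$ episodes is $N_{h,j} = \Theta(K/((S-1)LH))$. The delicate point is that $\boldsymbol{\pi}$ may couple its behavior across the $(S-1)H$ distinct bandits through the shared $\mathtt{A}$-chain, so the Lai--Robbins bound cannot be applied pointwise without justification. I handle this by a per-arm change of measure: for each $a$ with $\Delta_i := \Delta_{\sigma(h,\mathtt{B}_j,a)} > 0$, form $\cM'$ by flipping the sign of $D_{h,j,a}$ alone, which leaves every other transition and gap unchanged but makes $a$ optimal at $(h,\mathtt{B}_j)$. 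Because $\cM$ and $\cM'$ differ only in a single Bernoulli cell, the chain-rule decomposition of the trajectory KL divergence collapses to $\bbE^{\boldsymbol{\pi}}_{\cM}[T_{h,j,a}] \cdot \kl\!\bigl(\tfrac{1}{2}-D_{h,j,a},\,\tfrac{1}{2}+D_{h,j,a}\bigr)$, and consistency of $\boldsymbol{\pi}$ (sub-polynomial regret on both instances) turns this into $\liminf_{K\to\infty} \bbE[T_{h,j,a}]/\log K \ge \Theta(L/\Delta_i^2)$. Multiplying by the per-pull MDP regret $\Delta_i/4$ yields $\Omega(L \log K / \Delta_i)$ from each suboptimal arm; summing over all $(h, j, a)$ with $\Delta_i > 0$ (the $\mathtt{A}$ actions contribute zero by construction) and dividing by $\log K$ gives the claimed asymptotic, with the only bookkeeping being that $\log N_{h,j} = \log K - O(\log(SLH)) = (1-o(1))\log K$ as $K\to\infty$.
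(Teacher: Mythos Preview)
Your construction and high-level strategy match the paper's: embed independent Bernoulli bandits at states reached with probability $\Theta(1/(LH))$, verify $\cVarmax=\Theta(L)$ while $\Varmax=O(1)$, and reduce to the Lai--Robbins asymptotic bound. There is, however, a concrete counting error in your state assignment. By placing $\mathtt{A}$ inside $\cS$ you force \emph{every} tuple $(h,\mathtt{A},a)$ to carry gap zero (all actions at $\mathtt{A}$ share the same transition), so the bijection $\sigma$ must send all $HA$ of them into $\cI$; on top of that, each of the $(S-1)H$ pairs $(h,\mathtt{B}_j)$ still needs at least one zero index for its optimal arm. Altogether this demands $|\cI|\ge HA+(S-1)H=H(A+S-1)$, strictly more than the assumed $|\cI|\ge SH$ as soon as $A\ge2$, so your ``pigeonhole matching on the residual zeros'' cannot succeed in that regime. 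The paper sidesteps this by putting the Bernoulli randomness in the \emph{reward} at the bandit state and using a single absorbing sink; that frees one state slot, so one can take $S$ bandit states, set $\cS=\{s_1,\dots,s_S\}$, exclude the hub state from $\cS$ entirely, and need exactly $SH$ zeros. Merging your $\mathtt{C}$ and $\mathtt{D}$ into one terminal state (they are indistinguishable once the reward is attributed to $\mathtt{B}_j$) accomplishes the same fix.

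For the lower bound itself, your per-arm change of measure on full MDP trajectories is a legitimate alternative to the paper's two-step route (first show $N_{h,j}\ge K/(2\ee LSH)$ with probability at least $1/2$ via Bennett, then invoke the bandit bound on that many rounds). Both approaches rest on the same consistency hypothesis and the same KL computation $\kl(\tfrac12-D,\tfrac12+D)=\Theta(D^2)=\Theta(\Delta_i^2/L)$; yours is arguably cleaner because it avoids the visit-count concentration step. One small misstatement to correct: flipping the sign of $D_{h,j,a}$ does \emph{not} leave the other gaps at $(h,\mathtt{B}_j)$ unchanged---the new optimal value rises to $\sqrt L(\tfrac12+D_{h,j,a})$, so every other arm's gap at that node shifts by $\sqrt L\,D_{h,j,a}$---but this is harmless for the KL decomposition and the consistency argument.
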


\begin{proof}
First consider multi-armed bandit lower bound given a set of gaps $\boldsymbol{\Delta} = \{\Delta_1, \Delta_2, \ldots, \Delta_A\}$ and a target variance $L$.
WLOG, assume $\Delta_i \le \Delta_{i + 1}$.
Construct Bernoulli outcomes for each action $a_i$: w.p. $p_i = \frac{1}{2} - \frac{\Delta (a_i)}{4 \sqrt{L}} \in [\frac{1}{4}, \frac{1}{2}]$, get reward $\sqrt{L}$; w.p. $1 - p_i$, get reward $0$.
Then $Q (a_i) = p_i = \frac{1}{2} - \frac{\Delta (a_i)}{4 \sqrt{L}}$, and $Q (a_1) - Q (a_i) = \frac{\Delta (a_i)}{4 \sqrt{L}}$.
Then $\Var (a_i) = p_i (1 - p_i) L = \Theta (L)$.
We invoke standard lower bound \citep{lai1985asymptotically} with reward outcomes in $[0, 1]$.
We first scale the rewards in our example by $\frac{1}{\sqrt{L}}$.
For any algorithm $\boldsymbol{\pi}$, there exists a permutation on the gaps (into $\frac{1}{\sqrt{L}} \boldsymbol{\Delta}^{\boldsymbol{\pi}}$), such that
\begin{align*}
    \lim_{K \to \infty} \frac{\bbE[\Regret(\frac{1}{\sqrt{L}} \boldsymbol{\Delta}^{\boldsymbol{\pi}}, K)]}{\log K}
    \ge \sum_i \frac{Q (a_1) - Q (a_i)}{\kl (p_i, \frac{1}{2})}
    \mygei \sum_{i : \Delta_i > 0} \frac{\Var (a_i)}{Q (a_1) - Q (a_i)}
    \ge \Omega \left( \sum_{i : \Delta_i > 0} \frac{1}{\Delta_i / \sqrt{L}} \right),
\end{align*}
where $\kl (p, q) = p \log \frac{p}{q} + (1 - p) \log \frac{1 - p}{1 - q}$;
(i) is by $\frac{(\frac{1}{2} - x)^2}{x (1 - x)} \ge x \log (2 x) + (1 - x) \log (2 - 2 x)$ for $x \in [0, 1]$ (we take $x = p_i$). To see this, we substitute $t=1-2x\in[-1,1]$, then \[\frac{(\frac12-x)^2}{x(1-x)}=\frac{t^2}{(1-t)(1+t)}\geq t^2\] and \[x\log(2x)+(1-x)\log(2-2x)=\frac{1-t}2\log(1-t)+\frac{1+t}{2}\log(1+t)\leq \frac{-t(1-t)}{2}+\frac{t(1+t)}{2}=t^2.\]
Scaling back, we have
\begin{align*}
    \lim_{K \to \infty} \frac{\bbE[\Regret(\boldsymbol{\Delta}^{\boldsymbol{\pi}}, K)]}{\log K}
    = \lim_{K \to \infty} \frac{\sqrt{L} \bbE[\Regret(\frac{1}{\sqrt{L}}\boldsymbol{\Delta}^{\boldsymbol{\pi}}, K)]}{\log K}
    \ge \Omega \left( \sum_{i: \Delta_i > 0} \frac{L}{\Delta (a_i)} \right).
\end{align*}

Then, we construct the MDP as:
\begin{itemize}
    \item \textbf{States:} in total $S + 2$ states.
    $s_0$ as a main state, $s_1, s_2, \ldots, s_S$ as bandit states, $s_{-1}$ as a terminal state.

    \item \textbf{Transition:} $s_0$ does not require decision-making: $P_{s_0, a, h} (s_0) = 1 - \frac{1}{L H}$, $P_{s_0, a, h} (s_i) = \frac{1}{L S H}$ for $1 \le i \le S$.
    $s_i$ is a bandit problem, and directly transits into $s_{-1}$: $P_{s_i, a, h} (s_{-1}) = 1$ for $1 \le i \le S$.
    $s_{-1}$ is self-absorbing: $P_{s_{-1}, a, h} (s_{-1}) = 1$.

    \item \textbf{Rewards:} for $s_0$ and $s_{-1}$, all rewards are $0$.
    Rewards for $(s_i, a, h)$ are decided by the construction below.
\end{itemize}
Assign $\boldsymbol{\Delta}$ into $H \times S$ groups, each with exactly $A$ items: $\{\boldsymbol{\Delta}_{h, s_i}\}_{(h, i) \in [H] \times [S]}$ and from the assumption we can guarantee at least one $0$ gap in each group.
We have $d_h (s_i) = \frac{1}{L S H} (1 - \frac{1}{L H})^{h - 1} \in [\frac{1}{\ee L S H}, \frac{1}{L S H}]$ for $1 \le i \le S$.
For each $(h, i) \times [H] \times [S]$, from \Cref{thm:bennett}, with probability at least $1 - \frac{1}{2 H S}$,
\begin{align*}
    \abs{d_h (s_i) - \frac{N_h^K (s_i)}{K}} &\le \sqrt{\frac{2 d_h (s_i) (1 - d_h (s_i)) \log (4 S H)}{K}} + \frac{\log (4 S H)}{K} \\
    \Rightarrow K d_h (s_i) - N_h^K (s_i) &\le \sqrt{\frac{2 K}{L S H} \log (4 S H)} + \log (4 S H).
\end{align*}
When $K \ge 2 \ee^2 (1 + \sqrt{1 + \ee})^2 L S H \log (4 S H)$, we have RHS $\le \frac{K}{2 \ee L S H}$, so we have $N_h^k (s_i) \ge K d_h (s_i) - \frac{K}{2 \ee L S H} \ge \frac{K}{2 \ee L S H}$.
Denote the event $\cE = \{N_h^K (s_i) \ge \frac{K}{2 \ee L S H} \mid (h, i) \in [H] \times [S]\}$, then $\bbP [\cE] \ge \frac{1}{2}$.

Since we set independent random instances for each $(h, s_i)$, we have that
\begin{align*}
    \lim_{K \to \infty} \frac{\bbE [\Regret (\boldsymbol{\Delta}_{h, s_i}^{\boldsymbol{\pi}}, \boldsymbol{\pi}, K)]}{\log K}
    & \ge \lim_{K \to \infty} \frac{\bbP [\cE] \bbE [\Regret (\boldsymbol{\Delta}_{h, s_i}^{\boldsymbol{\pi}}, \frac{K}{2 \ee L S H})] + (1 - \bbP [\cE]) \cdot 0}{\log K} \\
    & \mygei \lim_{K \to \infty} \frac{\bbE [\Regret (\boldsymbol{\Delta}_{h, s_i}^{\boldsymbol{\pi}}, \frac{K}{2 \ee L S H})]}{4 \log (\frac{K}{2 \ee L S H})} \\
    & \ge  \Omega \left( \sum_{a: \Delta_{h, s_i} (a) > 0} \frac{L}{\Delta_{h, s_i} (a)} \right),
\end{align*}
where (i) is by $\bbP[\cE] \ge \frac{1}{2}$ and taking $K \ge (2 \ee L S H)^2$.
So
\begin{align*}
    \lim_{K \to \infty} \frac{\bbE [\Regret (\cM^{\boldsymbol{\pi}}, \boldsymbol{\pi}, K)]}{\log K}
    &= \lim_{K \to \infty} \sum_{h, i} \frac{\bbE [\Regret (\boldsymbol{\Delta}_{h, s_i}^{\boldsymbol{\pi}}, \boldsymbol{\pi}, K)]}{\log K} \\
    &= \sum_{h, i} \lim_{K \to \infty} \frac{\bbE [\Regret (\boldsymbol{\Delta}_{h, s_i}^{\boldsymbol{\pi}}, \boldsymbol{\pi}, K)]}{\log K} \\
    &\ge \Omega \left( \sum_{(h, i, a): \Delta_{h, s_i} (a) > 0} \frac{L}{\Delta_{h, s_i} (a)} \right) \\
    &= \Omega \left( \sum_{i: \Delta_i > 0} \frac{L}{\Delta_i} \right).
\end{align*}

We have $\Var_h^* (s_0) = \Theta ((1 - \frac{1}{L H}) \frac{1}{L H} \cdot L) = \Theta (\frac{1}{H})$, $\Var_h^* (s_{-1}) = 0$, and $\Var_h^* (s_i) = \Theta (L)$.
It is easy to verify that $\cVarmax$ is taken at states $(h, s_i)$, so
\begin{align*}
    \cVarmax = \max_{h, i} \left\{ \Var_h^* (s_i) + \sum_{t = 1}^{h - 1} \Var_t^* (s_0) \right\} = \Theta (L).
\end{align*}
However,
\begin{align*}
    \Varmax \le \sum_{h = 1}^H \left(d_h (s_0) \Var_h^* (s_0) + \sum_{i = 1}^S d_h (s_i) \Var_h^* (s_i) \right) \le O (1),
\end{align*}
showcasing the separation between $\cVarmax$ and $\Varmax$.
\end{proof}

\end{document}